\definecolor{mydarkblue}{rgb}{0,0.08,0.45}
\theoremstyle{plain}
\newtheorem{theorem}{Theorem}[section]
\newtheorem{proposition}[theorem]{Proposition}
\newtheorem{lemma}[theorem]{Lemma}
\theoremstyle{definition}
\newtheorem{definition}[theorem]{Definition}
\theoremstyle{remark}
\DeclareMathOperator*{\argmin}{arg\,min}
\icmltitlerunning{Sliced-Wasserstein on Symmetric Positive Definite Matrices for M/EEG Signals}
\begin{document}

\twocolumn[
\icmltitle{Sliced-Wasserstein on Symmetric Positive Definite Matrices for M/EEG Signals}

% It is OKAY to include author information, even for blind
% submissions: the style file will automatically remove it for you
% unless you've provided the [accepted] option to the icml2022
% package.

% List of affiliations: The first argument should be a (short)
% identifier you will use later to specify author affiliations
% Academic affiliations should list Department, University, City, Region, Country
% Industry affiliations should list Company, City, Region, Country

% You can specify symbols, otherwise they are numbered in order.
% Ideally, you should not use this facility. Affiliations will be numbered
% in order of appearance and this is the preferred way.
\icmlsetsymbol{equal}{*}

\begin{icmlauthorlist}

\icmlauthor{Clément Bonet}{equal,lmba}
\icmlauthor{Benoît Malézieux}{equal,inria}
\icmlauthor{Alain Rakotomamonjy}{criteo,rouen}
\icmlauthor{Lucas Drumetz}{imt}
\icmlauthor{Thomas Moreau}{inria}
\icmlauthor{Matthieu Kowalski}{lisn}
\icmlauthor{Nicolas Courty}{irisa}

\end{icmlauthorlist}

% \icmlaffiliation{yyy}{Department of XXX, University of YYY, Location, Country}
% \icmlaffiliation{comp}{Company Name, Location, Country}
% \icmlaffiliation{sch}{School of ZZZ, Institute of WWW, Location, Coun

\icmlaffiliation{lisn}{Université Paris-Saclay, CNRS, LISN}
\icmlaffiliation{inria}{Université Paris-Saclay, Inria, CEA}
\icmlaffiliation{lmba}{Université Bretagne Sud, LMBA}
\icmlaffiliation{irisa}{Université Bretagne Sud, IRISA}
\icmlaffiliation{imt}{IMT Atlantique, Lab-STICC}
\icmlaffiliation{criteo}{Criteo AI Lab}
\icmlaffiliation{rouen}{Université de Rouen, LITIS}

\icmlcorrespondingauthor{Clément Bonet}{clement.bonet@univ-ubs.fr}
\icmlcorrespondingauthor{Benoît Malézieux}{benoit.malezieux@inria.fr}

% You may provide any keywords that you
% find helpful for describing your paper; these are used to populate
% the "keywords" metadata in the PDF but will not be shown in the document
\icmlkeywords{Machine Learning, ICML}

\vskip 0.3in
]

% this must go after the closing bracket ] following \twocolumn[ ...

% This command actually creates the footnote in the first column
% listing the affiliations and the copyright notice.
% The command takes one argument, which is text to display at the start of the footnote.
% The \icmlEqualContribution command is standard text for equal contribution.
% Remove it (just {}) if you do not need this facility.

%\printAffiliationsAndNotice{}  % leave blank if no need to mention equal contribution
\printAffiliationsAndNotice{\icmlEqualContribution} % otherwise use the standard text.

\begin{abstract}
When dealing with electro or magnetoencephalography records, many supervised prediction tasks are solved by working with covariance matrices to summarize the signals.
Learning with these matrices requires using Riemanian geometry to account for their structure.
In this paper, we propose a new method to deal with distributions of covariance matrices and demonstrate its computational efficiency on M/EEG multivariate time series.
More specifically, we define a Sliced-Wasserstein distance between measures of symmetric positive definite matrices that comes with strong theoretical guarantees.
% For the numerical computation, we propose a simple way for uniform sampling
% of the unit-norm SDP matrix set and the projection along geodesics.
Then, we take advantage of its properties and kernel methods to apply this distance to brain-age prediction from MEG data and compare it to state-of-the-art algorithms based on Riemannian geometry.
Finally, we show that it is an efficient surrogate to the Wasserstein distance in domain adaptation for Brain Computer Interface applications.
\end{abstract}

\section{Introduction}

Magnetoencephalography and electroencephalography (M/EEG) are non-invasive techniques for recording the electrical activity of the brain \citep{hamalainen1993magnetoencephalography}.
The data consist of multivariate time series output by sensors placed around the head, which capture the intensity of the magnetic or electric field with high temporal resolution.
Those measurements provide %can provide 
information on cognitive processes as well as the biological state of a subject.

Successful machine learning (ML) techniques that deal with M/EEG data often rely on covariance matrices estimated from band-passed filtered signals in several frequency bands \citep{blankertz2007optimizing}.
The main difficulty that arises when processing such covariance matrices is that the set of symmetric positive definite (SPD) matrices is not a linear space, but a Riemannian manifold \citep{bhatia2009positive, bridson2013metric}.
Therefore, specific algorithms have to be designed to take into account the non Euclidean structure of the data.
The usage of Riemannian geometry on SPD matrices has become increasingly popular in the ML community \citep{huang2017riemannian, chevallier2017kernel, ilea2018covariance, brooks2019riemannian}. 
In particular, these tools have proven to be very effective on prediction tasks with M/EEG data in Brain Computer Interface (BCI) applications \citep{barachant2011multiclass, barachant2013classification, gaur2018multi} or more recently in brain-age prediction \citep{sabbagh2019manifold, sabbagh2020predictive, engemann2022reusable}. 
% \cb{While these works model M/EEG data with one covariance estimate, we propose to model them as distribution of covariances matrices. Hence, it is of great interest to develop efficient tools that work directly on those distributions.}
% As it is possible to model sets of covariance matrices from M/EEG data as probability distributions, it is of great interest to develop efficient tools that %allowing to 
% work directly on those distributions.
% with small computational complexity. 
As covariance matrices sets from M/EEG data are often modeled as samples from a probability distribution -- for instance in domain adaptation for BCI \citep{yair2019domain} -- it is of great interest to develop efficient tools that work directly on those distributions.
% with small computational complexity. 

\looseness-1
Optimal transport (OT) \citep{villani2009optimal,peyre2019computational} provides a powerful theoretical framework and computational toolbox to compare probability distributions while respecting the geometry of the underlying space.
% and has been successfully applied to many tasks such as generative modeling \citep{arjovsky2017wasserstein} or domain adaptation \citep{courty2016optimal}.
It is well defined on Riemannian manifolds \citep{mccann2001polar, cui2019spherical,alvarez2020unsupervised} and in particular on the space of SPD matrices that is considered in M/EEG learning tasks \citep{brigant2018optimal, yair2019domain, ju2022deep}.
The original OT problem defines the Wasserstein distance which has a super cubic complexity \emph{w.r.t} samples.
To alleviate the computational burden, different alternatives were proposed such as adding an entropic regularization \citep{cuturi2013sinkhorn} or computing the distance between mini-batches \citep{fatras2019learning}.
Another popular alternative is the Sliced-Wasserstein distance (SW) \citep{rabin2011wasserstein} which computes the average of the Wasserstein distance between one-dimensional projections.
SW has recently received a lot of attention as it significantly reduces the computational burden while preserving topological properties of Wasserstein \citep{bonnotte2013unidimensional,nadjahi2020statistical, bayraktar2021strong}.
% It has for example been used for generative modeling \citep{deshpande2018generative, kolouri2018sliced, liutkus2019sliced, coeurdoux2022learning, coeurdoux2022sliced, bonet2022efficient}, variational inference \citep{yi2022sliced}, distribution regression \citep{meunier2022distribution} or domain adaptation \citep{xu2022unsupervisedDA}.
% However, the initial construction is restricted to Euclidean spaces.
% Hence, several recent works proposed to extend SW to Riemannian manifolds either through new variants \citep{rustamov2020intrinsic}, or by extending the construction of SW intrinsically on specific manifolds such as the sphere \citep{bonet2022spherical} or hyperbolic spaces \citep{bonet2022hyperbolic}.
% Moreover, \citet{meunier2022distribution} has recently
Moreover, \citet{kolouri2016sliced, meunier2022distribution} have shown that, as opposed to Wasserstein, SW allows to properly extend kernel methods to data-sets of distributions with very efficient computation of the kernel matrix. %which 
This opens the way to new regression and classification methods.
However, the initial construction of SW is restricted to Euclidean spaces.
Thus, a new line of work focuses on its extension to specific manifolds \citep{rustamov2020intrinsic, bonet2022hyperbolic, bonet2022spherical}.

\paragraph{Contributions.}
In order to benefit from the advantages of SW in the context of M/EEG, we propose an SW distance on the manifold of SPD matrices and evaluate its efficiency on two prediction tasks.
\begin{itemize}\itemsep-.2em
    \item We introduce an SW discrepancy between measures of symmetric positive definite matrices ($\mathrm{SPDSW}$),
    % and provide a simple procedure to compute it.
    and provide a well-founded numerical approximation.
    % and provide a numerical approximation supported by theoretical guarantees.
    \item We derive theoretical results, including topological, statistical, and computational properties.
    In particular, we prove that $\mathrm{SPDSW}$ is a distance topologically equivalent to the Wasserstein distance in this context.
    % \item We extend the distribution regression with SW kernels to the case of SPD matrices, apply it to brain-age regression with MEG data, and show that it performs better than state-of-the-art methods leveraging Riemannian geometry on this problem.
    \item \looseness=-1 We extend the distribution regression with SW kernels to the case of SPD matrices, apply it to brain-age regression with MEG data, and show that it performs better than other methods based on Riemannian geometry.
    \item We show that $\mathrm{SPDSW}$ is an efficient surrogate to the Wasserstein distance in domain adaptation for BCI.
\end{itemize}

\section{Sliced-Wasserstein on SPD matrices}

In this section, we introduce an SW discrepancy on SPD matrices and provide a theoretical analysis of its properties and behavior.
The proofs are deferred to \cref{sec:proofs}.

\subsection{Euclidean Sliced-Wasserstein distance}

For $\mu,\nu\in\mathcal{P}_p(\mathbb{R}^d)$ two measures with finite moments of order $p\ge 1$, the Wasserstein distance is defined as
\begin{equation} \label{eq:wasserstein}
    W_p^p(\mu,\nu) = \inf_{\gamma\in\Pi(\mu,\nu)}\ \int \|x-y\|_2^p\ \mathrm{d}\gamma(x,y)\enspace ,
\end{equation}
where $\Pi(\mu,\nu) = \{\gamma\in\mathcal{P}(\mathbb{R}^d\times \mathbb{R}^d), \ \pi^1_\#\gamma=\mu,\ \pi^2_\#\gamma=\nu\}$ denotes the set of couplings between $\mu$ and $\nu$, $\pi^1:(x,y)\mapsto x$ and $\pi^2:(x,y)\mapsto y$ the projections on the first and second coordinate and $\#$ is the push-forward operator, defined as a mapping on all borelian $A\subset \mathbb{R}^d$, such that $T_\#\mu(A) = \mu(T^{-1}(A))$. For practical ML applications, this distance is computed between two empirical distributions with $n$ samples and the main bottleneck consists in solving the linear program (\ref{eq:wasserstein}).
Its computational complexity is $O(n^3\log n)$ \citep{pele2009fast} which is expensive for large scale applications.

While computing (\ref{eq:wasserstein}) is costly in general, it can be computed efficiently for problems where $d=1$, as it admits the following closed-form \citep[Remark 2.30]{peyre2019computational}
\begin{equation}
    W_p^p(\mu,\nu) = \int_0^1 |F_\mu^{-1}(u)-F_\nu^{-1}(u)|^p\ \mathrm{d}u\enspace ,
\end{equation}
where $F_\mu^{-1}$ and $F_\nu^{-1}$ are the quantile functions of $\mu$ and $\nu$.
By computing order statistics, this can be approximated from samples in $O(n\log n)$.

This observation motivated the construction of the SW distance \citep{rabin2011wasserstein,bonneel2015sliced} which is defined as the average of the Wasserstein distance between one dimensional projections of the measures in all directions, \emph{i.e.} for $\mu,\nu\in\mathcal{P}_p(\mathbb{R}^d)$,
\begin{equation}
    \mathrm{SW}_p^p(\mu,\nu) = \int_{S^{d-1}} W_p^p(t^\theta_\#\mu,t^\theta_\#\nu)\ \mathrm{d}\lambda(\theta)\enspace ,
\end{equation}
where $\lambda$ is the uniform distribution on the sphere $S^{d-1}=\{\theta\in \mathbb{R}^d,\ \|\theta\|_2=1\}$ and $t^\theta$ is the coordinate of the projection on the line $\mathrm{span}(\theta)$, \emph{i.e.} $t^\theta(x) = \langle x,\theta\rangle$ for $x\in \mathbb{R}^d$, $\theta\in S^{d-1}$.
This distance has many advantages, motivating its use in place of the Wasserstein distance.
First, it can be approximated in $O\big(Ln(d+\log n)\big)$ with $L$ projections and a Monte-Carlo method.
Moreover, it is topologically equivalent to the Wasserstein distance as it also metrizes the weak convergence \citep{nadjahi2019asymptotic}, and its sample complexity is independent of the dimension \citep{nadjahi2020statistical} as opposed to Wasserstein.
Finally, it is a Hilbertian metric and it can be used to define kernels over probability distributions \citep{kolouri2016sliced, carriere2017sliced, meunier2022distribution}.
This is particularly interesting for regression or classification over data-sets of distributions, as we will see in \cref{sec:brain_age} for brain-age prediction.

\subsection{Background on SPD matrices} \label{sec:bg_spd}

Let $S_d(\mathbb{R})$ be the set of symmetric matrices of $\mathbb{R}^{d \times d}$, and $S_d^{++}(\mathbb{R})$ be the set of SPD matrices of $\mathbb{R}^{d \times d}$, \emph{i.e.} matrices $M\in S_d(\mathbb{R})$ satisfying
\begin{equation}
    \forall x\in\mathbb{R}^d\setminus\{0\},\ x^T M x > 0 \enspace .
\end{equation}
$S_d^{++}(\mathbb{R})$ is a Riemannian manifold \citep{bhatia2009positive}, meaning that it behaves locally as a linear space, called a tangent space.
Each point $M \in S_d^{++}(\mathbb{R})$ defines a tangent space $\mathcal{T}_M$, which can be given an inner product $\langle \cdot, \cdot \rangle_M : \mathcal{T}_M \times \mathcal{T}_M \rightarrow \mathbb{R}$, and thus a norm.
The choice of this inner-product induces different geometry on the manifold.
% Several metrics are available.
One example is the geometric and Affine-Invariant metric \citep{pennec2006riemannian}, where the inner product is defined as
\begin{equation}
    \begin{aligned}
        &\forall M \in S_d^{++}(\mathbb{R}),\ A,B\in \mathcal{T}_M, \\
        &\langle A,B\rangle_M = \mathrm{Tr}(M^{-1}AM^{-1}B) \enspace .
    \end{aligned}
\end{equation}
Denoting by $\mathrm{Tr}$ the Trace operator, the corresponding geodesic distance $d_{AI}(\cdot,\cdot)$ is given by
\begin{equation}
    \forall X, Y \in S_d^{++}(\mathbb{R}), d_{AI}(X,Y) = \sqrt{\mathrm{Tr}(\log(X^{-1}Y)^2)}\enspace .
\end{equation}
Another example is the Log-Euclidean metric \citep{arsigny2005fast,arsigny2006log} for which, %with $\log$ being the matrix logarithm,
\begin{equation}
    \begin{aligned}
        &\forall M \in S_d^{++}(\mathbb{R}),\ A,B\in \mathcal{T}_M, \\
        &\langle A,B\rangle_M = \langle D_M\log A, D_M \log B\rangle \enspace ,
    \end{aligned}
\end{equation}
with $\log$ the matrix logarithm and $D_M\log A$ the directional derivative of the $\log$ at $M$ along $A$ \citep{huang2015log}.
This definition provides another geodesic distance \citep{arsigny2006log}
\begin{equation}
    \forall X, Y \in S_d^{++}(\mathbb{R}), \ d_{LE}(X,Y) = \|\log X - \log Y\|_F \enspace ,
\end{equation}
which is simply an Euclidean distance in $S_d(\mathbb{R})$ in this case.
We will use the Log-Euclidean metric in the following, as it is simpler and faster to compute while being a good first order approximation of the Affine-Invariant metric \citep{arsigny2005fast, pennec2020manifold}.
In this case, the geodesic between $X, Y \in S_d^{++}(\mathbb{R})$ is $t \in \mathbb{R} \mapsto \exp((1-t) \log X + t\log Y)$.
$\log$ is a diffeomorphism from $S_d^{++}(\mathbb{R})$ to $S_d(\mathbb{R})$, whose inverse is $\exp$.
Thus, the geodesic line going through $A \in S_d(\mathbb{R})$ and the origin of $S_d(\mathbb{R})$ is $\mathcal{G}_A = \{\exp(tA),\ t\in\mathbb{R}\}$.
To span all such geodesics, we can restrict to $A$ with unit Frobenius norm, i.e. $\| A \|_F = 1$.

\subsection{Construction of $\mathrm{SPDSW}$}

On a Euclidean space, the SW distance is defined by averaging the Wasserstein distance between the distributions projected over all possible straight lines passing through the origin. As $S_d^{++}(\mathbb{R})$ with Log-Euclidean metric is a geodesically complete Riemannian manifold, \emph{i.e.} there exists a geodesic curve between each couple of points and each geodesic curve can be extended to $\mathbb{R}$, a natural generalization of SW on this space can be obtained by averaging the Wasserstein distance between distributions projected over all geodesics passing through the origin $I_d$.
\begin{figure}[t]
    \centering
    \includegraphics[width=1\columnwidth]{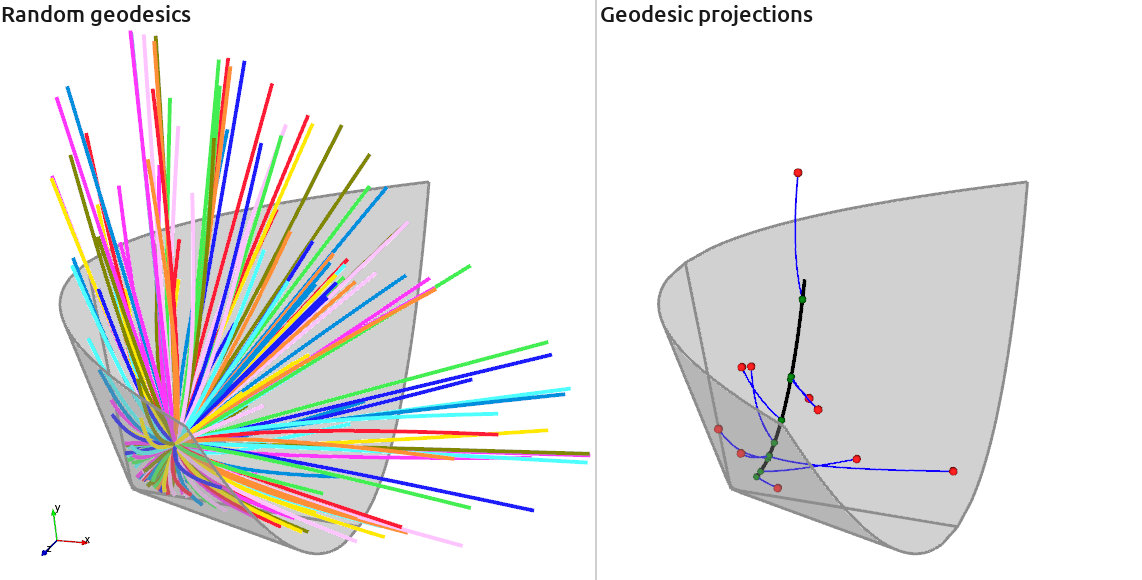}
    \caption{({\bf Left}) Random geodesics drawn in  $S_2^{++}(\mathbb{R})$. ({\bf Right}) Projections (green points) of covariance matrices (depicted as red points) over one geodesic (in black) passing through $I_2$ along the Log-Euclidean geodesics (blue lines).}
    \label{fig:proj}
    \vspace{-10pt}
\end{figure}

To construct $\mathrm{SPDSW}$, we need several ingredients. First, it is required to find the projection onto a geodesic $\mathcal{G}_A$ passing through $I_d$ where $A\in S_d(\mathbb{R})$. Such projection $P^{\mathcal{G}_A}$ can be obtained as follows
\begin{equation}
    \forall M\in S_d^{++}(\mathbb{R}),\ P^{\mathcal{G}_A}(M) = \argmin_{X\in\mathcal{G}_A}\ d_{LE}(X,M) \enspace ,
\end{equation}
and we provide the closed-form in \cref{prop:geodesic_projection}.

\begin{restatable}{proposition}{projGeodesic}
    \label{prop:geodesic_projection}
     Let $A\in S_d(\mathbb{R})$ with $\|A\|_F = 1$, and let $\mathcal{G}_A$ be the associated geodesic line.
     Then, for any $M\in S_d^{++}(\mathbb{R})$, the geodesic projection on $\mathcal{G}_A$ is
    \begin{equation}
        P^{\mathcal{G}_A}(M) = \exp\big(\mathrm{Tr}(A\log M) A\big)\enspace .% \exp\left( \frac{\mathrm{Tr}(A\log M)}{\mathrm{Tr}(A^2)} A\right) \enspace .
    \end{equation}
\end{restatable}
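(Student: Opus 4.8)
The plan is to exploit the fact that, under the Log-Euclidean metric, $\log$ is an isometry from $S_d^{++}(\mathbb{R})$ onto the Euclidean space $\big(S_d(\mathbb{R}),\|\cdot\|_F\big)$, so that the geodesic projection onto $\mathcal{G}_A$ reduces to an ordinary orthogonal projection onto a line. First I would parametrize the geodesic: by definition $\mathcal{G}_A = \{\exp(tA) : t\in\mathbb{R}\}$, and since $\log$ is a diffeomorphism from $S_d^{++}(\mathbb{R})$ to $S_d(\mathbb{R})$ with $\log(\exp(tA)) = tA$, every $X\in\mathcal{G}_A$ satisfies $\log X = tA$ for a unique $t\in\mathbb{R}$. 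Combined with the closed form $d_{LE}(X,M) = \|\log X - \log M\|_F$, this turns the constrained minimization $\argmin_{X\in\mathcal{G}_A} d_{LE}(X,M)$ into the unconstrained scalar problem $\min_{t\in\mathbb{R}}\ \|tA - \log M\|_F^2$.

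Next I would solve this one-dimensional least-squares problem. Expanding the squared Frobenius norm gives $t^2\|A\|_F^2 - 2t\,\langle A,\log M\rangle_F + \|\log M\|_F^2$, where $\langle\cdot,\cdot\rangle_F$ denotes the Frobenius inner product. Using the normalization $\|A\|_F = 1$, this is a strictly convex quadratic in $t$ whose unique minimizer is $t^\star = \langle A,\log M\rangle_F$. Since $A$ is symmetric, $\langle A,\log M\rangle_F = \mathrm{Tr}(A^T\log M) = \mathrm{Tr}(A\log M)$, so $t^\star = \mathrm{Tr}(A\log M)$.

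Finally, substituting $t^\star$ back into the parametrization yields $P^{\mathcal{G}_A}(M) = \exp(t^\star A) = \exp\big(\mathrm{Tr}(A\log M)\,A\big)$, which is exactly the claimed closed form.

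As for difficulty, there is essentially no deep obstacle here: the only points requiring a little care are (i) invoking that $\log$ is a diffeomorphism $S_d^{++}(\mathbb{R})\to S_d(\mathbb{R})$, as recalled in \cref{sec:bg_spd}, so that the parametrization by $t$ is a genuine bijection and no candidate minimizer is lost; and (ii) checking that $\log M \in S_d(\mathbb{R})$ is symmetric so that the reduction $\mathrm{Tr}(A^T\log M) = \mathrm{Tr}(A\log M)$ is valid. Beyond these, the argument is simply an orthogonal projection onto $\mathrm{span}(A)$ carried out after applying the isometry $\log$.
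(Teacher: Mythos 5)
Your proof is correct and takes essentially the same route as the paper's: both parametrize the geodesic as $t \mapsto \exp(tA)$, use $d_{LE}(X,M) = \|\log X - \log M\|_F$ to reduce the problem to minimizing the strictly convex quadratic $t \mapsto \|tA - \log M\|_F^2$, and obtain $t^\star = \mathrm{Tr}(A\log M)$ via the normalization $\|A\|_F^2 = \mathrm{Tr}(A^2) = 1$. The only difference is presentational: you frame the reduction as an orthogonal projection onto $\mathrm{span}(A)$ after the isometry $\log$, whereas the paper simply differentiates the quadratic $g(t)$ and sets $g'(t)=0$.
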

\vskip-.7em
Then, the coordinate of the projection on $\mathcal{G}_A$ can be obtained by giving an orientation to $\mathcal{G}_A$ and computing the distance between $P^{\mathcal{G}_A}(M)$ and the origin $I_d$, as follows
\begin{equation}
    \label{eqn:coordinate}
    t^A(M) = \mathrm{sign}(\langle \log M, A\rangle_F) d_{LE}(P^\mathcal{G}(M), I_d) \enspace .
\end{equation}
The closed-form expression is given by \cref{prop:geodesic_coordinate}.

\begin{restatable}{proposition}{coordinateGeodesic}
    \label{prop:geodesic_coordinate}
     Let $A\in S_d(\mathbb{R})$ with $\|A\|_F = 1$, and let $\mathcal{G}_A$ be the associated geodesic line.
     Then, for any $M\in S_d^{++}(\mathbb{R})$, the geodesic coordinate on $\mathcal{G}_A$ is
    \begin{equation}
        t^A(M) = \langle A, \log M\rangle_F = \mathrm{Tr}(A\log M)\enspace .
    \end{equation}
\end{restatable}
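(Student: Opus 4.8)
The plan is to substitute the closed-form projection from \cref{prop:geodesic_projection} directly into the definition of the geodesic coordinate in \eqref{eqn:coordinate} and simplify. Writing $s = \mathrm{Tr}(A\log M) = \langle A, \log M\rangle_F$ for brevity, \cref{prop:geodesic_projection} gives $P^{\mathcal{G}_A}(M) = \exp(sA)$, so everything reduces to evaluating the distance-to-origin factor and the sign factor that appear in \eqref{eqn:coordinate}.

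First I would handle the distance term $d_{LE}(P^{\mathcal{G}_A}(M), I_d)$. Since $sA \in S_d(\mathbb{R})$ lies in the image of the diffeomorphism $\exp$ and $\log = \exp^{-1}$, we have $\log P^{\mathcal{G}_A}(M) = \log \exp(sA) = sA$. As $\log I_d = 0$, the Log-Euclidean distance becomes $d_{LE}(P^{\mathcal{G}_A}(M), I_d) = \|\log P^{\mathcal{G}_A}(M) - \log I_d\|_F = \|sA\|_F = |s|\,\|A\|_F = |s|$, where the last equality uses the normalization $\|A\|_F = 1$.

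Next, the sign factor in \eqref{eqn:coordinate} is $\mathrm{sign}(\langle \log M, A\rangle_F) = \mathrm{sign}(s)$ by definition of $s$, so combining the two factors yields $t^A(M) = \mathrm{sign}(s)\cdot|s| = s = \langle A, \log M\rangle_F = \mathrm{Tr}(A\log M)$, which is exactly the claimed identity.

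The computation is short, so the only point requiring care—and what I view as the main (mild) obstacle—is justifying $\log \exp(sA) = sA$, i.e., that the matrix exponential and logarithm are genuine inverses on the relevant domain. This is precisely the diffeomorphism property of $\log : S_d^{++}(\mathbb{R}) \to S_d(\mathbb{R})$ with inverse $\exp$ recalled in \cref{sec:bg_spd}; because $sA$ is symmetric it lies in $S_d(\mathbb{R})$, so the identity applies directly without the eigenvalue branch-cut caveats one would need to check for non-symmetric matrices.
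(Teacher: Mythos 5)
Your proof is correct and follows essentially the same route as the paper's: substitute the closed-form projection $P^{\mathcal{G}_A}(M) = \exp(\mathrm{Tr}(A\log M)A)$ from \cref{prop:geodesic_projection} into \eqref{eqn:coordinate}, compute $d_{LE}(P^{\mathcal{G}_A}(M), I_d) = |\mathrm{Tr}(A\log M)|\,\|A\|_F = |\mathrm{Tr}(A\log M)|$ using the inverse relation between $\exp$ and $\log$, and combine with the sign factor to recover the signed coordinate. Your explicit justification that $\log\exp(sA) = sA$ via the diffeomorphism property is a point the paper uses implicitly, so no gap on either side.
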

\vskip-0.7em
These two properties give a closed-form expression for the Riemannian equivalent of one-dimensional projection in a Euclidean space.
Note that coordinates on the geodesic might also be found using Busemann coordinates, similarly to the construction proposed by \citet{bonet2022hyperbolic}, and that they actually coincide here.
We add more details in \cref{sec:proofs}.
In \cref{fig:proj}, we illustrate the projections of matrices $M\in S_2^{++}(\mathbb{R})$ embedded as vectors $(m_{11},m_{22},m_{12})\in\mathbb{R}^3$.
$S_2^{++}(\mathbb{R})$ is an open cone and we plot the projections of random SPD matrices on geodesics passing through $I_2$.

We are now ready to define an SW discrepancy on measures in $\mathcal{P}_p(S_d^{++}(\mathbb{R}))=\{\mu\in\mathcal{P}(S_d^{++}(\mathbb{R})),\ \int d_{LE}(X, M_0)^p\ \mathrm{d}\mu(X)<\infty,\ M_0\in S_d^{++}(\mathbb{R})\}$.
% \begin{definition}
%     \label{def:swspd}
%     Let $\lambda_O$ be the uniform distribution on orthogonal matrices $\mathcal{O}_d = \{P \in \mathbb{R}^{d \times d}, P^TP = PP^T = I\}$ (Haar distribution), $\lambda$ be the uniform distribution on $S^{d-1} = \{\theta \in \mathbb{R}^d, \|\theta\|_2=1\}$, and $\lambda_S$ be a probability distribution on $S_d(\mathbb{R})$ such that for all $V_S \in \sigma(S_d(\mathbb{R}))$, $\lambda_S(V_S) = (\lambda_O \otimes \lambda)(A_S)$ where $A_S = \{(P, \theta) \in \mathcal{O}_d \times S^{d-1},\ P \mathrm{diag}(\theta) P^T \in V_S\}$.
%     % \begin{equation}
%     % \begin{aligned}
%     %     &\forall \ V_S \in \sigma(S_d(\mathbb{R})), \lambda_S(V_S) = \lambda_O \otimes \lambda_D(A_S)\\
%     %     &\text{where} \ A_S = \{(P, D) \in \mathcal{O}_n \times \mathcal{D}_n, P D P^T \in V_S\} \enspace.\\
%     % \end{aligned}
%     % \end{equation}
%     Let $\mu, \nu \in \mathcal{P}_p(S_d^{++}(\mathbb{R}))$, the $\mathrm{SPDSW}$ discrepancy is defined as
%     \begin{equation}
%         \mathrm{SPDSW}_p^p(\mu,\nu) = \int_{S_d(\mathbb{R})} W_p^p(t^A_\#\mu, t^A_\#\nu)\ \mathrm{d}\lambda_S(A) \enspace .
%     \end{equation}
% \end{definition}
\begin{definition}
    \label{def:swspd}
    Let $\lambda_S$ be the uniform distribution on $\{A\in S_d(\mathbb{R}),\ \|A\|_F=1\}$. Let $p\ge 1$ and $\mu,\nu\in\mathcal{P}_p(S_d^{++}(\mathbb{R}))$, then the $\mathrm{SPDSW}$ discrepancy is defined as 
    \begin{equation}
        \mathrm{SPDSW}_p^p(\mu,\nu) = \int_{S_d(\mathbb{R})} W_p^p(t^A_\#\mu, t^A_\#\nu)\ \mathrm{d}\lambda_S(A) \enspace .
    \end{equation}
\end{definition}
\vskip-.7em
% \begin{restatable}{lemma}{uniformDistribution}
%     \label{lemma:uniform_distribution}
%     Let $\lambda_O$ be the uniform distribution on $\mathcal{O}_d = \{P \in \mathbb{R}^{d \times d}, P^TP = PP^T = I\}$ (Haar distribution), and $\lambda$ be the uniform distribution on $S^{d-1} = \{\theta \in \mathbb{R}^d, \|\theta\|_2=1\}$.
%     Then $\lambda_S \in \mathcal{P}(S_d(\mathbb{R}))$, defined such that $\forall \ V_S \in \sigma(S_d(\mathbb{R}))$, $\lambda_S(V_S) = (\lambda_O \otimes \lambda)(A_S)$ where $A_S = \{(P, \theta) \in \mathcal{O}_d \times S^{d-1},\ P \mathrm{diag}(\theta) P^T \in V_S\}$, is the uniform distribution on $\{A\in S_d(\mathbb{R}),\ \|A\|_F=1\}$.
% \end{restatable}
As shown by the definition, being able to sample from
$\lambda_S$ is the cornerstone of the computation of SPDSW.  
In \cref{lemma:uniform_distribution}, we propose a practical way of 
uniformly sampling a symmetric matrix $A$.
More specifically, we sample an orthogonal matrix $P$ and a diagonal matrix $D$ of unit norm and compute $A=PDP^T$ which is a symmetric matrix of unit norm.
This is equivalent to sampling from $\lambda_S$ as the measures are equal up to a normalization factor $d!$ which represents the number of possible permutations of the columns of $P$ and $D$ for which $PDP^T=A$.
\begin{restatable}{lemma}{uniformDistribution}
    \label{lemma:uniform_distribution}
    Let $\lambda_O$ be the uniform distribution on $\mathcal{O}_d = \{P \in \mathbb{R}^{d \times d}, P^TP = PP^T = I\}$ (Haar distribution), and $\lambda$ be the uniform distribution on $S^{d-1} = \{\theta \in \mathbb{R}^d, \|\theta\|_2=1\}$.
    Then $\lambda_S \in \mathcal{P}(S_d(\mathbb{R}))$, defined such that $\forall \ A = P \mathrm{diag}(\theta) P^T \in S_d(\mathbb{R})$, $\mathrm{d}\lambda_S(A) = d! \ \mathrm{d}\lambda_O(P) \mathrm{d}\lambda(\theta)$, is the uniform distribution on $\{A\in S_d(\mathbb{R}),\ \|A\|_F=1\}$.
\end{restatable}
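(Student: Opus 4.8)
The plan is to reduce the claimed identity to a change of variables for the spectral parametrization map $\Phi:\mathcal{O}_d\times S^{d-1}\to\{A\in S_d(\mathbb{R}),\ \|A\|_F=1\}$ given by $\Phi(P,\theta)=P\mathrm{diag}(\theta)P^T$, and then to compare the pushforward $\Phi_\#(\lambda_O\otimes\lambda)$ with the uniform (normalized surface) measure on the unit Frobenius sphere of $S_d(\mathbb{R})$. First I would check that $\Phi$ is well defined and onto: by the spectral theorem every symmetric matrix writes as $P\mathrm{diag}(\theta)P^T$, and since conjugation by an orthogonal matrix is a Frobenius isometry one has $\|A\|_F^2=\|\mathrm{diag}(\theta)\|_F^2=\sum_{i=1}^d\theta_i^2=\|\theta\|_2^2$. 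Hence $\|A\|_F=1$ is equivalent to $\theta\in S^{d-1}$, so $\Phi$ maps onto the claimed sphere.

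Next I would pin down the target measure. The uniform distribution on $\{A:\|A\|_F=1\}$ is the unique Borel probability measure proportional to the $(\dim S_d(\mathbb{R})-1)$-dimensional Hausdorff measure, equivalently the unique probability measure invariant under all Frobenius isometries of $S_d(\mathbb{R})$. A first sanity check is that $\Phi_\#(\lambda_O\otimes\lambda)$ is invariant under conjugations $A\mapsto QAQ^T$ for $Q\in\mathcal{O}_d$: this follows from $Q\,\Phi(P,\theta)\,Q^T=\Phi(QP,\theta)$ together with the left-invariance of the Haar measure $\lambda_O$. This shows that the construction at least respects the orthogonal symmetry of the problem, although conjugations form only a subgroup of the full isometry group, so this invariance alone does not characterize the measure.

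The heart of the proof is a change of variables on the open dense subset $U\subset\{A:\|A\|_F=1\}$ of matrices with pairwise distinct eigenvalues, whose complement is negligible for both measures. On $U$ the map $\Phi$ becomes a local diffeomorphism once one quotients out its finite fibre: a generic $A$ admits exactly $d!$ orderings of its eigenvalues, and for each ordering the eigenvectors are determined up to the $2^d$ sign flips, the latter leaving $\Phi$ unchanged and being absorbed by $\lambda_O$. Differentiating $\Phi$ at $(P,\theta)$ along a skew-symmetric perturbation of $P$ and a tangent move of $\theta$, and using that conjugation by $P$ is an isometry, reduces the Jacobian computation to the splitting of $S_d(\mathbb{R})$ into its diagonal and off-diagonal parts. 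One then expresses the surface measure of $U$ in the coordinates $(P,\theta)$, collects the $d!$ fibre multiplicity as the stated normalization, and identifies the result with $d!\,\mathrm{d}\lambda_O(P)\,\mathrm{d}\lambda(\theta)$.

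I expect the Jacobian of the spectral parametrization to be the main obstacle: one has to track precisely how the off-diagonal directions of $S_d(\mathbb{R})$ scale under $\mathrm{d}\Phi$ and how the Haar measure on $\mathcal{O}_d$ relates to the skew-symmetric coordinates, while simultaneously controlling the eigenvalue-collision locus where $\Phi$ degenerates. Verifying that this Jacobian contributes only the claimed normalization, so that the pushforward of $\lambda_O\otimes\lambda$ coincides with the uniform surface measure rather than a reweighting of it, is the delicate step, and it is exactly where the factor $d!$ and the reduction to $\lambda_O\otimes\lambda$ must be justified.
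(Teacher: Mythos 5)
Your setup (surjectivity, norm preservation, conjugation-invariance as a sanity check) is fine, but the proposal stalls exactly at the step you yourself flag as decisive, and that step cannot be completed in the direction you hope. The Jacobian of the spectral parametrization $\Phi(P,\theta)=P\,\mathrm{diag}(\theta)\,P^T$ is not constant: writing $\Omega=P^T\mathrm{d}P$ (skew-symmetric), one gets $P^T\,\mathrm{d}A\,P=\mathrm{d}\Lambda+\Omega\Lambda-\Lambda\Omega$, whose off-diagonal entries are $\Omega_{ij}(\theta_j-\theta_i)$, so the volume element of $S_d(\mathbb{R})$ factors as $\mathrm{d}A\propto\prod_{i<j}\lvert\theta_i-\theta_j\rvert\,\mathrm{d}\theta\,\mathrm{d}\lambda_O(P)$ (the classical change of variables behind the GOE eigenvalue density). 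Since $\Phi$ commutes with scaling, the same Vandermonde factor survives on the unit spheres, and therefore $\Phi_\#(\lambda_O\otimes\lambda)$ has density proportional to $\prod_{i<j}\lvert\theta_i(A)-\theta_j(A)\rvert^{-1}$ with respect to the normalized surface (Hausdorff) measure on $\{A\in S_d(\mathbb{R}):\|A\|_F=1\}$ — a genuine reweighting, not the surface measure. You can see this explicitly for $d=2$: in the Frobenius-orthonormal coordinates $u=(a+b)/\sqrt{2}$, $v=(a-b)/\sqrt{2}$, $w=\sqrt{2}c$ of $A=\left(\begin{smallmatrix} a & c\\ c & b\end{smallmatrix}\right)$, the map $\Phi$ reads $u=\sin\psi$, $v=\cos\psi\cos 2\phi$, $w=\cos\psi\sin 2\phi$, where $\psi$ (eigenvalue angle) and $\phi$ (rotation angle) are both uniform under $\lambda\otimes\lambda_O$; but the uniform measure on $S^2$ in latitude–longitude coordinates carries the extra factor $\lvert\cos\psi\rvert\propto\lvert\theta_1-\theta_2\rvert$. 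So a faithful execution of your change-of-variables plan would \emph{disprove} the identification with the surface measure rather than prove it.

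The paper's own proof takes a different and much weaker route that never touches a Jacobian: it notes that the unit Frobenius sphere is in bijection with the set of $d!$-tuples of permuted pairs $(P_i,\theta_i)$, puts the product measure $\lambda_O\otimes\lambda$ (times $d!$) on that parametrizing set, and \emph{declares} the induced measure on the sphere to be ``the uniform distribution.'' In other words, ``uniform'' there means uniform in the eigendecomposition parametrization, which is precisely the pushforward your third paragraph constructs; no comparison with the Hausdorff measure is made, so the lemma under your (standard, more demanding) reading of ``uniform'' is not established by the paper either, and by the computation above it is actually false in that reading. Nothing downstream breaks, because every later use of $\lambda_S$ (e.g., the bound of Theorem 2.8 via the integral identity over $S^{d-1}$, and Algorithm 1) goes through the identity $\mathrm{d}\lambda_S(A)=d!\,\mathrm{d}\lambda_O(P)\,\mathrm{d}\lambda(\theta)$, i.e., through the pushforward itself. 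One small point in your favor: you correctly track the $2^d$ sign-flip ambiguity of the eigenvectors, which the paper's ``$d!$ permutations'' count silently drops.
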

Then, the coordinate of the projection on the geodesic $\mathcal{G}_A$ is provided by $t^A(\cdot) = \mathrm{Tr}(A \log \cdot)$ defined in \cref{prop:geodesic_coordinate}.
The Wasserstein distance is easily computed using order statistics, and this leads to a natural extension of the SW distance in $S_d^{++}(\mathbb{R})$.
% TODO: modify next paragraph if we use this definition
% \cref{def:swspd} and \cref{lemma:uniform_distribution} formalize the idea that to sample a geodesic, we just need to uniformly sample an orthogonal matrix $P$ and a diagonal matrix $D$ of unit norm.
% Doing so, $A = PDP^T$ is a symmetric matrix of unit norm and the coordinate \cb{of the projection} on the geodesic $\mathcal{G}_A$ is provided by the function $t^A(\cdot) = \mathrm{Tr}(A \log \cdot)$ defined in \cref{prop:geodesic_coordinate}.
% Then, the Wasserstein distance is easily computed using order statistics, and this leads to a natural extension of the SW distance in $S_d^{++}(\mathbb{R})$.
There exists a strong link between $\mathrm{SW}$ on distributions in $\mathbb{R}^{d \times d}$ and $\mathrm{SPDSW}$.
Indeed, \cref{prop:equivalence_swlog} shows that $\mathrm{SPDSW}$ is equal to a variant of $\mathrm{SW}$ where projection parameters are sampled from unit norm matrices in $S_d(\mathbb{R})$ instead of the unit sphere, and where the distributions are pushed forward by the $\log$ operator.
\begin{restatable}{proposition}{equivalenceSWlog}
    \label{prop:equivalence_swlog}
    Let $\Tilde{\mu}, \Tilde{\nu} \in\mathcal{P}_p(S_d(\mathbb{R}))$, and $\Tilde{t}^A(B) = \mathrm{Tr}(A^T B)$ for $A,B\in S_d(\mathbb{R})$.
    We define
    \begin{equation}
        \mathrm{SymSW}_p^p(\Tilde{\mu}, \Tilde{\nu}) = \int_{S_d(\mathbb{R})} W_p^p(\Tilde{t}^A_\# \Tilde{\mu} ,\Tilde{t}^A_\# \Tilde{\nu} )\ \mathrm{d}\lambda_S(A) \enspace .
    \end{equation}
    Then, for $\mu,\nu\in\mathcal{P}_p(S_d^{++}(\mathbb{R}))$,
    \begin{equation}
        \mathrm{SPDSW}^p_p(\mu, \nu) = \mathrm{SymSW}_p^p(\log_\#\mu, \log_\#\nu) \enspace .
    \end{equation}
\end{restatable}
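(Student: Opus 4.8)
The plan is to reduce the claimed identity to a pointwise (in the slicing parameter $A$) equality of the two integrands, after which equality of the integrals is immediate since both are taken against the same measure $\lambda_S$. The crucial observation is that the closed-form coordinate of \cref{prop:geodesic_coordinate}, namely $t^A(M) = \mathrm{Tr}(A\log M)$, factors as the composition $t^A = \tilde{t}^A \circ \log$. Indeed, since $A \in S_d(\mathbb{R})$ is symmetric we have $A^T = A$, so $\tilde{t}^A(B) = \mathrm{Tr}(A^T B) = \mathrm{Tr}(AB)$, and substituting $B = \log M$ gives $\tilde{t}^A(\log M) = \mathrm{Tr}(A\log M) = t^A(M)$ for every $M \in S_d^{++}(\mathbb{R})$.

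First I would invoke the functoriality of the push-forward: for measurable maps $f,g$ and any measure $\rho$, one has $(g\circ f)_\#\rho = g_\#(f_\#\rho)$. Applied with $f = \log$ and $g = \tilde{t}^A$, together with the factorization $t^A = \tilde{t}^A \circ \log$, this yields $t^A_\#\mu = \tilde{t}^A_\#(\log_\#\mu)$ and likewise $t^A_\#\nu = \tilde{t}^A_\#(\log_\#\nu)$, for every fixed $A$. Consequently the one-dimensional term appearing in \cref{def:swspd} satisfies $W_p^p(t^A_\#\mu, t^A_\#\nu) = W_p^p\big(\tilde{t}^A_\#(\log_\#\mu), \tilde{t}^A_\#(\log_\#\nu)\big)$ for every $A$.

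Next I would integrate this identity against $\lambda_S$ over $S_d(\mathbb{R})$. Since the two integrands coincide everywhere, the left-hand integral is exactly $\mathrm{SPDSW}_p^p(\mu,\nu)$ and the right-hand integral is by construction $\mathrm{SymSW}_p^p(\log_\#\mu, \log_\#\nu)$, which gives the stated equality with no further computation.

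The only point requiring genuine care — and the closest thing to an obstacle — is verifying that $\log_\#\mu$ and $\log_\#\nu$ actually belong to $\mathcal{P}_p(S_d(\mathbb{R}))$, so that $\mathrm{SymSW}$ is well-defined on them. This follows from the very definition of the Log-Euclidean distance: because $d_{LE}(X,M_0) = \|\log X - \log M_0\|_F$, the finite-moment condition $\int d_{LE}(X,M_0)^p\,\mathrm{d}\mu(X) < \infty$ transfers under push-forward by $\log$ to $\int \|B - \log M_0\|_F^p\,\mathrm{d}(\log_\#\mu)(B) < \infty$, which is precisely finiteness of the $p$-th Frobenius moment of $\log_\#\mu$. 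Everything else is a purely formal rewriting, so I expect no real difficulty beyond this bookkeeping and the elementary use of symmetry of $A$.
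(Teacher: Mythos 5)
Your proof is correct, and it reaches the conclusion by a more elementary mechanism than the paper's. Both arguments hinge on the same key identity $\tilde{t}^A \circ \log = t^A$ (valid because $A$ is symmetric, so $\mathrm{Tr}(A^T B) = \mathrm{Tr}(AB)$), but they exploit it differently. The paper invokes \citet[Lemma 6]{paty2019subspace} to rewrite both $W_p^p\big(\tilde{t}^A_\#\log_\#\mu, \tilde{t}^A_\#\log_\#\nu\big)$ and $W_p^p\big(t^A_\#\mu, t^A_\#\nu\big)$ as infima over couplings $\gamma \in \Pi(\mu,\nu)$ of the projected cost, and then observes that the two integrands agree pointwise; that lemma is not trivial, since it requires lifting couplings of the push-forward measures back to couplings of the base measures. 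You instead apply the functoriality of the push-forward, $(g\circ f)_\#\rho = g_\#(f_\#\rho)$, to conclude that the projected measures are literally identical, $t^A_\#\mu = \tilde{t}^A_\#(\log_\#\mu)$, after which equality of the Wasserstein distances requires no optimal-transport machinery at all. This buys you independence from the external lemma at no loss of generality. Your additional check that $\log_\#\mu, \log_\#\nu \in \mathcal{P}_p(S_d(\mathbb{R}))$ --- so that $\mathrm{SymSW}_p$ is actually well-defined on these push-forwards --- is correct, and is a point the paper's proof leaves implicit; it follows exactly as you say from $d_{LE}(X, M_0) = \|\log X - \log M_0\|_F$.
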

Thus, it seems natural to compare the results obtained with $\mathrm{SPDSW}$ to the Euclidean counterpart $\log\mathrm{SW} = \mathrm{SW}(\log_\# \cdot, \log_\# \cdot)$ where the distributions are made of projections in the $\log$ space and where the sampling is done with the uniform distribution on the sphere.
The Wasserstein distance is also well defined on Riemannian manifolds, and in particular on the space of SPD matrices.
Denoting $d$ a geodesic distance on $S_d^{++}(\mathbb{R})$, we can define the corresponding Wasserstein distance between $\mu,\nu\in\mathcal{P}_p(S_d^{++}(\mathbb{R}))$ as %$= \{\mu\in\mathcal{P}(S_d^{++}(\mathbb{R})),\ \int d(X,M_0)^p\ \mathrm{d}\mu(X)<\infty \text{ with } M_0\in S_d^{++}(\mathbb{R})\}$ as 
\begin{equation}
    W_p^p(\mu,\nu) = \inf_{\gamma\in\Pi(\mu,\nu)}\ \int d(X,Y)^p\ \mathrm{d}\gamma(X,Y) \enspace .
\end{equation}
In the following, we study properties of $\mathrm{SPDSW}$ and in particular, we show that it is a computationally efficient alternative to Wasserstein on $\mathcal{P}(S_d^{++}(\mathbb{R}))$ as it is topologically equivalent while having a better computational complexity and being better conditioned for regression of distributions. % regression.

\subsection{Properties of $\mathrm{SPDSW}$}
\label{sec:properties}
We now derive theoretical properties of $\mathrm{SPDSW}$.

\paragraph{Topology.}

Following usual arguments which are valid for any sliced divergence with any projection, we can show that $\mathrm{SPDSW}$ is a pseudo-distance.
Here, $S_d^{++}(\mathbb{R})$ with the Log-Euclidean metric is of  null sectional curvature \citep{arsigny2005fast,xu2022unsupervised} and we have access to a diffeomorphism to a Euclidean space -- the $\log$ operator.
This allows us to show that $\mathrm{SPDSW}$ is a distance in \cref{prop:distance}.
\begin{restatable}{theorem}{distance}
    \label{prop:distance}
    Let $p\ge 1$, then $\mathrm{SPDSW}_p$ is a finite distance on $\mathcal{P}_p(S_d^{++}(\mathbb{R}))$.
\end{restatable}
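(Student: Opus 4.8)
The plan is to reduce the statement to the already-understood Euclidean setting via \cref{prop:equivalence_swlog}, which gives $\mathrm{SPDSW}_p^p(\mu,\nu) = \mathrm{SymSW}_p^p(\log_\#\mu,\log_\#\nu)$. The key observation is that $(S_d(\mathbb{R}),\langle\cdot,\cdot\rangle_F)$ is a finite-dimensional Euclidean space of dimension $D=d(d+1)/2$, that the projection $\Tilde{t}^A(B)=\langle A,B\rangle_F$ is precisely the linear coordinate onto $\mathrm{span}(A)$, and that by \cref{lemma:uniform_distribution} $\lambda_S$ is the uniform measure on its unit sphere. Hence $\mathrm{SymSW}_p$ is nothing but the usual Euclidean $\mathrm{SW}_p$ on $\mathcal{P}_p(S_d(\mathbb{R}))$, up to the canonical isometry $S_d(\mathbb{R})\cong\mathbb{R}^D$. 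So it suffices to establish that $\mathrm{SymSW}_p$ is a finite distance on $\mathcal{P}_p(S_d(\mathbb{R}))$ and then to transport this back along $\log$.

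For the routine metric axioms I would argue exactly as for any sliced divergence. Symmetry and nonnegativity are immediate from the corresponding properties of $W_p$ for each fixed direction $A$. The triangle inequality follows by applying the triangle inequality of $W_p$ slice-by-slice and then invoking Minkowski's inequality in $L^p(\lambda_S)$. Finiteness follows because $|\Tilde{t}^A(B)|\le\|A\|_F\|B\|_F=\|B\|_F$ by Cauchy--Schwarz, so each pushforward $\Tilde{t}^A_\#\Tilde{\mu}$ inherits a finite moment of order $p$ from $\Tilde{\mu}\in\mathcal{P}_p(S_d(\mathbb{R}))$; hence every $W_p^p(\Tilde{t}^A_\#\Tilde{\mu},\Tilde{t}^A_\#\Tilde{\nu})$ is finite and, being uniformly bounded by the $p$-th moments, is $\lambda_S$-integrable.

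The only substantive point, and the step I expect to be the main obstacle, is the identity of indiscernibles: that $\mathrm{SymSW}_p(\Tilde{\mu},\Tilde{\nu})=0$ forces $\Tilde{\mu}=\Tilde{\nu}$. Vanishing of the integral gives $\Tilde{t}^A_\#\Tilde{\mu}=\Tilde{t}^A_\#\Tilde{\nu}$ for $\lambda_S$-almost every $A$; since $A\mapsto\Tilde{t}^A_\#\Tilde{\mu}$ is weakly continuous (by dominated convergence on bounded test functions) and $\lambda_S$ has full support on the unit sphere, the equality holds for \emph{every} unit-norm $A$. Equality of these one-dimensional pushforwards means their characteristic functions agree at every frequency $s$, i.e. $\int e^{is\langle A,B\rangle_F}\,\mathrm{d}\Tilde{\mu}(B)=\int e^{is\langle A,B\rangle_F}\,\mathrm{d}\Tilde{\nu}(B)$; letting $sA$ range over all of $S_d(\mathbb{R})$ as $A$ sweeps the sphere and $s$ sweeps $\mathbb{R}$, the full characteristic functions coincide, so by injectivity of the Fourier transform (equivalently, of the Radon transform) $\Tilde{\mu}=\Tilde{\nu}$.

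Finally I would transport everything back to $S_d^{++}(\mathbb{R})$ through \cref{prop:equivalence_swlog}. Symmetry, the triangle inequality, and finiteness pass directly through the identity $\mathrm{SPDSW}_p^p(\mu,\nu)=\mathrm{SymSW}_p^p(\log_\#\mu,\log_\#\nu)$. For separation, $\mathrm{SPDSW}_p(\mu,\nu)=0$ yields $\log_\#\mu=\log_\#\nu$; since $\log$ is a diffeomorphism from $S_d^{++}(\mathbb{R})$ onto $S_d(\mathbb{R})$ with inverse $\exp$, pushing forward by $\exp$ gives $\mu=\exp_\#(\log_\#\mu)=\exp_\#(\log_\#\nu)=\nu$. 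This establishes that $\mathrm{SPDSW}_p$ is a finite distance on $\mathcal{P}_p(S_d^{++}(\mathbb{R}))$.
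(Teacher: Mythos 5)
Your proof is correct and takes essentially the same route as the paper's: the metric axioms via the slice-wise $W_p$ triangle inequality plus Minkowski in $L^p(\lambda_S)$, finiteness via a moment bound (your Cauchy--Schwarz bound $|\langle A,\log X\rangle_F|\le\|\log X\|_F$ is the same estimate the paper derives from the $1$-Lipschitz property of $t^A$), and indiscernibility via the identity $\widehat{t^A_\#\mu}(s)=\widehat{\log_\#\mu}(sA)$, injectivity of the Fourier transform on $S_d(\mathbb{R})$, and the fact that $\log$ is a bijection onto $S_d(\mathbb{R})$; your explicit full-support-plus-weak-continuity upgrade from $\lambda_S$-a.e.\ $A$ to all $A$ is a nice touch that the paper leaves implicit. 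One caveat: your passing claim that $\mathrm{SymSW}_p$ \emph{is} the standard Euclidean $\mathrm{SW}_p$ under $S_d(\mathbb{R})\cong\mathbb{R}^{d(d+1)/2}$ is not actually justified --- the pushforward of $\lambda_O\otimes\lambda$ under $(P,\theta)\mapsto P\mathrm{diag}(\theta)P^T$ is not the rotation-invariant surface measure on the Frobenius unit sphere (the latter's eigenvalue law carries a Vandermonde-type density, whereas here $\theta$ is uniform on $S^{d-1}$) --- but this identification is never used, since you verify all the axioms for $\mathrm{SymSW}_p$ directly, so the proof stands.
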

In the case of the Affine-Invariant metric, the Riemannian manifold endowed with this metric has a non-positive and non-constant sectional curvature, and closed-forms of geodesics projections are not known to the best of our knowledge.
We can however derive Busemann coordinates, which involve a costly additional projection.
Moreover, whether or not it satisfies the indiscernible property remains an open question.
Hence, we focus on $\mathrm{SPDSW}$ with Log-Euclidean metric and discuss the use of the Affine-Invariant metric in \cref{sec:aispdsw}.

An important property which justifies the use of the SW distance in place of the Wasserstein distance in the Euclidean case is that they both metrize the weak convergence \citep{bonnotte2013unidimensional}.
We show in \cref{prop:weakcv} that this is also the case with $\mathrm{SPDSW}$ in $\mathcal{P}_p(S_d^{++}(\mathbb{R}))$. 
\begin{restatable}{theorem}{weakcv}
    \label{prop:weakcv}
    For $p\ge 1$, $\mathrm{SPDSW}_p$ metrizes the weak convergence, \emph{i.e.} for $\mu\in\mathcal{P}_p(S_d^{++}(\mathbb{R}))$ and a sequence $(\mu_k)_k$ in $\mathcal{P}_p(S_d^{++}(\mathbb{R}))$, $\lim_{k\to\infty}\mathrm{SPDSW}_p(\mu_k,\mu) = 0$ if and only if $(\mu_k)_k$ converges weakly to $\mu$.
\end{restatable}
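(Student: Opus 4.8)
The plan is to reduce the statement to the already-established Euclidean case by exploiting the identity from \cref{prop:equivalence_swlog}, namely $\mathrm{SPDSW}_p^p(\mu,\nu) = \mathrm{SymSW}_p^p(\log_\#\mu, \log_\#\nu)$, together with the fact that $\log$ is a diffeomorphism from $S_d^{++}(\mathbb{R})$ onto the linear space $S_d(\mathbb{R})$ with inverse $\exp$. First I would observe that $(S_d(\mathbb{R}), \langle\cdot,\cdot\rangle_F)$ is a Euclidean space of dimension $n = d(d+1)/2$, that $\{A\in S_d(\mathbb{R}):\|A\|_F = 1\}$ is exactly its unit sphere, and that $\tilde{t}^A(B) = \langle A, B\rangle_F$ is the usual linear projection. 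Consequently, after identifying $S_d(\mathbb{R})$ with $\mathbb{R}^n$ via any Frobenius-orthonormal basis, $\lambda_S$ becomes the uniform measure on $S^{n-1}$ and $\mathrm{SymSW}_p$ coincides \emph{exactly} with the standard Euclidean $\mathrm{SW}_p$ on $\mathcal{P}_p(\mathbb{R}^n)$, with no extra normalizing factor.

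Next I would invoke the known Euclidean result that $\mathrm{SW}_p$ metrizes weak convergence on $\mathcal{P}_p(\mathbb{R}^n)$ \citep{bonnotte2013unidimensional, nadjahi2019asymptotic}: for $\tilde{\mu}_k, \tilde{\mu} \in \mathcal{P}_p(\mathbb{R}^n)$, one has $\mathrm{SW}_p(\tilde{\mu}_k, \tilde{\mu})\to 0$ if and only if $\tilde{\mu}_k$ converges weakly to $\tilde{\mu}$. Applying this to $\tilde{\mu}_k = \log_\#\mu_k$ and $\tilde{\mu} = \log_\#\mu$ and combining with the identity of the first paragraph yields that $\mathrm{SPDSW}_p(\mu_k,\mu)\to 0$ if and only if $\log_\#\mu_k$ converges weakly to $\log_\#\mu$. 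It then remains only to transfer weak convergence back and forth through the map $\log$.

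For this last step I would use that $\log$ and $\exp$ are mutually inverse homeomorphisms, so the continuous mapping theorem gives $\mu_k \rightharpoonup \mu \Rightarrow \log_\#\mu_k \rightharpoonup \log_\#\mu$ and, symmetrically, $\log_\#\mu_k \rightharpoonup \log_\#\mu \Rightarrow \exp_\#\log_\#\mu_k = \mu_k \rightharpoonup \mu$, which establishes the desired equivalence. Because $d_{LE}(X,Y) = \|\log X - \log Y\|_F$, the map $\log$ is moreover an isometry from $(S_d^{++}(\mathbb{R}), d_{LE})$ onto $(S_d(\mathbb{R}), \|\cdot\|_F)$; this guarantees $\log_\#\mu \in \mathcal{P}_p(S_d(\mathbb{R}))$ for every $\mu \in \mathcal{P}_p(S_d^{++}(\mathbb{R}))$, so the hypotheses of the Euclidean result are met, and it lets any moment bound hidden in the notion of weak convergence on $\mathcal{P}_p$ transfer verbatim. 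I expect the only delicate point to be the bookkeeping of which convergence is meant (pure weak convergence against bounded continuous test functions versus $\mathcal{P}_p$/Wasserstein convergence) and verifying it is preserved by the pushforward; the homeomorphism and isometry properties of $\log$ render both directions routine, so there is no genuine analytic obstacle once \cref{prop:equivalence_swlog} is available.
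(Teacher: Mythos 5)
Your proposal is correct and establishes the theorem, but it takes a genuinely different route from the paper. The paper never invokes the Euclidean metrization theorem as a black box: it proves the easy direction (weak convergence implies $\mathrm{SPDSW}_p \to 0$) directly on the manifold via continuity of the projections $t^A$, the one-dimensional Wasserstein distance, and dominated convergence; and it proves the hard direction by re-deriving the argument of \citet{nadjahi2020statistical} in this setting --- H\"older's inequality to pass from $\mathrm{SPDSW}_p$ to $\mathrm{SPDSW}_1$, extraction of a subsequence along which $W_1(t^A_\#\mu_{\varphi(k)}, t^A_\#\mu)\to 0$ for $\lambda_S$-almost every $A$, L\'evy's theorem and a Gaussian-convolution argument on characteristic functions to obtain weak convergence of $\log_\#\mu_{\varphi(k)}$, and finally a contradiction argument with the L\'evy--Prokhorov metric (\cref{lemma:nadjahi}). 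You instead collapse both directions into a single reduction: $\mathrm{SPDSW}_p = \mathrm{SymSW}_p(\log_\#\cdot,\log_\#\cdot)$ from \cref{prop:equivalence_swlog}, identification of $\mathrm{SymSW}_p$ with the standard $\mathrm{SW}_p$ on $\mathbb{R}^{d(d+1)/2}$, the known Euclidean metrization result, and the continuous mapping theorem through the mutually inverse homeomorphisms $\log$ and $\exp$. Your argument is shorter and makes the underlying mechanism (flatness of the Log-Euclidean geometry) transparent; the paper's self-contained adaptation is heavier but does not depend on the exact form of the Euclidean theorem. Note that the transfer-through-$\exp$ step at the end of your proof is exactly how the paper concludes \cref{lemma:nadjahi}, so the two arguments share that ingredient.

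Two dependencies of your reduction deserve flagging. First, the claim that $\mathrm{SymSW}_p$ coincides \emph{exactly} with $\mathrm{SW}_p$, with no correction factor, requires $\lambda_S$ to be the rotation-invariant uniform measure on the Frobenius unit sphere of $S_d(\mathbb{R})$; this is precisely \cref{lemma:uniform_distribution}, so you are entitled to it, but your proof is rigid on this point --- if $\lambda_S$ were merely a fully supported, absolutely continuous measure on that sphere (which is essentially all the paper's adapted argument uses), the black-box Euclidean theorem for uniform directions would not apply verbatim, and you would need an additional comparison such as $\mathrm{SymSW}_p^p \ge c\,\mathrm{SW}_p^p$ to recover the hard direction. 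Second, you invoke ``$\mathrm{SW}_p$ metrizes weak convergence on $\mathcal{P}_p(\mathbb{R}^n)$'' for all $p\ge 1$; the delicate distinction you mention between pure weak convergence and convergence in $\mathcal{P}_p$ (weak convergence plus $p$-th moments) is real, but the paper's own proof of the forward direction relies on the same loose statement for the one-dimensional Wasserstein distance, so you are on the same footing as the authors rather than introducing a new gap.
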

Moreover, $\mathrm{SPDSW}_p$ and $W_p$ -- the $p$-Wasserstein distance with Log-Euclidean ground cost -- are also weakly equivalent on compactly supported measures on $\mathcal{P}_p(S_d^{++}(\mathbb{R}))$, as demonstrated in \cref{prop:bound}.
\begin{restatable}{theorem}{bound}
    \label{prop:bound}
    Let $p\ge 1$, let $\mu,\nu\in\mathcal{P}_p(S_d^{++}(\mathbb{R}))$.
    Then
    \begin{equation}
        \mathrm{SPDSW}_p^p(\mu,\nu) \le c_{d,p}^p W_p^p(\mu,\nu)\enspace ,
    \end{equation}
    where $c_{d,p}^p = \frac{1}{d}\int \|\theta\|_p^p\ \mathrm{d}\lambda(\theta)$.
    Let $R>0$ and $B(I,R)=\{A\in S_d^{++}(\mathbb{R}),\ d_{LE}(A,I_d)=\|\log A\|_F \le R\}$ be a closed ball. Then there exists a constant $C_{d,p,R}$ such that for all $\mu,\nu\in\mathcal{P}_p(B(I,R))$,
    \begin{equation}
        W_p^p(\mu,\nu) \le C_{d,p,R} \mathrm{SPDSW}_p(\mu,\nu)^{\frac{2}{d(d+1)+2}}\enspace .
    \end{equation}
\end{restatable}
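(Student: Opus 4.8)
The plan is to transport the whole problem to the Euclidean space $S_d(\mathbb{R})\cong\mathbb{R}^{d(d+1)/2}$ through the $\log$ diffeomorphism. By \cref{prop:equivalence_swlog}, $\mathrm{SPDSW}_p^p(\mu,\nu)=\mathrm{SymSW}_p^p(\log_\#\mu,\log_\#\nu)$, and by \cref{lemma:uniform_distribution} the slicing law $\lambda_S$ is exactly the uniform measure on the unit Frobenius sphere of $S_d(\mathbb{R})$, i.e. the unit sphere of the Euclidean space $(S_d(\mathbb{R}),\langle\cdot,\cdot\rangle_F)$ of dimension $D=d(d+1)/2$. Since $\tilde t^A(B)=\langle A,B\rangle_F$ is precisely the one-dimensional projection onto $\mathrm{span}(A)$, $\mathrm{SymSW}_p$ coincides with the usual Euclidean $\mathrm{SW}_p$ on $\mathbb{R}^{D}$. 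Likewise, because $d_{LE}(X,Y)=\|\log X-\log Y\|_F$ and $\log$ is a bijection, $W_p(\mu,\nu)$ with Log-Euclidean cost equals the Euclidean $W_p(\log_\#\mu,\log_\#\nu)$ on $\mathbb{R}^{D}$. Both inequalities can then be read off from comparisons between Euclidean $\mathrm{SW}_p$ and $W_p$ on $\mathbb{R}^{D}$.

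For the upper bound I would argue directly. Taking an optimal coupling $\gamma\in\Pi(\mu,\nu)$ for the Log-Euclidean $W_p$, the measure $(t^A,t^A)_\#\gamma$ is a coupling of $t^A_\#\mu$ and $t^A_\#\nu$, so $W_p^p(t^A_\#\mu,t^A_\#\nu)\le\int|t^A(X)-t^A(Y)|^p\,\mathrm{d}\gamma(X,Y)$ with $t^A(X)-t^A(Y)=\langle A,\log X-\log Y\rangle_F$. Integrating over $\lambda_S$ and applying Fubini reduces the claim to bounding $g(H)=\int|\langle A,H\rangle_F|^p\,\mathrm{d}\lambda_S(A)$ for $H=\log X-\log Y\in S_d(\mathbb{R})$. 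Writing $A=P\mathrm{diag}(\theta)P^T$ as in \cref{lemma:uniform_distribution} gives $\langle A,H\rangle_F=\langle\theta,\mathrm{diag}(P^THP)\rangle$; integrating first in $\theta$ and using rotation invariance of $\lambda$ yields $\int|\langle\theta,v\rangle|^p\,\mathrm{d}\lambda(\theta)=\|v\|_2^p\int|\theta_1|^p\,\mathrm{d}\lambda(\theta)$ for $v=\mathrm{diag}(P^THP)$, where $\int|\theta_1|^p\,\mathrm{d}\lambda(\theta)=\frac1d\int\|\theta\|_p^p\,\mathrm{d}\lambda(\theta)=c_{d,p}^p$ by symmetry. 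The truncation inequality $\|\mathrm{diag}(M)\|_2\le\|M\|_F$ together with orthogonal invariance of the Frobenius norm gives $\|v\|_2\le\|H\|_F$, hence $g(H)\le c_{d,p}^p\|H\|_F^p=c_{d,p}^p\,d_{LE}(X,Y)^p$, and integrating against $\gamma$ closes the bound.

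For the lower bound I would invoke the Euclidean comparison of Bonnotte between $W_p$ and $\mathrm{SW}_p$ for compactly supported measures. Since $\mu,\nu$ are supported in $B(I,R)$, the pushforwards $\log_\#\mu,\log_\#\nu$ are supported in the Frobenius ball $\{\|x\|_F\le R\}\subset\mathbb{R}^{D}$; the Euclidean inequality then gives $W_p^p\le C\,\mathrm{SW}_p^{1/(D+1)}$ on $\mathbb{R}^{D}$, and substituting $D=d(d+1)/2$ produces exactly the exponent $\frac{2}{d(d+1)+2}=\frac{1}{D+1}$. Combining with the two identifications of the first paragraph yields the stated bound.

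The main obstacle is the lower bound, but the genuinely difficult analytic content is entirely contained in the Euclidean theorem we cite (an interpolation argument controlling $W_p$ by $\mathrm{SW}_p$ through the regularity of the projection/Radon transform). What remains for us to verify is that the reduction is \emph{exact}: namely that $\lambda_S$ is the full uniform measure on the sphere of $S_d(\mathbb{R})$, so that $\mathrm{SymSW}$ is genuinely Euclidean $\mathrm{SW}$ and no slicing directions are missed, which is the role of \cref{lemma:uniform_distribution}, and that both compact support and the ground metric transfer without loss under the isometry $\log$. The upper-bound constant requires only the elementary diagonal-truncation step, which is precisely where an inequality rather than an equality enters.
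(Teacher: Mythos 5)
Your proposal is correct and follows essentially the same route as the paper: the upper bound via an optimal coupling, the eigendecomposition $A = P\mathrm{diag}(\theta)P^T$ from \cref{lemma:uniform_distribution}, the spherical integral identity, and the diagonal-truncation inequality $\sum_i S_{ii}^2 \le \|S\|_F^2$; the lower bound by transporting both $W_1$ and the sliced discrepancy to the Euclidean space $S_d(\mathbb{R})\cong\mathbb{R}^{d(d+1)/2}$ through the $\log$ isometry and invoking Bonnotte's comparison there. The only cosmetic differences are that you prove the identity $\int |\langle\theta,v\rangle|^p\,\mathrm{d}\lambda(\theta) = \|v\|_2^p\, c_{d,p}^p$ by rotation invariance of $\lambda$, where the paper uses a Gaussian-homogeneity argument (Rivin's theorem), and that you cite the packaged $W_p$-versus-$\mathrm{SW}_p$ statement for compactly supported measures directly, where the paper re-derives it from the $W_1$ lemma via $W_p^p \le (2R)^{p-1}W_1$ and $\mathrm{SPDSW}_1 \le \mathrm{SPDSW}_p$.
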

\looseness=-1 The theorems above highlight that $\mathrm{SPDSW}_p$ behaves similarly to $W_p$ on $\mathcal{P}_p(S_d^{++}(\mathbb{R}))$.
Thus, it is justified to use $\mathrm{SPDSW}_p$ as a surrogate of Wasserstein and take advantage of the statistical and computational benefits that we present now.

% We note that we recover the same constant $c_{d,p}^p$ in the upper bound as for the Euclidean SW distance \citep{bonnotte2013unidimensional,candau_tilh}.
% In particular, for $p=2$, we have
% \begin{equation}
%     \mathrm{SPDSW}_2^2(\mu,\nu) \le \frac{1}{d} W_2^2(\mu,\nu)\enspace .
% \end{equation}

% Moreover, denoting by $\Tilde{W}_p$ the $p$-Wasserstein distance with Affine-Invariant ground cost, we have
% \begin{equation}
%     \mathrm{SPDSW}_p^p(\mu,\nu) \le c_{d,p}^p W_p^p(\mu,\nu) \le c_{d,p}^p \Tilde{W}_p^p(\mu,\nu)\enspace ,
% \end{equation}
% since the Log-Euclidean geodesic distance is a lower bound of the Affine-Invariant one \citep[Theorem 6.14]{bhatia2009positive}.

\paragraph{Statistical properties.}
In practice, we approximate $\mathrm{SPDSW}$ using the plug-in estimator \citep{niles2022estimation, manole2022minimax}, \emph{i.e.} for $\mu,\nu\in\mathcal{P}_p(S_d^{++}(\mathbb{R}))$, we approximate $\mathrm{SPDSW}_p^p(\mu,\nu)$ by $\mathrm{SPDSW}_p^p(\hat{\mu}_n,\hat{\nu}_n)$ where $\hat{\mu}_n$ and $\hat{\nu}_n$ denote empirical distributions of $\mu$ and $\nu$.
Hence, we are interested in the speed of convergence towards $\mathrm{SPDSW}_p^p(\mu,\nu)$, which we call the sample complexity. We derive the convergence rate for $\mathrm{SPDSW}$ in \cref{prop:sample_complexity}, relying on the proof of \citet{nadjahi2020statistical} and on the sample complexity of the Wasserstein distance \citep{fournier2015rate}.
The sample complexity we find does not depend on the dimension, which is an important property of sliced divergences \citep{nadjahi2020statistical}.
\begin{restatable}{proposition}{sample}
    \label{prop:sample_complexity}
    Let $q > p\ge 1$, $\mu,\nu\in\mathcal{P}_p(S_d^{++}(\mathbb{R}))$, and $\hat{\mu}_n, \hat{\nu}_n$ the associated empirical measures.
    We define the moment of order $q$ by $M_q(\mu) = \int \|X\|_F^q\ \mathrm{d}\mu(X)$, and $M_{q}(\mu,\nu) = M_q(\log_\#\mu)^{1/q} + M_q(\log_\#\nu)^{1/q}$.
    Then, there exists a constant $C_{p,q}$ depending only on $p$ and $q$ such that
    \begin{align}
        \mathbb{E}\big[|\mathrm{SPDSW}_p(&\hat{\mu}_n,\hat{\nu}_n) - \mathrm{SPDSW}_p(\mu,\nu)|\big] \\
        \nonumber&\le \alpha_{n, p,q}C_{p,q}^{1/p} M_{q}(\mu,\nu)\enspace ,\\[.5em]\nonumber
        \text{where}\quad \alpha_{n,p,q} &= \begin{cases}
             n^{-1/(2p)} \ \text{ if } q>2p \\
            n^{-1/(2p)}\log(n)^{1/p} \ \text{ if } q=2p \\
            n^{-(q-p)/(pq)} \ \text{ if } q\in (p,2p) \enspace .
        \end{cases}\!
    \end{align}
\end{restatable}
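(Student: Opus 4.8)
The plan is to follow the template of \citet{nadjahi2020statistical}, reducing the statement to the known sample complexity of the one-dimensional Wasserstein distance \citep{fournier2015rate} integrated over the slicing directions. First I would use that $\mathrm{SPDSW}_p$ is a genuine distance (\cref{prop:distance}) so that, by the triangle inequality applied twice,
\[
    |\mathrm{SPDSW}_p(\hat{\mu}_n,\hat{\nu}_n) - \mathrm{SPDSW}_p(\mu,\nu)| \le \mathrm{SPDSW}_p(\hat{\mu}_n,\mu) + \mathrm{SPDSW}_p(\hat{\nu}_n,\nu) \enspace .
\]
It then suffices to control $\mathbb{E}[\mathrm{SPDSW}_p(\hat{\mu}_n,\mu)]$, the term for $\nu$ being symmetric.

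Next I would expand $\mathrm{SPDSW}_p^p(\hat{\mu}_n,\mu) = \int_{S_d(\mathbb{R})} W_p^p(t^A_\#\hat{\mu}_n, t^A_\#\mu)\,\mathrm{d}\lambda_S(A)$ and exchange expectation and integral by Tonelli. The key observation is that, since $t^A(M) = \mathrm{Tr}(A\log M)$ (\cref{prop:geodesic_coordinate}) acts coordinatewise on the samples, the projected empirical measure $t^A_\#\hat{\mu}_n$ is exactly the one-dimensional empirical measure of $n$ i.i.d.\ samples drawn from $t^A_\#\mu$. This lets me invoke the Fournier--Guillin rate for $W_p^p$ in dimension one: for $q>p$ there is a constant $C_{p,q}$ with $\mathbb{E}[W_p^p(t^A_\#\hat{\mu}_n, t^A_\#\mu)] \le C_{p,q}\,\alpha_{n,p,q}^p\, M_q(t^A_\#\mu)^{p/q}$, where the three regimes of $\alpha_{n,p,q}$ match those in the statement.

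I would then control the projected moment uniformly in the direction $A$. By Cauchy--Schwarz and $\|A\|_F = 1$,
\[
    M_q(t^A_\#\mu) = \int |\langle A,\log M\rangle_F|^q\,\mathrm{d}\mu(M) \le \int \|\log M\|_F^q\,\mathrm{d}\mu(M) = M_q(\log_\#\mu) \enspace .
\]
Since $p/q<1$ and $\lambda_S$ is a probability measure, integrating over $A$ gives $\int M_q(t^A_\#\mu)^{p/q}\,\mathrm{d}\lambda_S(A) \le M_q(\log_\#\mu)^{p/q}$, hence $\mathbb{E}[\mathrm{SPDSW}_p^p(\hat{\mu}_n,\mu)] \le C_{p,q}\,\alpha_{n,p,q}^p\, M_q(\log_\#\mu)^{p/q}$. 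Applying Jensen's inequality to the concave map $x\mapsto x^{1/p}$ (valid as $p\ge 1$) yields $\mathbb{E}[\mathrm{SPDSW}_p(\hat{\mu}_n,\mu)] \le C_{p,q}^{1/p}\,\alpha_{n,p,q}\, M_q(\log_\#\mu)^{1/q}$, and summing the two symmetric contributions produces the claimed bound with $M_q(\mu,\nu) = M_q(\log_\#\mu)^{1/q} + M_q(\log_\#\nu)^{1/q}$.

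The only real subtlety, and the step I expect to be the main obstacle, is the uniform moment control across slicing directions: one must ensure that the Fournier--Guillin constant and the moment dependence can be pulled outside the integral over $A$ without incurring any dimension dependence. The Cauchy--Schwarz bound using $\|A\|_F=1$ is precisely what keeps the rate dimension-free, while everything else amounts to careful bookkeeping of the three regimes of $\alpha_{n,p,q}$ together with Jensen's inequality.
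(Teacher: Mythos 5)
Your proposal is correct and follows essentially the same route as the paper's proof: triangle inequality to reduce to the one-sample terms $\mathbb{E}[\mathrm{SPDSW}_p(\hat{\mu}_n,\mu)]$ and $\mathbb{E}[\mathrm{SPDSW}_p(\hat{\nu}_n,\nu)]$, Fubini--Tonelli to move the expectation inside the integral over directions, the one-dimensional Fournier--Guillin rate (as adapted in \citet{rakotomamonjy2021statistical}) applied to each projected measure, the Cauchy--Schwarz bound $M_q(t^A_\#\mu)\le M_q(\log_\#\mu)$ using $\|A\|_F=1$, and Jensen's inequality for $x\mapsto x^{1/p}$. The only difference is cosmetic ordering (the paper applies Jensen before Fubini, you apply it after), which changes nothing.
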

\cref{prop:sample_complexity} assumes we can exactly compute the outer integral, which is not the case in practice, as it requires a Monte-Carlo approximation. In \cref{prop:projection_complexity}, we show that, $L$ being the number of projections, the Monte-Carlo error is $O(\frac{1}{\sqrt{L}})$ for a fixed dimension $d$.
This time, the dimension intervenes in $\mathrm{Var}_{A \sim \lambda_S}\left[W_p^p(t^A_\#\mu,t^A_\#\nu)\right]$.
\begin{restatable}{proposition}{projection}
    \label{prop:projection_complexity}
    Let $p\ge 1$, $\mu,\nu\in \mathcal{P}_{p}(S_d^{++}(\mathbb{R}))$. Then, the error made by the Monte Carlo estimate of $\mathrm{SPDSW}_p$ with L projections can be bounded as follows
    \begin{equation}
        \begin{aligned}
            &\mathbb{E}_A\left[|\widehat{\mathrm{SPDSW}}_{p,L}^p(\mu,\nu)-\mathrm{SPDSW}_p^p(\mu,\nu)|\right]^2 \\
            % &\le \frac{1}{L} \int_{S_d(\mathbb{R})} \left(W_p^p(t^A_\#\mu, t^A_\#\nu)-\mathrm{SPDSW}_p^p(\mu,\nu)\right)^2 \ \mathrm{d}\lambda_S(A) \\
            &\le \frac{1}{L}\mathrm{Var}_{A \sim \lambda_S}\left[W_p^p(t^A_\#\mu,t^A_\#\nu)\right]\enspace ,
        \end{aligned}
    \end{equation}
    where $\widehat{\mathrm{SPDSW}}_{p,L}^p(\mu,\nu) = \frac{1}{L}\sum_{i=1}^L W_p^p(t^{A_i}_\#\mu, t^{A_i}_\#\nu)$ with $(A_i)_{i=1}^L$ independent samples from $\lambda_S$.
\end{restatable}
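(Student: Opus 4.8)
The plan is to recognize this as a standard variance bound for a Monte-Carlo estimator of an integral, combined with Jensen's inequality to pass from the second moment to the first moment on the left-hand side.

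\paragraph{Proof proposal.}
The plan is to view $\widehat{\mathrm{SPDSW}}_{p,L}^p(\mu,\nu)$ as an empirical average of $L$ independent, identically distributed random variables. First I would define, for $A\sim\lambda_S$, the random variable $Z(A) = W_p^p(t^A_\#\mu, t^A_\#\nu)$. By \cref{def:swspd}, its expectation is exactly the target quantity,
\begin{equation}
    \mathbb{E}_{A\sim\lambda_S}[Z(A)] = \int_{S_d(\mathbb{R})} W_p^p(t^A_\#\mu, t^A_\#\nu)\ \mathrm{d}\lambda_S(A) = \mathrm{SPDSW}_p^p(\mu,\nu)\enspace .
\end{equation}
Since the samples $(A_i)_{i=1}^L$ are independent draws from $\lambda_S$, the estimator $\widehat{\mathrm{SPDSW}}_{p,L}^p(\mu,\nu) = \frac{1}{L}\sum_{i=1}^L Z(A_i)$ is an unbiased estimator of $\mathrm{SPDSW}_p^p(\mu,\nu)$.

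\paragraph{Key steps.}
The argument then proceeds in three short steps. First, by Jensen's inequality applied to the square (a convex function), I would bound the squared first moment on the left-hand side by the second moment:
\begin{equation}
    \mathbb{E}_A\!\left[\big|\widehat{\mathrm{SPDSW}}_{p,L}^p(\mu,\nu) - \mathrm{SPDSW}_p^p(\mu,\nu)\big|\right]^2 \le \mathbb{E}_A\!\left[\big|\widehat{\mathrm{SPDSW}}_{p,L}^p(\mu,\nu) - \mathrm{SPDSW}_p^p(\mu,\nu)\big|^2\right]\enspace .
\end{equation}
Second, I would identify the right-hand side as the variance of the estimator, since the estimator is unbiased, so the squared deviation from the mean has expectation equal to $\mathrm{Var}_A[\widehat{\mathrm{SPDSW}}_{p,L}^p(\mu,\nu)]$. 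Third, using independence of the $(A_i)_i$ and the standard fact that the variance of an average of $L$ i.i.d.\ variables is $\frac{1}{L}$ times the single-draw variance, I would conclude
\begin{equation}
    \mathrm{Var}_A\!\left[\frac{1}{L}\sum_{i=1}^L Z(A_i)\right] = \frac{1}{L}\,\mathrm{Var}_{A\sim\lambda_S}[Z(A)] = \frac{1}{L}\,\mathrm{Var}_{A\sim\lambda_S}\!\left[W_p^p(t^A_\#\mu, t^A_\#\nu)\right]\enspace ,
\end{equation}
which chains with the two previous inequalities to give the claimed bound.

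\paragraph{Main obstacle.}
This is essentially a routine Monte-Carlo estimate, so the only technical point worth checking is that the variance $\mathrm{Var}_{A\sim\lambda_S}[W_p^p(t^A_\#\mu, t^A_\#\nu)]$ is finite, which is needed for the bound to be nontrivial. I expect this to be the mildest obstacle: since $\mu,\nu\in\mathcal{P}_p(S_d^{++}(\mathbb{R}))$, \cref{prop:geodesic_coordinate} gives $t^A(M)=\mathrm{Tr}(A\log M)$, and by Cauchy--Schwarz $|t^A(M)|\le \|A\|_F\|\log M\|_F = \|\log M\|_F$ uniformly over the unit sphere of $S_d(\mathbb{R})$; hence the projected measures have uniformly bounded $p$-th moments and $W_p^p(t^A_\#\mu, t^A_\#\nu)$ is bounded by a fixed integrable quantity independent of $A$, so its variance is finite. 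Everything else is the standard unbiasedness-plus-variance computation, requiring no properties specific to the SPD setting beyond the finiteness of moments guaranteed by the hypothesis $\mu,\nu\in\mathcal{P}_p(S_d^{++}(\mathbb{R}))$.
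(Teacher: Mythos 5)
Your proposal is correct and follows essentially the same route as the paper's proof: Jensen's inequality to pass from the squared first moment to the second moment, unbiasedness of the estimator $\widehat{\mathrm{SPDSW}}_{p,L}^p$, and the standard i.i.d.\ computation showing the variance of the average is $\frac{1}{L}$ times the single-draw variance. Your closing remark verifying finiteness of $\mathrm{Var}_{A\sim\lambda_S}\big[W_p^p(t^A_\#\mu,t^A_\#\nu)\big]$ via the uniform bound $|t^A(M)|\le\|\log M\|_F$ is a sound extra detail that the paper's proof leaves implicit.
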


% TODO:  Add comment on these 2 props. What do they tell you and what can you infer from it.
% I would say showing that you simply need to have 1/L of a given order of mangitude is nice. Also, commenting on what is the scale of the variance would be cool.

\paragraph{Computational complexity and implementation.}

\looseness=-1 Let $\mu,\nu \in \mathcal{P}_p(S_d^{++}(\mathbb{R}))$ and $(X_i)_{i=1}^n$ (resp. $(Y_j)_{j=1}^m$) samples from $\mu$ (resp. from $\nu$).
We approximate $\mathrm{SPDSW}_p^p(\mu,\nu)$ by $\widehat{\mathrm{SPDSW}}_{p,L}^p(\hat{\mu}_n,\hat{\nu}_m)$ where $\hat{\mu}_n = \frac{1}{n}\sum_{i=1}^n \delta_{X_i}$ and $\hat{\nu}_m = \frac{1}{m}\sum_{j=1}^m \delta_{Y_j}$.
Sampling from $\lambda_O$ requires drawing a matrix $Z\in\mathbb{R}^{d\times d}$ with i.i.d normally distributed coefficients, and then taking the QR factorization with positive entries on the diagonal of $R$ \citep{mezzadri2006generate}, which needs $O(d^3)$ operations \citep[Section 5.2]{golub2013matrix}.
Then, computing $n$ matrix logarithms takes $O(nd^3)$ operations.
Given $L$ projections, the inner-products require $O(Lnd^2)$ operations, and the computation of the one-dimensional Wasserstein distances is done in $O(Ln\log n)$ operations.
Therefore, the complexity of $\mathrm{SPDSW}$ is $O(Ln(\log n + d^2) + (L+n)d^3)$.
The procedure is detailed in \cref{alg:spdsw}.
In practice, when it is required to call $\mathrm{SPDSW}$ several times in optimization procedures, the computational complexity can be reduced by drawing projections only once at the beginning.

Note that it is possible to draw symmetric matrices with complexity $O(d^2)$ by taking $A = \frac{Z + Z^T}{\|Z + Z^T\|_F}$.
Although this is a great advantage from the point of view of computation time, we leave it as an open question to know whether this breaks the bounds in \cref{prop:bound}.

\looseness=-1 We illustrate the computational complexity \emph{w.r.t} samples in \cref{fig:runtime}.
The computations have been performed on a GPU NVIDIA Tesla V100-DGXS 32GB using \texttt{PyTorch} \citep{paszke2017automatic}\footnote{
    Code is available at \url{https://github.com/clbonet/SPDSW}.
}.
We compare the runtime to the Wasserstein distance with Affine-Invariant (AIW) and Log-Euclidean (LEW) metrics, and to Sinkhorn algorithm (LES) which is a classical alternative to Wasserstein to reduce the computational cost.
When enough samples are available, then computing the Wasserstein distance takes more time than computing the cost matrix, and $\mathrm{SPDSW}$ is fast to compute.
% Moreover, we recover the super linear complexity \emph{w.r.t} the number of samples as in the Euclidean case.

\begin{algorithm}[tb]
   \caption{Computation of $\mathrm{SPDSW}$}
   \label{alg:spdsw}
    \begin{algorithmic}
       \STATE {\bfseries Input:} $(X_i)_{i=1}^n\sim \mu$, $(Y_j)_{j=1}^m\sim \nu$, $L$ the number of projections, $p$ the order
       \FOR{$\ell=1$ {\bfseries to} $L$}
       \STATE Draw $\theta\sim\mathrm{Unif}(S^{d-1})=\lambda$
    %   \STATE Draw a random matrix $Z\in\mathbb{R}^{d\times d}$ with for all $i,j,\ Z_{i,j}\sim\mathcal{N}(0,1)$ ; $Q,R=\mathrm{QR}(Z)$
    %   \STATE $\Lambda = \mathrm{diag}R/|\mathrm{diag}(R)|$, $P = Q\Lambda \sim \Lambda_O$
       \STATE Draw $P\sim \mathrm{Unif}(O_d(\mathbb{R}))=\lambda_O$
       \STATE $A=P\mathrm{diag}(\theta)P^T$
       \STATE $\forall i,j,\ \hat{X}_i^{\ell}=t^A(X_i)$, $\hat{Y}_j^\ell=t^A(Y_j)$
       \STATE Compute $W_p^p(\frac{1}{n}\sum_{i=1}^n \delta_{\hat{X}_i^\ell}, \frac{1}{m}\sum_{j=1}^m \delta_{\hat{Y}_j^\ell})$
       \ENDFOR
       \STATE Return $\frac{1}{L}\sum_{\ell=1}^L W_p^p(\frac{1}{n}\sum_{i=1}^n \delta_{\hat{X}_i^\ell}, \frac{1}{m}\sum_{j=1}^m \delta_{\hat{Y}_j^\ell})$
    \end{algorithmic}
    \vspace{-5pt}
\end{algorithm}

\begin{figure}[t]
    \centering
    \includegraphics[width=\columnwidth]{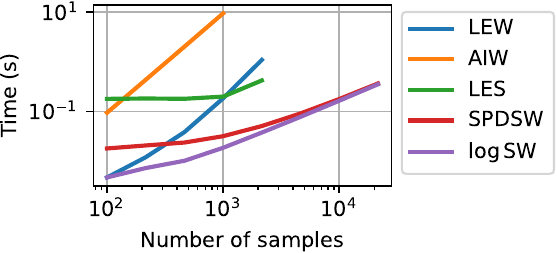}
    \caption{Runtime in log-log scale  of $\mathrm{SPDSW}$ and $\mathrm{\log SW}$ (200 proj., d=20) compared to alternatives based on Wasserstein between Wishart samples.
    Sliced discrepancies can scale to larger distributions in $S_d^{++}(\mathbb{R})$.
    }
    \label{fig:runtime}
    \vspace{-12pt}
\end{figure}

\section{From Brain Data to Distributions in $S_d^{++}(\mathbb{R})$}

M/EEG data consists of multivariate time series $X \in \mathbb{R}^{N_C \times T}$, with $N_C$ channels, and $T$ time samples.
A widely adopted model assumes that the measurements $X$ are linear combinations of $N_S$ sources $S \in \mathbb{R}^{N_S \times T}$ degraded by noise $N \in \mathbb{R}^{N_C \times T}$.
This leads to $X = AS + N$, where $A \in \mathbb{R}^{N_C \times N_S}$ is the forward linear operator \citep{hamalainen1993magnetoencephalography}.
A common practice in statistical learning on M/EEG data is to consider that the target is a function of the power of the sources, \emph{i.e.} $\mathbb{E}[SS^T]$ \citep{blankertz2007optimizing, dahne2014spoc, sabbagh2019manifold}.
In particular, a broad range of methods rely on second-order statistics of the measurements, \emph{i.e.} covariance matrices of the form $C = \frac{XX^T}{T}$, which are less costly and uncertain than solving the inverse problem to recover $S$ before training the model.
After proper rank reduction to turn the covariance estimates into SPD matrices \citep{harandi2017dimensionality}, and appropriate band-pass filtering to stick to specific physiological patterns \cite{blankertz2007optimizing}, Riemannian geometry becomes an appropriate tool to deal with such data.

In this section, we propose two applications of $\mathrm{SPDSW}$ to prediction tasks from M/EEG data.
More specifically, we introduce a new method to perform brain-age regression, building on the work of \citet{sabbagh2019manifold} and \citet{meunier2022distribution}, and another for domain adaptation in BCI.

\subsection{Distributions Regression for Brain-age Prediction}
\label{sec:brain_age}

\begin{figure*}[t]
    \includegraphics[width=\linewidth]{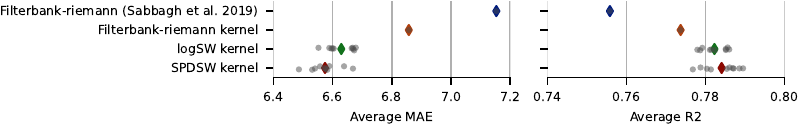}
    \caption{Average MAE and $R^2$ score for 10 random seeds on the Cam-CAN data-set with time-frames of 2s and 1000 projections.
    Kernel Ridge regression based on $\mathrm{SW}$ kernels performs best.
    $\mathrm{SPDSW}$ and $\log \mathrm{SW}$ are close to each other.
    Sampling from symmetric matrices offers a slight advantage but does not play a key role on performance.
    For information, Euclidean $\mathrm{SW}$ led to poor results on the task (MAE 9.7).
    }
    \vspace{-10pt}
    \label{fig:brain_age_average}
\end{figure*}

\looseness=-1 Learning to predict brain age from population-level neuroimaging data-sets can help characterize biological aging and disease severity \citep{spiegelhalter2016old, cole2017predicting, cole2018brain}.
Thus, this task has encountered more and more interest in the neuroscience community in recent years \citep{xifra2021estimating, peng2021accurate, engemann2022reusable}.
In particular, \citet{sabbagh2019manifold} take advantage of Riemannian geometry for feature engineering and prediction with the following steps.
First, one covariance estimate is computed per frequency band from each subject recording.
Then these covariance matrices are projected onto a lower dimensional space to make them full rank, for instance with a PCA.
Each newly obtained SPD matrix is projected onto the $\log$ space to obtain a feature after vectorization and aggregation among frequency bands.
Finally, a Ridge regression model predicts brain age.
This white-box method achieves state-of-the-art brain age prediction scores on MEG datasets like Cam-CAN \citep{taylor2017cambridge}.

\paragraph{MEG recordings as distributions of covariance matrices.}
Instead of modeling each frequency band by a unique covariance matrix, we propose to use a distribution of covariance matrices estimated from small time frames.
Concretely, given a time series $X \in \mathbb{R}^{N_C \times T}$ and a time-frame length $t < T$, a covariance matrix is estimated from each one of the $n = \lfloor \frac{T}{t} \rfloor$ chunks of signal available.
This process models each subject by as many empirical distributions of covariance estimates $(C_i)_{i=1}^n$ as there are frequency bands.
Then, all samples are projected on a lower dimensional space with a PCA, as done in \citet{sabbagh2019manifold}.
Here, we study whether modeling a subject by such distributions provides additional information compared to feature engineering based on a unique covariance matrix.
In order to perform brain age prediction from these distributions, we extend recent results on distribution regression with SW kernels \citep{kolouri2016sliced,meunier2022distribution} to SPD matrices, and show that $\mathrm{SPDSW}$ performs well on this prediction task while being easy to implement.

\paragraph{$\mathrm{SPDSW}$ kernels for distributions regression.}
As shown in \cref{sec:properties}, $\mathrm{SPDSW}$ is a well-defined distance on distributions in $S_d^{++}(\mathbb{R})$.
The most straightforward way to build a kernel from this distance is to resort to well-known Gaussian kernels, \emph{i.e.} $K(\mu, \nu) = e^{-\frac{1}{2\sigma^2}\mathrm{SPDSW}_2^2(\mu, \nu)}$.

However, this is not sufficient to make it a proper positive kernel.
Indeed, we need $\mathrm{SPDSW}$ to be a Hilbertian distance \citep{hein2005hilbertian}.
A pseudo-distance $d$ on $\mathcal{X}$ is Hilbertian if there exists a Hilbert space $\mathcal{H}$ and a feature map $\Phi : \mathcal{X} \rightarrow \mathcal{H}$ such that $ \forall x, y \in \mathcal{X}, d(x, y) = \|\Phi(x) - \Phi(y) \|_{\mathcal{H}}$.
We now extend \citet[Proposition 5]{meunier2022distribution} to the case of $\mathrm{SPDSW}$ in \cref{prop:hilbertian}.
\begin{restatable}{proposition}{hilbertianDistance}
    \label{prop:hilbertian}
    Let $m$ be the Lebesgue measure and let $\mathcal{H} = L^2([0,1] \times S_d(\mathbb{R}), m \otimes \lambda_S)$.
    We define $\Phi$ as
    \begin{equation}
        \begin{aligned}
        \Phi : \ &\mathcal{P}_2(S_d^{++}(\mathbb{R})) \rightarrow \mathcal{H}\\
        &\mu \mapsto \big( (q, A) \mapsto F^{-1}_{t^A_{\#}\mu}(q) \big) \enspace ,
        \end{aligned}
    \end{equation}
    where $F^{-1}_{t^A_{\#}\mu}$ is the quantile function of $t^A_{\#}\mu$.
    Then, $\mathrm{SPDSW}_2$ is Hilbertian and for all $\mu, \nu \in\mathcal{P}_2(S_d^{++}(\mathbb{R}))$,
    \begin{equation}
        \mathrm{SPDSW}_2^2(\mu, \nu) = \| \Phi(\mu) - \Phi(\nu) \|_{\mathcal{H}}^2 \enspace .
    \end{equation}
\end{restatable}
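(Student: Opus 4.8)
The plan is to reduce the claim to the one-dimensional quantile representation of the Wasserstein distance and then recognize the resulting double integral as a squared $L^2$ norm. Starting from \cref{def:swspd} with $p=2$, we have
\[
\mathrm{SPDSW}_2^2(\mu,\nu) = \int_{S_d(\mathbb{R})} W_2^2(t^A_\#\mu, t^A_\#\nu)\ \mathrm{d}\lambda_S(A) \enspace ,
\]
and each inner term is a Wasserstein distance between measures on $\mathbb{R}$. I would therefore invoke the one-dimensional quantile representation, $W_2^2(\alpha,\beta)=\int_0^1 |F_\alpha^{-1}(q)-F_\beta^{-1}(q)|^2\,\mathrm{d}q$ for $\alpha,\beta\in\mathcal{P}_2(\mathbb{R})$, applied to $\alpha = t^A_\#\mu$ and $\beta = t^A_\#\nu$. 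Substituting and rewriting the iterated integral against the product measure $m\otimes\lambda_S$ gives
\[
\mathrm{SPDSW}_2^2(\mu,\nu) = \int_{S_d(\mathbb{R})}\!\int_0^1 \big|F^{-1}_{t^A_\#\mu}(q) - F^{-1}_{t^A_\#\nu}(q)\big|^2\ \mathrm{d}q\,\mathrm{d}\lambda_S(A) = \|\Phi(\mu)-\Phi(\nu)\|_{\mathcal{H}}^2 \enspace ,
\]
which is exactly the asserted identity, provided $\Phi(\mu)$ and $\Phi(\nu)$ are known to lie in $\mathcal{H}$.

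Second, I would verify that $\Phi$ indeed maps into $\mathcal{H}=L^2([0,1]\times S_d(\mathbb{R}), m\otimes\lambda_S)$. For square-integrability, note that by \cref{prop:geodesic_coordinate} and Cauchy--Schwarz, $|t^A(M)| = |\langle A,\log M\rangle_F| \le \|A\|_F\,\|\log M\|_F = \|\log M\|_F$ since $\|A\|_F=1$. Using that the squared $L^2([0,1])$ norm of a quantile function equals the second moment of the underlying measure, for each fixed $A$,
\[
\int_0^1 |F^{-1}_{t^A_\#\mu}(q)|^2\ \mathrm{d}q = \int_{S_d^{++}(\mathbb{R})} |t^A(M)|^2\ \mathrm{d}\mu(M) \le \int_{S_d^{++}(\mathbb{R})} \|\log M\|_F^2\ \mathrm{d}\mu(M) \enspace ,
\]
which is finite under the (second Log-Euclidean) moment assumption on $\mu$; integrating this uniform bound against the probability measure $\lambda_S$ shows $\Phi(\mu)\in\mathcal{H}$, and symmetrically for $\nu$.

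The main obstacle I anticipate is not the algebra but the measurability of the joint map $(q,A)\mapsto F^{-1}_{t^A_\#\mu}(q)$, which is needed both to make sense of $\Phi(\mu)$ as an element of $L^2$ and to apply Tonelli's theorem when exchanging the order of integration. I would handle this by observing that for each fixed $M$ the map $A\mapsto t^A(M)=\mathrm{Tr}(A\log M)$ is continuous (indeed linear), so $A\mapsto t^A_\#\mu$ is weakly continuous, whence the cumulative distribution functions and hence their generalized inverses depend measurably on $A$; since the quantile function is also monotone in $q$, joint measurability follows from approximating the generalized inverse by the CDF. As the integrand is nonnegative, Tonelli then justifies the interchange of $\int_0^1$ and $\int_{S_d(\mathbb{R})}$ without extra hypotheses, completing the identification with the norm of $\mathcal{H}$. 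This mirrors \citet[Proposition 5]{meunier2022distribution}, the only new ingredients being the SPD-specific projection $t^A$ of \cref{prop:geodesic_coordinate} and the bound $|t^A(M)|\le\|\log M\|_F$.
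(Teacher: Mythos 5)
Your proof is correct and follows essentially the same route as the paper's: apply the one-dimensional quantile representation of $W_2$ inside the definition of $\mathrm{SPDSW}_2$ and recognize the resulting double integral as the squared norm in $\mathcal{H}$. The additional verifications you provide (square-integrability of $\Phi(\mu)$ via the bound $|t^A(M)|\le\|\log M\|_F$, and joint measurability to justify Tonelli) are sound technical details that the paper's proof leaves implicit.
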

The proof is similar to the one of \citet{meunier2022distribution} for $\mathrm{SW}$ in Euclidean spaces and highlights two key results.
The first one is that $\mathrm{SPDSW}$ extensions of Gaussian kernels are valid positive definite kernels, as opposed to what we would get with the Wasserstein distance \citep{meunier2022distribution}.
The second one is that we have access to an explicit and easy-to-compute feature map that preserves $\mathrm{SPDSW}$, making it possible to avoid inefficient quadratic algorithms on empirical distributions from very large data.
In practice, we rely on the finite-dimensional approximation of projected distributions quantile functions proposed in \citet{meunier2022distribution} to compute the kernels more efficiently with the $\ell_2$-norm.
Then, we leverage Kernel Ridge regression for prediction \citep{murphy2012machine}.
Let $0 < q_1 < \dots < q_M < 1$, and $(A_1, \dots, A_L) \in S_d(\mathbb{R})^L$.
The approximate feature map has a closed-form expression in the case of empirical distributions and is defined as
\begin{equation}
    \hat{\Phi}(\mu) = \left(\frac{1}{\sqrt{ML}}F^{-1}_{t^{A_i}_{\#}\mu}(q_j)\right)_{1 \le j \le M, 1 \le i \le L} \enspace .
\end{equation}

Regarding brain-age prediction, we model each couple of subject $s$ and frequency band $f$ as an empirical distribution $\mu_n^{s,f}$ of covariance estimates $(C_i)_{i=1}^n$.
Hence, our data-set consists of the set of distributions in $S_d^{++}(\mathbb{R})$
\begin{equation}
    \left(\mu_n^{s,f} = \frac{1}{n} \sum_{i=1}^n \delta_{C_i}\right)_{s, f} \enspace .
\end{equation}
First, we compute the associated features $( \hat{\Phi}(\mu_n^{s, f}))_{s,f}$ by loading the data and band-pass filtering the signal once per subject.
Then, as we are interested in comparing each subject in specific frequency bands, we compute one approximate kernel matrix per frequency $f$, as follows
\begin{equation}
    K^f_{i,j} = e^{-\frac{1}{2\sigma^2}\| \hat{\Phi}(\mu_n^{i, f}) -  \hat{\Phi}(\mu_n^{j, f})\|^2_2} \enspace .
\end{equation}
Finally, the kernel matrix obtained as a sum over frequency bands, \emph{i.e.} $K = \sum_f K^f$, is plugged into the Kernel Ridge regression of \texttt{scikit-learn} \citep{pedregosa2011scikit}.

\paragraph{Numerical results}

We demonstrate the ability of our algorithm to perform well on brain-age prediction on the largest publicly available MEG data-set Cam-CAN \cite{taylor2017cambridge}, which contains recordings from 646 subjects at rest.
We take advantage of the benchmark provided by \citet{engemann2022reusable} -- available online\footnote{\url{https://github.com/meeg-ml-benchmarks/brain-age-benchmark-paper}} and described in \cref{subsec:brain_age_prediction_details} -- to replicate the same pre-processing and prediction steps from the data, and thus produce a meaningful and fair comparison.

For each one of the seven frequency bands, we divide every subject time series into frames of fixed length.
We estimate covariance matrices from each timeframe with OAS \citep{chen2010shrinkage} and apply PCA for rank-reduction, as in \citet{sabbagh2019manifold}, to obtain SPD matrices of size $53 \times 53$.
This leads to distributions of 275 points per subject and per frequency band.
In \citet{sabbagh2019manifold}, the authors rely on Ridge regression on vectorized projections of SPD matrices on the tangent space.
We also provide a comparison to Kernel Ridge regression based on a kernel with the Log-Euclidean metric, \emph{i.e.} $K_{i,j}^{\log} = e^{-\frac{1}{2\sigma^2}\| \log C_i - \log C_j \|^2_F}$.

\cref{fig:brain_age_average} shows that $\mathrm{SPDSW}$ and $\log \mathrm{SW}$ (1000 projections, time-frames of 2s) perform best in average on 10-folds cross-validation for 10 random seeds, compared to the baseline with Ridge regression \citep{sabbagh2019manifold} and to Kernel Ridge regression based on the Log-Euclidean metric, with identical pre-processing.
We provide more details on scores for each fold on a single random seed in \cref{fig:brain_age_boxplot}.
In particular, it seems that evaluating the distance between distributions of covariance estimates instead of just the average covariance brings more information to the model in this brain-age prediction task, and allows to improve the score.
Moreover, while $\mathrm{SPDSW}$ gives the best results, $\mathrm{logSW}$ actually performs well compared to baseline methods.
Thus, both methods seem to be usable in practice, even though sampling symmetric matrices and taking into account the Riemannian geometry improves the performances compared to $\mathrm{logSW}$.
Also note that Log-Euclidean Kernel Ridge regression works better than the baseline method based on Ridge regression \citep{sabbagh2019manifold}. 
Then, \cref{fig:variance_swspd} in the appendix shows that $\mathrm{SPDSW}$ does not suffer from variance with more than 500 projections in this use case with matrices of size 53 $\times$ 53.
Finally, \cref{fig:meg_timeframes} shows that there is a trade-off to find between smaller time-frames for more samples per distribution and larger time-frames for less noise in the covariance estimates and that this is an important hyper-parameter of the model.

\begin{table*}[t]
    \centering
    % \vspace{-12pt}
    \caption{Accuracy and Runtime for Cross Session.}
    \small
    \resizebox{\linewidth}{!}{
        \begin{tabular}{ccccccccccccc}
             Subjects & Source & AISOTDA & & SPDSW & LogSW & LEW & LES & & SPDSW & LogSW & LEW & LES \\
             & & \citep{yair2019domain} & & \multicolumn{4}{c}{Transformations in $S_d^{++}(\mathbb{R})$} & & \multicolumn{4}{c}{Descent over particles}\\ \toprule
            1 & 82.21 & 80.90 & & 84.70 & 84.48 & 84.34 & 84.70 & & 85.20 & 85.20 & 77.94 & 82.92 \\
            3 & 79.85 & 87.86 & & 85.57 & 84.10 & 85.71 & 86.08 & & 87.11 & 86.37 & 82.42 & 81.47 \\
            7 & 72.20 & 82.29 & & 81.01 & 76.32 & 81.23 & 81.23 & & 81.81 & 81.73 & 79.06 & 73.29 \\
            8 & 79.34 & 83.25 & & 83.54 & 81.03 & 82.29 & 83.03 & & 84.13 & 83.32 & 80.07 & 85.02 \\
            9 & 75.76 & 80.25 & & 77.35 & 77.88 & 77.65 & 77.65 & & 80.30 & 79.02 & 76.14 & 70.45 \\
            \midrule 
            Avg. acc. & 77.87 & 82.93 & & 82.43 & 80.76 & 82.24 & 82.54 & & 83.71 & 83.12 & 79.13 & 78.63 \\
            %  & $\pm 3.92$ & $\pm 5.31$ & $\pm 9.62$ & & $\pm 3.31$ & $\pm 3.65$ & $\pm 3.11$ & $\pm 3.26$ & & $\pm 2.70$ & $\pm 2.90$ & $\pm 2.35$ & $\pm 2.58$ \\
            Avg. time (s) & - & - & & \textbf{4.34} & \textbf{4.32} & 11.41 & 12.04 & & \textbf{3.68} & \textbf{3.67} & 8.50 & 11.43 \\
            %  & - & - & & $\pm 0.10$ & $\pm 0.10$ & $\pm 0.75$ & $\pm 2.59$ & & $\pm 0.10$ & $\pm 0.10 $ & $\pm 0.66$ & $\pm $\\
            \bottomrule
        \end{tabular}
    }
    \vspace{-5pt}
    \label{tab:cross_session}
\end{table*}

\subsection{Domain Adaptation for BCI}

\looseness=-1 BCI consists of establishing a communication interface between the brain and an external device, in order to assist or repair sensory-motor functions \citep{daly2008brain, nicolas2012brain, wolpaw2013brain}.
The interface should be able to correctly interpret M/EEG signals and link them to actions that the subject would like to perform.
One challenge of BCI is that ML methods are generally not robust to the change of data domain, which means that an algorithm trained on a particular subject will not be able to generalize to other subjects.
Domain adaptation (DA) %\citep{daume2009frustratingly} \nc{not the best ref for DA, i would use}
\citep{bendavidDA} offers a solution to this problem by taking into account the distributional shift between source and target domains.
Classical DA techniques employed in BCI involve projecting target data on source data or vice versa, or learning a common embedding that erases the shift, sometimes with the help of optimal transport \citep{courty2016optimal}. % \nc{also here I would choose \cite{courty2016optimal}}.
As Riemannian geometry works well on BCI \citep{barachant2013classification,yger2016riemannian}, DA tools have been developed for SPD matrices \citep{yair2019domain, ju2022deep}.

\paragraph{$\mathrm{SPDSW}$ for domain adaptation on SPD matrices.}
We study two training frameworks on data from $\mathcal{P}(S_d^{++}(\mathbb{R}))$.
In the first case, a push forward operator $f_\theta$ is trained to change a distribution $\mu_S$ in the source domain into a distribution $\mu_T$ in the target domain by minimizing a loss of the form $L(\theta) = \mathcal{L}\big((f_\theta)_\#\mu_S, \mu_T\big)$, where $\mathcal{L}$ is a transport cost like Wasserstein on $\mathcal{P}(S_d^{++}(\mathbb{R}))$ or $\mathrm{SPDSW}$.
The model $f_\theta$ is a sequence of simple transformations in $S_d^{++}(\mathbb{R})$ \citep{rodrigues2018riemannian}, \emph{i.e.} $T_W(C) = W^T C W$ for $W \in S_d^{++}(\mathbb{R})$ (translations) or $W \in \mathrm{SO}_d$ (rotations), potentially combined to specific non-linearities \citep{huang2017riemannian}.
The advantage of such models is that they provide a high level of structure with a small number of parameters.

In the second case, we directly align the source on the target by minimizing $\mathcal{L}$ with a Riemannian gradient descent directly over the particles \citep{boumal2020introduction}, \emph{i.e.} by denoting $\mu_S ((x_i)_{i=1}^{|X_S|})=\frac{1}{|X_S|}\sum_{i=1}^{|X_S|}\delta_{x_i}$ with $X_S=\{x_i^S\}_i$ the samples of the source, we initialize at $(x_i^S)_{i=1}^{|X_S|}$ and minimize
$L((x_i)_{i=1}^{|X_S|}) = \mathcal{L}\left(\mu_S((x_i)_{i=1}^{|X_S|}), \mu_T\right)$.

We use \texttt{Geoopt} \citep{kochurov2020geoopt} and \texttt{Pytorch} \citep{paszke2017automatic} to optimize on manifolds.
Then, an SVM is trained on the vectorized projections of $X_S$ in the $\log$ space, \emph{i.e.} from couples $(\mathrm{vect}(\log x_i^S), y_i)_{i=1}^{|X_S|}$, and we evaluate the model on the target distribution.

\paragraph{Numerical results.}
In \cref{tab:cross_session}, we focus on cross-session classification for the BCI IV 2.a Competition dataset \citep{brunner2008bci} with 4 target classes and about 270 samples per subject and session.
We compare accuracies and runtimes for several methods run on a GPU Tesla V100-DGXS-32GB.
The distributions are aligned by minimizing different discrepancies, namely $\mathrm{SPDSW}$, $\mathrm{logSW}$, Log-Euclidean Wasserstein (LEW) and Sinkhorn (LES), computed with \texttt{POT}~\citep{flamary2021pot}.
Note that we did not tune hyper-parameters on each particular subject and discrepancy, but only used a grid search to train the SVM on the source data-set, and optimized each loss until convergence, \emph{i.e.} without early stopping.
We compare this approach to the naive one without DA, and to the barycentric OTDA \citep{courty2016optimal} with Affine-Invariant metric reported from \citet{yair2019domain}. % (AISOTDA) and Log-Euclidean metric (LESOTDA).
We provide further comparisons on cross-subject in \cref{appendix:da}.
Our results show that all discrepancies give equivalent accuracies.
As expected, $\mathrm{SPDSW}$ has an advantage in terms of computation time compared to other transport losses.
Moreover, transformations in $S_d^{++}(\mathbb{R})$ and descent over the particles work almost equally well in the case of $\mathrm{SPDSW}$.
% Moreover, the descent over the particles with Wasserstein tends to overfit, while transformations in $S_d^{++}(\mathbb{R})$ regularize the alignment.
We illustrate the alignment we obtain by minimizing $\mathrm{SPDSW}$ in \cref{fig:pca_acc_projs}, with a PCA for visualization purposes.
Additionally, \cref{fig:pca_acc_projs} shows that $\mathrm{SPDSW}$ does not need too many projections to reach optimal performance.
We provide more experimental details in \cref{sec:exp_details}.

\begin{figure}[t]
    \centering
    \includegraphics[width=0.45\columnwidth]{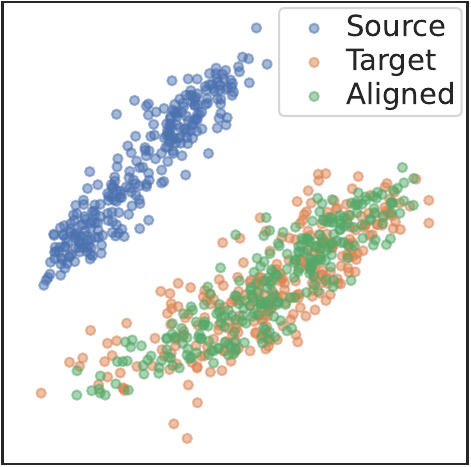}
    \includegraphics[width=0.47\columnwidth]{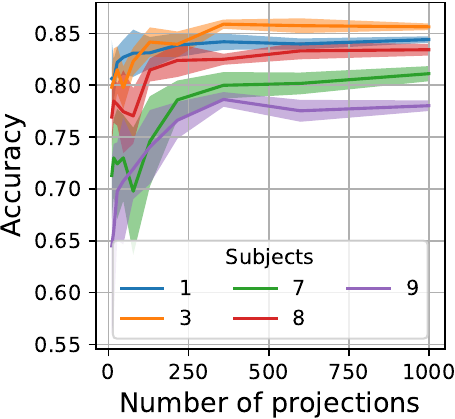}
    \caption{(\textbf{Left}) PCA on BCI data before and after alignment. Minimizing $\mathrm{SPDSW}$ with enough projections allows aligning sources on targets.
    (\textbf{Right}) Accuracy \emph{w.r.t} num. of projections for the cross-session task with transformations. Here, there is no need for too many projections to converge.}
    \label{fig:pca_acc_projs}
    \vspace{-15pt}
\end{figure}

\section{Conclusion}

\looseness=-1 We proposed $\mathrm{SPDSW}$, a discrepancy between distributions of SPD matrices with appealing properties such as being a distance and metrizing the weak convergence.
Being a Hilbertian metric, it can be plugged as is into Kernel methods, as we demonstrate for brain age prediction from MEG data.
Moreover, it is usable in loss functions dealing with distributions of SPD matrices, for instance in domain adaptation for BCI, with less computational complexity than its counterparts. Beyond M/EEG data, our discrepancy is of interest for any learning problem that involves distributions of SPD matrices, and we expect to see other applications of $\mathrm{SPDSW}$ in the future. One might also be interested in using other metrics on positive definite or semi-definite matrices such as the Bures-Wasserstein metric, with the additional challenges that this space is positively curved and not geodesically complete \citep{thanwerdas2021n}.

% \ifdefined\isaccepted 

\section*{Acknowledgements}

Clément Bonet, Nicolas Courty and Lucas Drumetz contributions were supported by project DynaLearn from Labex CominLabs and Région Bretagne ARED DLearnMe. Nicolas Courty is partially funded by the project OTTOPIA ANR-20-CHIA-0030 of the French National Research Agency (ANR).
Benoît Malézieux contributions were supported by grants from Digiteo France.

% \fi

% \section*{Acknowledgements}

% Clément Bonet, Nicolas Courty and Lucas Drumetz contributions were supported by project DynaLearn from Labex CominLabs and Région Bretagne ARED DLearnMe. Nicolas Courty is partially funded by the project OTTOPIA ANR-20-CHIA-0030 of the French National Research Agency (ANR).
% Benoît Malézieux contributions were supported by grants from Digiteo France.

\bibliography{references}
\bibliographystyle{icml2023}

%%%%%%%%%%%%%%%%%%%%%%%%%%%%%%%%%%%%%%%%%%%%%%%%%%%%%%%%%%%%%%%%%%%%%%%%%%%%%%%
%%%%%%%%%%%%%%%%%%%%%%%%%%%%%%%%%%%%%%%%%%%%%%%%%%%%%%%%%%%%%%%%%%%%%%%%%%%%%%%
% APPENDIX
%%%%%%%%%%%%%%%%%%%%%%%%%%%%%%%%%%%%%%%%%%%%%%%%%%%%%%%%%%%%%%%%%%%%%%%%%%%%%%%
%%%%%%%%%%%%%%%%%%%%%%%%%%%%%%%%%%%%%%%%%%%%%%%%%%%%%%%%%%%%%%%%%%%%%%%%%%%%%%%
\newpage
\appendix
\onecolumn

\setcounter{figure}{0}
\renewcommand{\thefigure}{\Alph{figure}}

\section{Complementary experiments}
\label{sec:comp_exp}

\subsection{Brain Age Prediction}

\paragraph{Performance of SPDSW-based brain age regression on 10-folds cross validation for one random seed.}
In \cref{fig:brain_age_boxplot}, we display the Mean Absolute Error (MAE) and the $R^2$ coefficient on 10-folds cross validation with one random seed.
$\mathrm{SPDSW}$ is run with time-frames of 2s and 1000 projections.

\begin{figure}[h!]
    \includegraphics[width=\linewidth]{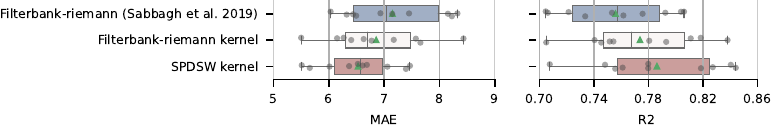}
    \caption{Results of 10-folds cross validation on the Cam-CAN data-set for one random seed.
    We display the Mean Absolute Error (MAE) and the $R^2$ coefficient.
    $\mathrm{SPDSW}$, with time-frames of 2s and 1000 projections, performs best.
    Note that Kernel Ridge regression based on the Log-Euclidean distance performs better than Ridge regression.}
    \label{fig:brain_age_boxplot}
\end{figure}

\paragraph{Performance of SPDSW-based brain age regression depending on number of projections.}
In \cref{fig:variance_swspd}, we display the MAE and $R^2$ score on brain age regression with different number of projections for 10 random seeds.
In this example, the variance and scores are acceptable for 500 projections and more. 

\begin{figure}[h!]
    \centering
    \includegraphics[width=\linewidth]{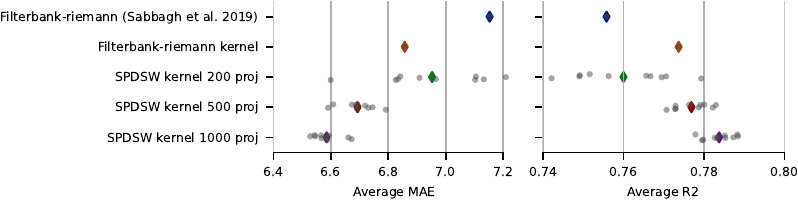}
    \caption{Average results for 10 random seeds with 200, 500 and 1000 projections for $\mathrm{SPDSW}$ compared to average MAE and $R^2$ obtained with Ridge and Kernel Ridge regression on features from covariance estimates \citep{sabbagh2019manifold}. With enough projections, $\mathrm{SPDSW}$ kernel does not suffer from variance and performs best.}
    \label{fig:variance_swspd}
\end{figure}

\paragraph{Performance of SPDSW-based brain age regression depending on timeframe length.}
In \cref{fig:meg_timeframes}, we display the MAE and $R^2$ score on brain age regression with different time-frame lengths for 10 random seeds.
The performance of $SPDSW$-kernel Ridge regression depends on a trade-off between the number of samples in each distribution (smaller time-frames for more samples), and the level of noise in the covariance estimate (larger time-frame for less noise).
In this example, time-frames of $400$ samples seems to be a good choice.

\begin{figure}[h!]
    \centering
    \includegraphics[width=\linewidth]{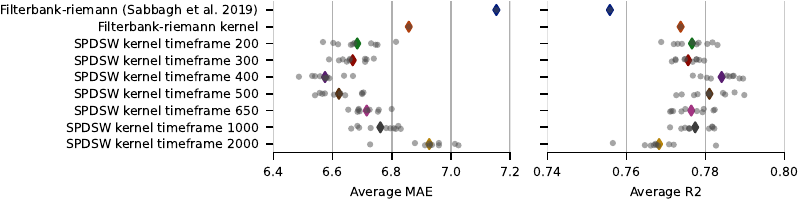}
    \caption{Average MAE and $R^2$ score on brain age regression with different time-frame lengths for 10 random seeds
    The performance depends on the time-frame length, and there is a trade-off to find between number of samples and noise in the samples.}
    \label{fig:meg_timeframes}
\end{figure}

\subsection{Domain Adaptation for BCI} \label{appendix:da}

\paragraph{Alignement.} We plot on \cref{fig:classes_cross_session} the classes of the target session (circles) and of the source session after alignment (crosses) on each subject. We observe that the classes seem to be well aligned, which explains why simple transformations work on this data-set. Hence, minimizing a discrepancy allows to align the classes even without taking them into account in the loss. More complicated data-sets might require to take into account the classes for the alignment.

\begin{figure}[h!]
    \centering
    \includegraphics[width=\columnwidth]{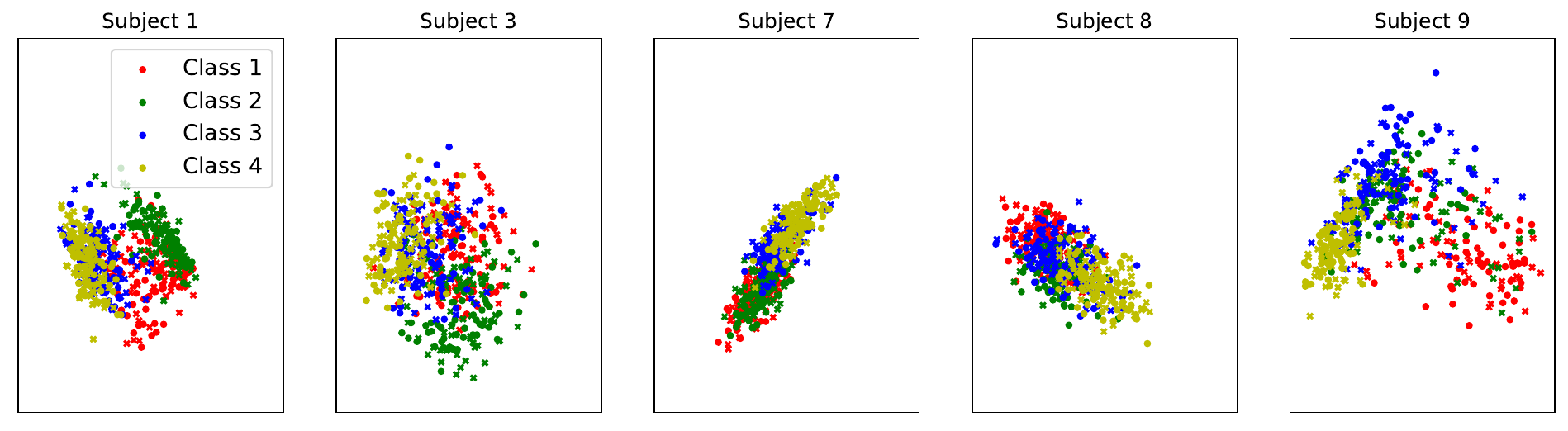}
    \caption{PCA representation on BCI data. Circles represent points from the target session and crosses points from the source after alignment.
    }
    \label{fig:classes_cross_session}
\end{figure}

\paragraph{Cross Subject Task.}

In \cref{tab:cross_subject}, we add the results obtained on the cross subject task. On the column ``subjects'', we denote the source subject, and we report in the table the mean of the accuracies obtained over all other subjects as targets.
The results for AISTODA are taken from \citet[Table 1.b, Alg.1 (u)]{yair2019domain}.
The preprocessing and hyperparameters might not be the same as in our setting.

\begin{table*}[t]
    \centering
    % \vspace{-12pt}
    \caption{Accuracy and Runtime for Cross Subject.}
    \small
    \resizebox{\linewidth}{!}{
        \begin{tabular}{ccccccccccccc}
             Subjects & Source & AISOTDA & & SPDSW & LogSW & LEW & LES & & SPDSW & LogSW & LEW & LES \\
             & & \citep{yair2019domain} & & \multicolumn{4}{c}{Transformations in $S_d^{++}(\mathbb{R})$} & & \multicolumn{4}{c}{Descent over particles}\\ \toprule
            1 & 42.09 & 62.94 & & 61.91 & 60.50 & 62.89 & 63.64 & & 62.56 & 61.91 & 62.84 & 63.25 \\
            3 & 35.62 & 71.01 & & 66.40 & 66.53 & 66.34 & 66.30 & & 65.74 & 64.96 & 60.27 & 62.29 \\
            7 & 39.52 & 63.98 & & 60.42 & 57.29 & 60.89 & 60.43 & & 60.97 & 58.49 & 53.18 & 59.52 \\
            8 & 42.90 & 66.06 & & 61.09 & 60.19 & 61.29 & 62.14 & & 60.95 & 60.00 & 61.68 & 61.77 \\
            9 & 29.94 & 59.18 & & 53.31 & 50.63 & 54.79 & 54.89 & & 58.72 & 54.91 & 58.22 & 64.90\\
            \midrule 
            Avg. acc. & 38.01 & 64.43 & & 60.63 & 59.03 & 61.24 & 61.48 & & 61.79 & 60.05 & 59.24 & 62.55 \\
            %  & $\pm 5.33$ & $\pm 3.06$ & $\pm 2.70$ & & $\pm 4.71$ & $\pm 5.78$ & $\pm 4.20$ & $\pm 4.23$ & & $\pm 2.59$ & $\pm 3.75$ & $\pm 3.78$ & $\pm 3.30$ \\
            Avg. time & - & - & & \textbf{4.34} & \textbf{4.31} & 11.76 & 11.21 & & \textbf{3.67} & \textbf{3.64} & 9.54 & 10.32 \\
            %  & - & - & & $\pm 0.10$ & $\pm 0.09$ & $\pm 0.20$ & $\pm 2.26$ & & $\pm 0.11$ & $\pm 0.11$ & $\pm 0.63$ & $\pm 22.51$ \\
            \bottomrule
        \end{tabular}
    }
    % \vspace{-15pt}
    \label{tab:cross_subject}
\end{table*}

We add on Table \ref{tab:details_cross_subjects} %Tables \ref{tab:cross_subject_src}, \ref{tab:cross_subject_particles_spdsw}, \ref{tab:cross_subject_particles_lew}, \ref{tab:cross_subject_aisotda}, \ref{tab:cross_subject_transfs_spdsw}, \ref{tab:cross_subject_transfs_lew}
the detailed accuracies between subjects (with on the rows the Table, and on the columns the targets) for SPDSW, LEW, %AISOTDA 
and when applying the classifier on the source.

\begin{figure}[H]
    \centering
    \captionof{table}{Accuracy between subjects. The row denote the source and the columns the targets.}
    \label{tab:details_cross_subjects}
    \begin{minipage}{0.32\linewidth}
        \centering
        % \vspace{-12pt}
        \captionof{table}{Source.}
        \small
        \resizebox{\columnwidth}{!}{
            \begin{tabular}{cccccc}
                & 1 & 3 & 7 & 8 & 9    \\ 
                \toprule
                1 & - & 52.22 & 50.55 & 39.02 & 26.58 \\
                3 & 34.43 & - & 30.10 & 49.62 & 27.43 \\
                7 & 52.01 & 53.33 & - & 26.14 & 26.58 \\
                8 & 49.82 & 57.78 & 24.35 & 0 & 39.66 \\
                9 & 26.74 & 28.52 & 24.72 & 39.39 & - \\
                \bottomrule
            \end{tabular}
        }
        % \vspace{-15pt}
        \label{tab:cross_subject_src}
    \end{minipage}
    \hfill
    \begin{minipage}{0.32\linewidth}
        \centering
        % \vspace{-12pt}
        \captionof{table}{Particles + $\mathrm{SPDSW}$.}
        \small
        \resizebox{\columnwidth}{!}{
            \begin{tabular}{cccccc}
                & 1 & 3 & 7 & 8 & 9    \\ 
                \toprule
                1 & - & 69.04 & 60.89 & 68.18 & 52.15 \\
                3 & 66.23 & - & 70.18 & 70.83 & 55.70 \\
                7 & 58.02 & 71.04 & - & 61.82 & 53.00 \\
                8 & 57.73 & 70.44 & 58.16 & - & 57.47 \\
                9 & 55.24 & 61.85 & 52.10 & 65.68 & - \\
                \bottomrule
            \end{tabular}
        }
        % \vspace{-15pt}
        \label{tab:cross_subject_particles_spdsw}
    \end{minipage}
    \hfill
    \begin{minipage}{0.32\linewidth}
        \centering
        % \vspace{-12pt}
        \captionof{table}{Particles + LEW.}
        \small
        \resizebox{\columnwidth}{!}{
            \begin{tabular}{cccccc}
                & 1 & 3 & 7 & 8 & 9    \\ 
                \toprule
                1 & - & 72.59 & 55.42 & 69.32 & 54.01 \\
                3 & 63.37 & - & 61.99 & 62.12 & 53.59 \\
                7 & 50.18 & 62.96 & - & 48.11 & 51.48 \\
                8 & 61.54 & 74.07 & 53.87 & - & 57.22 \\
                9 & 48.35 & 63.33 & 57.20 & 64.02 & - \\
                \bottomrule
            \end{tabular}
        }
        % \vspace{-15pt}
        \label{tab:cross_subject_particles_lew}
    \end{minipage}
    
    \vspace{15pt}
    
    \begin{minipage}{0.3\linewidth}
        \centering
        % \vspace{-12pt}
        % \captionof{table}{}
        \small
        \resizebox{\columnwidth}{!}{
            \begin{tabular}{cccccc}
            \end{tabular}
        }
        % \vspace{-15pt}
        \label{tab:cross_subject_aisotda}
    \end{minipage}
    \hfill
    \begin{minipage}{0.3\linewidth}
        \centering
        % \vspace{-12pt}
        \captionof{table}{Transf. + $\mathrm{SPDSW}$.}
        \small
        \resizebox{\columnwidth}{!}{
            \begin{tabular}{cccccc}
                & 1 & 3 & 7 & 8 & 9    \\ 
                \toprule
                1 & - & 68.00 & 59.04 & 68.79 & 51.81 \\
                3 & 68.42 & - & 71.07 & 69.24 & 56.88 \\
                7 & 57.66 & 69.78 & - & 60.83 & 53.42 \\
                8 & 62.71 & 72.07 & 53.87 & - & 55.70 \\
                9 & 53.92 & 59.04 & 40.15 & 60.15 & - \\
                \bottomrule
            \end{tabular}
        }
        % \vspace{-15pt}
        \label{tab:cross_subject_transfs_spdsw}
    \end{minipage}
    \hfill
    \begin{minipage}{0.3\linewidth}
        \centering
        % \vspace{-12pt}
        \captionof{table}{Transf. + LEW.}
        \small
        \resizebox{\columnwidth}{!}{
            \begin{tabular}{cccccc}
                & 1 & 3 & 7 & 8 & 9    \\ 
                \toprule
                1 & - & 70.00 & 59.78 & 68.18 & 53.59 \\
                3 & 69.60 & - & 71.59 & 69.32 & 54.85 \\
                7 & 57.88 & 73.37 & - & 61.74 & 53.59 \\
                8 & 63.00 & 72.22 & 54.24 & - & 55.70 \\
                9 & 55.31 & 60.00 & 39.48 & 64.02 & - \\
                \bottomrule
            \end{tabular}
        }
        % \vspace{-15pt}
        \label{tab:cross_subject_transfs_lew}
    \end{minipage}
\end{figure}

\paragraph{Evolution of the accuracy \emph{w.r.t} the number of projections.}

On \cref{fig:pca_acc_projs}, we plot the evolution of the accuracy obtained by learning transformations on $S_d^{++}(\mathbb{R})$ on the cross session task. We report on Figure \ref{fig:acc_projs} the plot for the other cases. We compared the results for $L\in \{10, 16, 27, 46, 77, 129, 215, 359, 599, 1000\}$ projections, which are evenly spaced in log scale. Other parameters are the same as in \cref{tab:cross_session} and are detailed in \cref{sec:exp_details_bci}. The results were averaged over 10 runs, and we report the standard deviation.

\begin{figure}[t]
    \centering
    \hspace*{\fill}
    \subfloat[Transformations on cross-subjects.]{\label{fig:acc_projs_transfs_subject}\includegraphics[width=0.3\columnwidth]{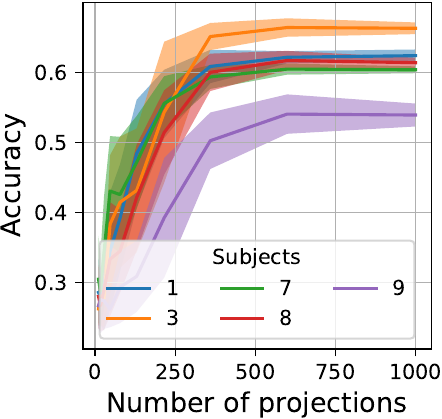}} \hfill
    \subfloat[Particles on cross-session.]{\label{fig:acc_projs_particles_session}\includegraphics[width=0.3\columnwidth]{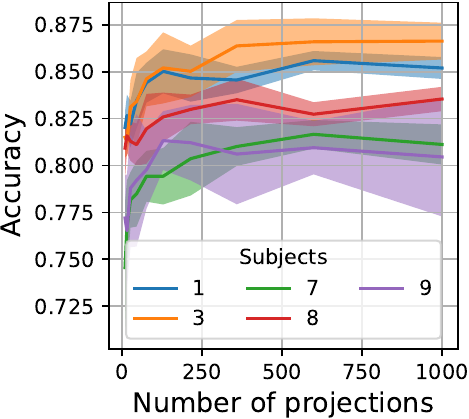}} \hfill
    \subfloat[Particles on cross-subject.]{\label{fig:acc_projs_particles_subject}\includegraphics[width=0.3\columnwidth]{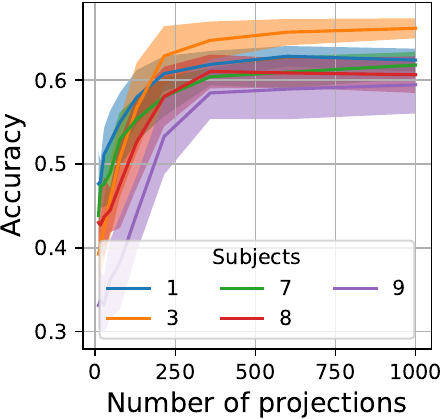}} \hfill
    \hspace*{\fill}
    \caption{Accuracy \emph{w.r.t} the number of projections when optimizing over particles or transformations, and for the cross-session task and cross subject task.
    In all cases, the accuracy converge for 500 projections.}
    \label{fig:acc_projs}
\end{figure}

\subsection{Illustrations}

\paragraph{Sample Complexity.} We illustrate \cref{prop:sample_complexity} in \cref{fig:sample_complexity} by plotting $\mathrm{SPDSW}$ and the Wasserstein distance with Log-Euclidean ground cost (LEW) between samples drawn from the same Wishart distribution, for $d=2$ and $d=50$. $\mathrm{SPDSW}$ is computed with $L=1000$ projections. We observe that $\mathrm{SPDSW}$ converges with the same speed in both dimensions while LEW converges slower in dimension 50.

% \begin{figure}[h]
%     \centering
%     \includegraphics[width=0.5\columnwidth]{figures/figure_sample_complexity.pdf}
%     \caption{Sample complexity of $\mathrm{SPDSW}$ and the Wasserstein distance with Log-Euclidean ground cost for $d=2$ and $d=50$. Experiments are replicated 100 times and we report the 95\% confidence intervals.
%     }
%     \label{fig:sample_complexity}
% \end{figure}

\paragraph{Projection Complexity.} We illustrate \cref{prop:projection_complexity} on \cref{fig:proj_complexity} by plotting the absolute error between $\widehat{\mathrm{SPDSW}}_{2,L}^2$ and $\widehat{\mathrm{SPDSW}}_{2,L^*}^2$. We fix $L^*$ at 10000 which gives a good idea of the true value of $\mathrm{SPDSW}$ and we vary $L$ between $1$ and $10^3$ evenly in log scale. We average the results over 100 runs and plot 95\% confidence intervals. We observe that the Monte-Carlo error converges to 0 with a convergence rate of $O(\frac{1}{\sqrt{L}})$.

% \begin{figure}[h]
%     \centering
%     \includegraphics[width=0.5\columnwidth]{figures/figure_projection_complexity.pdf}
%     \caption{Projection complexity of $\mathrm{SPDSW}$ and the $\mathrm{logSW}$ for $d=2$ and $d=20$. Experiments are replicated 100 times and we report the 95\% confidence intervals.
%     }
%     \label{fig:proj_complexity}
% \end{figure}

\begin{figure}[H]
    \centering
    \hspace*{\fill}
    \subfloat[Sample complexity of $D=\mathrm{SPDSW}$ and $D=\mathrm{LEW}$ for $d=2$ and $d=50$.]{\label{fig:sample_complexity}\includegraphics[width=0.48\columnwidth]{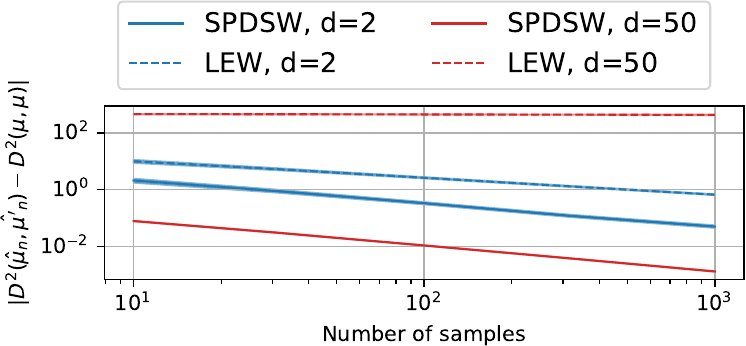}} \hfill
    \subfloat[Projection complexity of $\mathrm{SPDSW}$ and the $\mathrm{logSW}$ for $d=2$ and $d=20$.]{\label{fig:proj_complexity}\includegraphics[width=0.48\columnwidth]{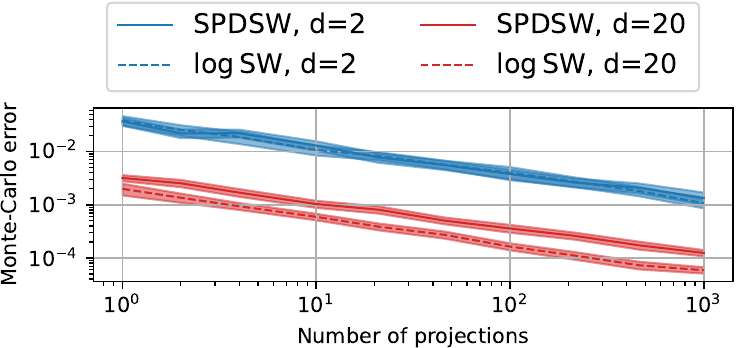}} \hfill
    \hspace*{\fill}
    \caption{Sample and projection complexity. Experiments are replicated 100 times and we report the 95\% confidence intervals. We note $\hat{\mu}_n$ and $\hat{\mu}'_n$ two different empirical distributions of $\mu$. The sample complexity of $\mathrm{SPDSW}$ does not depend on the dimension contrary to Wasserstein. The projections complexity has a slope which decreases in $O(\frac{1}{\sqrt{L}})$.}
    \label{fig:sample_proj_complexities}
\end{figure}

\section{Experimental details}
\label{sec:exp_details}

\subsection{Runtime}

In \cref{fig:runtime}, we plot the runtime \emph{w.r.t} the number of samples for different OT discrepancies. Namely, we compare $\mathrm{SPDSW}$, $\mathrm{\log SW}$, the Wasserstein distance with Affine-Invariant ground cost, the Wasserstein distance with Log-Euclidean ground cost, and the Sinkhorn algorithm used to compute the entropic regularized OT problem with Log-Euclidean ground cost. The distance ground costs are computed with \texttt{geoopt} \citep{kochurov2020geoopt} while Wasserstein and Sinkhorn are computed with \texttt{POT} \citep{flamary2021pot}. All computations are done on a A6000 GPU. We average the results over 20 runs and for $n\in\{100,215,464,1000,2154,4641,10000,21544,46415,100000\}$ samples, which are evenly spaced in log scale, from a Wishart distribution in dimension $d=20$. For the sliced methods, we fix $L=200$ projections. For the Sinkhorn algorithm, we use a stopping threshold of $10^{-10}$ with maximum $10^5$ iterations and a regularization parameter of $\epsilon = 1$.

\subsection{Brain Age Prediction}
\label{subsec:brain_age_prediction_details}

We reuse the code for preprocessing steps and benchmarking procedure described in \citet{engemann2022reusable} for the CamCAN data-set, and available at \url{https://github.com/meeg-ml-benchmarks/brain-age-benchmark-paper}, which we recall here.

The data consist of measurements from 102 magnetometers and 204 gradiometers.
First, we apply a band-pass filtering between 0.1Hz and 49Hz.
Then, the signal is subsampled with a decimation factor of 5, leading to a sample frequency of 200Hz.
Then, we apply the temporal signal-space-separation (tSSS).
Default settings were applied for the harmonic decomposition (8 components of the internal sources, 3 for the external sources) on a 10-s sliding window.
To discard segments for which inner and outer signal components were poorly distinguishable, we applied a correlation threshold of 98\%.

For analysis, the band frequencies used are the following: (0.1Hz, 1Hz), (1Hz, 4Hz), (4Hz, 8Hz), (8Hz, 15Hz), (15Hz, 26Hz), (26Hz, 35Hz), (35Hz, 49Hz).
The rank of the covariance matrices obtained after OAS is reduced to 53 with a PCA, which leads to the best score on this problem as mentioned in \citet{sabbagh2020predictive}.

The code for the MEG experiments is essentially based on the work by \citet{engemann2022reusable}, the class \texttt{SPDSW} available in the supplementary material, and the Kernel Ridge Regression of \texttt{scikit-learn}.
The full version will be added later in order to respect anonymity.

\subsection{Domain Adaptation for BCI} \label{sec:exp_details_bci}

For both the optimization over particles and over transformations, we use \texttt{geoopt} \citep{kochurov2020geoopt} with the Riemannian gradient descent. We now detail the hyperparameters and the procedure.

First, the data from the BCI Competition IV 2a are preprocessed using the code from \citet{hersche2018fast} available at \url{https://github.com/MultiScale-BCI/IV-2a}. We applied a band-pass filter between 8 and 30 Hz. With these hyper-parameters, we get one regularized covariance matrix per subject.

For all experiments, we report the results averaged over 5 runs. For the sliced discrepancies, we always use $L=500$ projections which we draw only once at the beginning. When optimizing over particles, we used a learning rate of $1000$ for the sliced methods and of $10$ for Wasserstein and Sinkhorn. The number of epochs was fixed at 500 for the cross-session task and for the cross-subject tasks. For the basic transformations, we always use 500 epochs and we choose a learning rate of $1e^{-1}$ on cross session and $5e^{-1}$ on cross subject for sliced methods, and of $1e^{-2}$ for Wasserstein and Sinkhorn. For the Sinkhorn algorithm, we use $\epsilon=10$ with the default hyperparameters from the \texttt{POT} implementation. Moreover, we only use one translation and rotation for the transformation.

Furthermore, the results reported for AISOTDA in \cref{tab:cross_session} and \cref{tab:cross_subject} are taken from \citet{yair2019domain} (Table 1.a, column Alg.1 (u)). We note however that they may not have used the same preprocessing and hyperparameters to load the covariance matrices.

\section{Proofs}
\label{sec:proofs}

\projGeodesic*

\begin{proof}
    Let $M \in S_d^{++}(\mathbb{R})$. We want to solve 
    \begin{equation}
        P^{\mathcal{G}_A}(M) = \argmin_{X \in \mathcal{G}_A}\ d_{LE}(X, M)^2\enspace .
    \end{equation}
    In the case of the Log-Euclidean metric, $\mathcal{G}_A = \{\exp(tA),\ t\in\mathbb{R}\}$.
    We have
    \begin{equation}
        \begin{aligned}
            d_{LE}(\exp(tA), M)^2 &= \|\log \exp(tA) - \log M\|_F^2\\
            &= \|tA - \log M \|_F^2\\
            &= t^2 \mathrm{Tr}(A^2) + \mathrm{Tr}(\log (M)^2) -2t\mathrm{Tr}(A\log M)\\
            &= g(t) \enspace .
        \end{aligned}
    \end{equation}
    Hence
    \begin{equation}
        g'(t) = 0 \iff t = \frac{\mathrm{Tr}(A\log M)}{\mathrm{Tr}(A^2)} \enspace .
    \end{equation}
    Therefore
    \begin{equation}
        P^{\mathcal{G}_A}(M) = \exp\left(\frac{\mathrm{Tr}(A\log M)}{\mathrm{Tr}(A^2)} A\right) = \exp\left(\mathrm{Tr}(A\log M) A\right) \enspace ,
    \end{equation}
    since $\|A\|_F^2 = \mathrm{Tr}(A^2) = 1$.
\end{proof}

\coordinateGeodesic*

\begin{proof}
    First, we give an orientation to the geodesic. This can be done by taking the sign of the inner product between $\log(P^{\mathcal{G}_A}(M))$ and $A$.
    \begin{equation}
        \begin{aligned}
            t^A(M) &= \mathrm{sign}(\langle A, \log(P^{\mathcal{G}_A}(M))\rangle_F) d\big(P^A(M), I\big) \\
            &= \mathrm{sign}(\langle A, \log(P^{\mathcal{G}_A}(M))\rangle_F) d\left(\exp\left( \mathrm{Tr}(A\log M) A\right), I\right) \\
            &= \mathrm{sign}(\langle A, \langle A, \log M\rangle_F A\rangle_F)  \|\langle A\log M\rangle_F A - \log I \|_F \\
            &= \mathrm{sign}(\langle A, \log M\rangle_F) |\langle A, \log M\rangle_F| \\
            &= \langle A, \log M\rangle_F \\
            &= \mathrm{Tr}(A\log M)\enspace .
        \end{aligned}
    \end{equation}
\end{proof}

There are actually two possible ways to find coordinates on geodesically complete Riemannian manifolds \citep{bonet2022hyperbolic}. The first one is to take the geodesic projection as previously done. A second solution is to use Busemann coordinates \citep{bridson2013metric, chami2021horopca}.

\begin{definition}
    Let $\gamma$ be a geodesic ray on a geodesically complete Riemannian manifold $\mathcal{M}$, \emph{i.e.} for all $s,t\ge 0$, $d(\gamma(s),\gamma(t)) = |t-s|$. Then, the Busemann function $B_\gamma$ associated to $\gamma$ is defined as, for all $x\in \mathcal{M}$,
    \begin{equation}
        B_\gamma(x) = \lim_{t\to\infty}\ \big(d(x,\gamma(t))-t\big)\enspace .
    \end{equation}
\end{definition}

This function allows to derive coordinates on geodesically complete geodesics. For example, on $\mathbb{R}^d$, it actually coincides with the geodesic projection as it can be shown that, for $\theta\in S^{d-1}$,
\begin{equation}
    \forall x\in \mathbb{R}^d,\ B_{\mathrm{span}(\theta)}(x) = -\langle x,\theta\rangle\enspace . 
\end{equation}

In \cref{prop:busemann_coords}, we derive a closed-form for the Busemann function associated to a geodesic ray on $S_d^{++}(\mathbb{R})$ passing through the identity.

\begin{proposition}[Busemann coordinates] \label{prop:busemann_coords}
    Let $A\in S_d(\mathbb{R})$ such that $\|A\|_F=1$, and let $\mathcal{G}_A$ be the associated geodesic line. Then, the Busemann function associated to $\mathcal{G}_A$ is defined as 
    \begin{equation}
        \forall M \in S_d^{++}(\mathbb{R}),\ B^A(M) = - \mathrm{Tr}(A\log M)\enspace .
    \end{equation}
\end{proposition}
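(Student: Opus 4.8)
The plan is to reduce the computation to a one-dimensional Euclidean limit by exploiting that $\log$ is an isometry from $S_d^{++}(\mathbb{R})$ equipped with the Log-Euclidean metric onto $S_d(\mathbb{R})$ equipped with the Frobenius norm. First I would identify the unit-speed geodesic ray generating $\mathcal{G}_A$: taking $\gamma(t) = \exp(tA)$ for $t\ge 0$, one checks directly that $d_{LE}(\gamma(s),\gamma(t)) = \|sA - tA\|_F = |s-t|\,\|A\|_F = |s-t|$, using $\|A\|_F=1$, so $\gamma$ is indeed a geodesic ray as required by the definition of the Busemann function.

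Next I would substitute into the definition. Writing $L = \log M$, we have $d_{LE}(M,\gamma(t)) = \|\log M - tA\|_F = \|L - tA\|_F$, and expanding the squared Frobenius norm gives $\|L - tA\|_F^2 = t^2 - 2t\langle L,A\rangle_F + \|L\|_F^2$, where again $\|A\|_F^2 = 1$ is used. The Busemann function is then the limit $B^A(M) = \lim_{t\to\infty}\big(\sqrt{t^2 - 2t\langle L,A\rangle_F + \|L\|_F^2} - t\big)$.

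To evaluate this limit I would rationalize by multiplying and dividing by the conjugate $\sqrt{t^2 - 2t\langle L,A\rangle_F + \|L\|_F^2} + t$; the numerator collapses to $-2t\langle L,A\rangle_F + \|L\|_F^2$ while the denominator is asymptotic to $2t$, so the limit equals $-\langle L,A\rangle_F = -\langle \log M, A\rangle_F = -\mathrm{Tr}(A\log M)$, which is the claimed formula.

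There is no genuinely hard step here: the whole argument is the standard computation of the Euclidean Busemann function transported through the $\log$ isometry. The only points requiring care are checking that $\gamma$ is unit-speed, so that the subtraction of $t$ in the definition is the correct normalization, and handling the limit cleanly, for which rationalization is more transparent than an asymptotic expansion. As a consistency check, I would note that $B^A(M) = -\mathrm{Tr}(A\log M) = -t^A(M)$, matching \cref{prop:geodesic_coordinate} and confirming the claim in the text that the Busemann and geodesic-projection coordinates coincide on this space.
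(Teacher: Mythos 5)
Your proposal is correct and follows essentially the same route as the paper: both identify the ray $\gamma(t)=\exp(tA)$, use $d_{LE}(\gamma(t),M)=\|tA-\log M\|_F$, expand the squared Frobenius norm, and evaluate the limit via the conjugate/rationalization identity (the paper writes this as $\lim_{t\to\infty}\big(d_{LE}(\gamma_A(t),M)-t\big)=\lim_{t\to\infty}\frac{d_{LE}(\gamma_A(t),M)^2-t^2}{2t}$, which is the same trick). Your explicit check that $\gamma$ is unit-speed is a small, welcome addition, but it does not change the argument.
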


\begin{proof}
    First, following \citep{bridson2013metric}, we have for all $M\in S_d^{++}(\mathbb{R}),$
    \begin{equation}
        B^A(M) = \lim_{t\to\infty}\ \big(d_{LE}(\gamma_A(t),M) - t \big) = \lim_{t\to\infty} \frac{d_{LE}(\gamma_A(t), M)^2 - t^2}{2t} \enspace ,
    \end{equation}
    denoting $\gamma_A:t\mapsto \exp(tA)$ is the geodesic line associated to $\mathcal{G}_A$. Then, we get
    \begin{equation}
        \begin{aligned}
            \frac{d_{LE}(\gamma_A(t), M)^2 - t^2}{2t} &= \frac{1}{2t}\left( \|\log \gamma_A(t) - \log M\|_F^2 - t^2\right)\\ 
            &= \frac{1}{2t}\left(\|tA-\log M\|_F^2 - t^2\right) \\
            &= \frac{1}{2t} \left(t^2 \|A\|_F^2 + \|\log M\|_F^2 - 2t\langle A, \log M\rangle_F - t^2\right) \\
            &= -\langle A,\log M\rangle_F + \frac{1}{2t}\|\log M\|_F^2 \enspace ,
        \end{aligned}
    \end{equation}
    using that $\|A\|_F = 1$. Then, by passing to the limit $t\to \infty$, we find
    \begin{equation}
        B^A(t) = -\langle A,\log M\rangle_F = -\mathrm{Tr}(A\log M)\enspace .
    \end{equation}
\end{proof}

We actually find that the Busemann coordinates are equal to the geodesic coordinates obtained in \cref{prop:geodesic_coordinate} up to the direction of the geodesic. We also show in \cref{prop:busemann_proj} that both projections on the geodesic coincide.

\begin{proposition}[Busemann projections] \label{prop:busemann_proj}
    Let $A\in S_d(\mathbb{R})$ with $\|A\|_F=1$ and let $\mathcal{G}_A$ the geodesic line associated. Then, for any $M\in S_d^{++}(\mathbb{R})$, the Busemann projection on $\mathcal{G}_A$ is 
    \begin{equation}
        P^A(M) = \exp\big(\mathrm{Tr}(A\log M)A\big)\enspace .
    \end{equation}
\end{proposition}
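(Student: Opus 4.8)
The plan is to read off the Busemann projection directly from the closed form of the Busemann function established in \cref{prop:busemann_coords}, and to exploit the fact that the natural notion of projection attached to a Busemann function is projection along its level sets (horospheres). Concretely, the Busemann projection of $M$ onto $\mathcal{G}_A$ is the point $\gamma_A(s)$ of the geodesic lying on the same horosphere as $M$, i.e. the unique $s\in\mathbb{R}$ with $B^A(\gamma_A(s)) = B^A(M)$, where $\gamma_A:t\mapsto\exp(tA)$ is the unit-speed geodesic through $I_d$ in direction $A$. So the first step is to understand how $B^A$ behaves along its own geodesic.

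To do this I would substitute $\gamma_A(s)=\exp(sA)$ into the expression $B^A(M)=-\mathrm{Tr}(A\log M)$ of \cref{prop:busemann_coords}. Since $\log\exp(sA)=sA$, this gives $B^A(\gamma_A(s)) = -\mathrm{Tr}(A\cdot sA) = -s\,\mathrm{Tr}(A^2) = -s\|A\|_F^2 = -s$, using $\|A\|_F=1$. This recovers the general fact $B_\gamma(\gamma(s))=-s$ for a unit-speed ray and shows $B^A$ decreases at unit rate along $\gamma_A$, so that the level-set equation above has a unique solution.

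Solving $-s = B^A(M) = -\mathrm{Tr}(A\log M)$ then yields $s=\mathrm{Tr}(A\log M)$, and hence $P^A(M)=\gamma_A(\mathrm{Tr}(A\log M))=\exp(\mathrm{Tr}(A\log M)\,A)$, which is exactly the formula obtained for the geodesic projection in \cref{prop:geodesic_projection}; the two projections therefore coincide, as claimed. I do not expect a genuine obstacle here: once the closed form of $B^A$ is in hand, the argument is essentially a one-line substitution. The only point deserving (trivial) care is the orientation/sign bookkeeping, namely checking that the Busemann coordinate $-B^A(M)$ agrees with the oriented geodesic coordinate $t^A(M)=\mathrm{Tr}(A\log M)$ of \cref{prop:geodesic_coordinate}, so that both constructions select the same point on $\mathcal{G}_A$ rather than its reflection.
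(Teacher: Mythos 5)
Your proposal is correct and follows essentially the same route as the paper's own proof: both characterize the Busemann projection as the unique point $\gamma_A(t)=\exp(tA)$ satisfying $B^A(\gamma_A(t))=B^A(M)$, plug in the closed form $B^A(\cdot)=-\mathrm{Tr}(A\log\cdot)$ from \cref{prop:busemann_coords}, and solve the resulting linear equation $t\,\mathrm{Tr}(A^2)=\mathrm{Tr}(A\log M)$ with $\|A\|_F=1$. Your additional remarks on the unit-rate decrease $B^A(\gamma_A(s))=-s$ and the sign consistency with $t^A$ are minor elaborations of the same computation.
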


\begin{proof}
    The geodesic line is of the form 
        \begin{equation}
            \forall t\in \mathbb{R},\ \gamma_A(t) = \exp(tA) \enspace.
        \end{equation}
        
        We want to find a positive definite matrix on this geodesic with the same Busemann coordinate of $M$. Hence, we want to find $t$ such that
        \begin{equation}
            \begin{aligned}
                B^A(M) = B^A(\gamma_A(t)) &\iff \mathrm{Tr}\big(A\log M \big) = \mathrm{Tr}\big(A\log(\exp(tA))\big) \\
                &\iff \mathrm{Tr}(A\log M) = t\mathrm{Tr}(A^2) \\
                &\iff t = \frac{\mathrm{Tr}(A\log M)}{\mathrm{Tr}(A^2)} = \mathrm{Tr}(A\log M)\enspace ,
             \end{aligned}
        \end{equation}
        since $\|A\|_F^2 = \mathrm{Tr}(A^2) = 1$.
        
        Hence,
        \begin{equation}
            P^A(M) = \exp\left(\mathrm{Tr}(A\log M)A\right)\enspace .
        \end{equation}
\end{proof}

\uniformDistribution*

\proof{
    A matrix in $S_d(\mathbb{R})$ has a unique decomposition $P\mathrm{diag}(\theta)P^T$ up to permutations of the columns of $P \in \mathcal{O}_d$ and coefficients of $\theta \in S^{d-1}$.
    Thus, there is a bijection between $\{A \in S_d(\mathbb{R}),\ \| A\|_F = 1\}$ and the set $S_{\mathcal(O),S^{d-1}}$ of $d!$-tuple $\{ (P_1, \theta_1), \dots, (P_{d!}, \theta_{d!})  \in (\mathcal{O}_d \times S^{d-1})^{d!}\} $ such that $(P_i, \theta_i)$ is a permutation of $(P_j, \theta_j)$.
    Therefore, the uniform distribution $\lambda_{S_{\mathcal(O),S^{d-1}}}$ on $S_{\mathcal(O),S^{d-1}}$, defined as $\mathrm{d}\lambda_{S_{\mathcal(O),S^{d-1}}}((P_1, \theta_1), \dots, (P_{d!}, \theta_{d!})) = \sum_{i=1}^{n!} \mathrm{d}(\lambda_O \otimes \lambda) (P_i, \theta_i) = d!\cdot \mathrm{d}(\lambda_O \otimes \lambda) (P_1, \theta_1)$, allows to define a uniform distribution $\lambda_S$ on $\{A \in S_d(\mathbb{R}),\ \| A\|_F = 1\}$. Let $A = P \mathrm{diag}\theta P^T$ with $(P, \theta) \in \mathcal{O}_d \times S^{d-1}$, then
    \begin{equation}
       \mathrm{d}\lambda_S(A) = d!\ \mathrm{d}(\lambda_O \otimes \lambda) (P, \theta) \enspace . 
    \end{equation}

}

\equivalenceSWlog*

\proof{
    Denoting $\Tilde{t}^A(B)=\langle B,A\rangle_F$ for all $B\in S_d(\mathbb{R})$, we obtain using \citep[Lemma 6]{paty2019subspace}
    \begin{equation}
        \begin{aligned}
            W_p^p(\Tilde{t}^A_\#\log_\#\mu, \Tilde{t}^A_\#\log_\#\nu) &= \inf_{\gamma\in\Pi(\mu,\nu)}\ \int_{S_d^{++}(\mathbb{R})\times S_d^{++}(\mathbb{R})} |\Tilde{t}^A(\log(X))-\Tilde{t}^A(\log(Y))|^p\ \mathrm{d}\gamma(X,Y) \\
            &= \inf_{\gamma\in\Pi(\mu,\nu)}\ \int_{S_d^{++}(\mathbb{R})\times S_d^{++}(\mathbb{R})} |t^A(X)-t^A(Y)|^p\ \mathrm{d}\gamma(X,Y) \\
            &= W_p^p(t^A_\#\mu, t^A_\#\nu) \enspace ,
        \end{aligned}
    \end{equation}
    since $\Tilde{t}^A(\log X) = \langle A, \log X\rangle_F = t^A(X)$. Hence,
    \begin{equation}
        \mathrm{SymSW}_p^p(\log_\#\mu,\log_\#\nu) = \mathrm{SPDSW}_p^p(\mu,\nu)\enspace .
    \end{equation}
}

\distance*

\begin{proof}
    Let $p\ge 1$, and $\mu,\nu\in\mathcal{P}_p(S_d^{++}(\mathbb{R}))$. First, let's check that $\mathrm{SPDSW}_p^p(\mu,\nu)<\infty$.
    
    To see that, we will use on one hand \citet[Definition 6.4]{villani2009optimal} which states that on a Riemannian manifold $\mathcal{M}$, for any $x_0\in \mathcal{M}$,
    \begin{equation}
        \forall x,y\in \mathcal{M},\ d(x,y)^p \le 2^{p-1}\big(d(x,x^0)^p + d(x^0,y)^p\big)\enspace .
    \end{equation}
    Moreover, we will use that the projection $t^A$ is equal (up to a sign) to the Busemann function which is 1-Lipschitz \citep[II. Proposition 8.22]{bridson2013metric} and hence for any $A\in S_d(\mathbb{R})$ such that $\|A\|_F=1$ and $X,Y\in S_d^{++}(\mathbb{R})$, $|t^A(X)-t^A(Y)|\le d_{LE}(X,Y)$.
    Then, using \citet[Lemma 6]{paty2019subspace}, we have, for any $\pi\in\Pi(\mu,\nu)$ and $X_0\in S_d^{++}(\mathbb{R})$,
    \begin{equation}
        \begin{aligned}
            W_p^p(t^A_\#\mu,t^A_\#\nu) &= \inf_{\gamma\in \Pi(\mu,\nu)}\ \int_{S_d^{++}(\mathbb{R})\times S_d^{++}(\mathbb{R})} |t^A(X)-t^A(Y)|^p\ \mathrm{d}\gamma(X,Y) \\
            &\le \int_{S_d^{++}(\mathbb{R})\times S_d^{++}(\mathbb{R})} |t^A(X)-t^A(Y)|^p\ \mathrm{d}\pi(X,Y) \\
            &\le 2^{p-1} \left(\int_{S_d^{++}(\mathbb{R})} |t^A(X)-t^A(X_0)|^p \ \mathrm{d}\mu(X) + \int_{S_d^{++}(\mathbb{R})} |t^A(X_0)-t^A(Y)|^p\ \mathrm{d}\nu(Y)\right) \\
            &\le 2^{p-1} \left( \int_{S_d^{++}(\mathbb{R})} d_{LE}(X, X_0)^p \ \mathrm{d}\mu(X) + \int_{S_d^{++}(\mathbb{R})} d_{LE}(Y,X_0)^p\ \mathrm{d}\nu(Y) \right) \\
            &< \infty \enspace .
        \end{aligned}
    \end{equation}

    Let $p\ge 1$, then for all $\mu,\nu\in\mathcal{P}_p(S_d^{++}(\mathbb{R}))$, it is straightforward to see that $\mathrm{SPDSW}_p(\mu,\nu)\ge 0$, $\mathrm{SPDSW}_p(\mu,\nu)=\mathrm{SPDSW}_p(\nu,\mu)$. It is also easy to see that $\mu=\nu\implies \mathrm{SPDSW}_p(\mu,\nu)=0$ using that $W_p$ is a distance.
    
    Now, we can also derive the triangular inequality using the triangular inequality for $W_p$ and the Minkowski inequality:
    \begin{equation}
        \begin{aligned}
            \forall \mu,\nu,\alpha\in\mathcal{P}_p(S_d^{++}(\mathbb{R})),\ \mathrm{SPDSW}_p(\mu,\nu) &= \Big(\int_{S_d(\mathbb{R})} W_p^p(t^A\#\mu,t^A\#\nu)\ \mathrm{d}\lambda_S(A)\Big)^{\frac{1}{p}} \\
            &\le \Big(\int_{S_d(\mathbb{R})} \big(W_p(t^A_\#\mu,t^A_\#\alpha) + W_p(t^A_\#\alpha,t^A_\#\nu)\big)^p\ \mathrm{d}\lambda_S(A)\Big)^{\frac{1}{p}} \\
            &\le \Big(\int_{S_d(\mathbb{R})} W_p^p(t^A\#\mu,t^A_\#\alpha)\ \mathrm{d}\lambda_S(A)\Big)^{\frac{1}{p}} \\ &+ \Big(\int_{S_d(\mathbb{R})} W_p^p(t^A_\#\alpha, t^A_\#\nu)\ \mathrm{d}\lambda_S(A)\Big)^{\frac{1}{p}} \\
            &= \mathrm{SPDSW}_p(\mu,\alpha)+\mathrm{SPDSW}_p(\alpha,\nu)\enspace .
        \end{aligned}
    \end{equation}
    
    Lastly, we can derive the indiscernible property. Let $\mu,\nu\in\mathcal{P}_p(S_d^{++}(\mathbb{R}))$ such that $\mathrm{SPDSW}_p(\mu,\nu)=0$. Then, as for all $A\in S_d(\mathbb{R})$, $W_p^p(t^A_\#\mu,t^A_\#\nu)\ge 0$, it implies that for $\lambda_S$-almost every $A$, $W_p^p(t^A_\#\mu, t^A_\#\nu)=0$ which implies $t^A_\#\mu=t^A_\#\nu$ for $\lambda_S$-almost every $A$ since $W_p$ is a distance. By taking the Fourier transform, this implies that for all $s\in\mathbb{R}$, $\widehat{t^A_\#\mu}(s) = \widehat{t^A_\#\nu}(s)$. But, we have
    \begin{equation}
        \begin{aligned}
            \widehat{t^A_\#\mu}(s) &= \int_{\mathbb{R}} e^{-2i\pi ts}\ \mathrm{d}(t^A_\#\mu)(s) \\
            &= \int_{S_d^{++}(\mathbb{R})} e^{-2i\pi t^A(M) s}\ \mathrm{d}\mu(M) \\
            &= \int_{S_d^{++}(\mathbb{R})} e^{-2i\pi \langle sA, \log M\rangle_F}\ \mathrm{d}\mu(M) \\
            &= \int_{S_d(\mathbb{R})} e^{-2i\pi \langle sA, S\rangle_F}\ \mathrm{d}(\log_\#\mu)(S) \\
            &= \widehat{\log_\#\mu}(sA)\enspace .
        \end{aligned}
    \end{equation}
    Hence, we get that $\mathrm{SPDSW}_p(\mu,\nu)=0$ implies that for $\lambda_S$-almost every $A$,
    \begin{equation}
        \forall s\in \mathbb{R},\ \widehat{\log_\#\mu}(sA) = \widehat{t^A_\#\mu}(s) = \widehat{t^A_\#\nu}(s) = \widehat{\log_\#\nu}(sA)\enspace .
    \end{equation}
    By injectivity of the Fourier transform on $S_d(\mathbb{R})$, we get $\log_\#\mu=\log_\#\nu$. Then, as $\log$ is a bijection from $S_d^{++}(\mathbb{R})$ to $S_d(\mathbb{R})$, we have for all Borelian $M\subset S_d^{++}(\mathbb{R})$,
    \begin{equation}
        \begin{aligned}
            \mu(M) &= \int_{S_d^{++}(\mathbb{R})} \mathbb{1}_M(X)\ \mathrm{d}\mu(X) \\
            &= \int_{S_d(\mathbb{R})} \mathbb{1}_M(\exp(S))\ \mathrm{d}(\log_\#\mu)(S) \\
            &= \int_{S_d(\mathbb{R})} \mathbb{1}_M(\exp(S))\ \mathrm{d}(\log_\#\nu)(S) \\
            &= \int_{S_d^{++}(\mathbb{R})} \mathbb{1}_M(Y)\ \mathrm{d}\nu(Y) \\
            &= \nu(M)\enspace .
        \end{aligned}
    \end{equation}
    Hence, we conclude that $\mu=\nu$ and that $\mathrm{SPDSW}_p$ is a distance.
\end{proof}

To prove \cref{prop:weakcv}, we will adapt the proof of \citet{nadjahi2020statistical} to our projection. First, we start to adapt \citet[Lemma S1]{nadjahi2020statistical}:
\begin{lemma}[Lemma S1 in \citet{nadjahi2020statistical}] \label{lemma:nadjahi}
    Let $(\mu_k)_k \in \mathcal{P}_p(S_d^{++}(\mathbb{R}))$ and $\mu\in \mathcal{P}_p(S_d^{++}(\mathbb{R}))$ such that $\lim_{k\to\infty}\ \mathrm{SPDSW}_1(\mu_k,\mu)=0$. Then, there exists $\varphi:\mathbb{N}\to\mathbb{N}$ non decreasing such that $\mu_{\varphi(k)} \xrightarrow[k\to\infty]{\mathcal{L}} \mu$.
\end{lemma}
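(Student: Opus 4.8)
The plan is to follow the structure of \citet[Lemma S1]{nadjahi2020statistical}, transporting everything to the flat space $S_d(\mathbb{R})$ through the $\log$ map. Write $\mathrm{SPDSW}_1(\mu_k,\mu) = \int_{S_d(\mathbb{R})} W_1(t^A_\#\mu_k, t^A_\#\mu)\,\mathrm{d}\lambda_S(A)$ and set $f_k(A) = W_1(t^A_\#\mu_k, t^A_\#\mu) \ge 0$. The hypothesis says exactly that $f_k \to 0$ in $L^1(\lambda_S)$. Since $L^1$ convergence forces a $\lambda_S$-almost everywhere convergent subsequence, there is a non-decreasing $\varphi:\mathbb{N}\to\mathbb{N}$ with $f_{\varphi(k)}(A)\to 0$ for $\lambda_S$-almost every $A$. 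Because $\mu_k,\mu\in\mathcal{P}_1(S_d^{++}(\mathbb{R}))$ and $t^A$ is $1$-Lipschitz for $d_{LE}$ (it coincides up to sign with the $1$-Lipschitz Busemann function, as used in the proof of \cref{prop:distance}), the projected measures lie in $\mathcal{P}_1(\mathbb{R})$, on which $W_1$ metrizes weak convergence. Hence for $\lambda_S$-almost every $A$ we obtain $t^A_\#\mu_{\varphi(k)} \xrightarrow{\mathcal{L}} t^A_\#\mu$.

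Next I would pass to characteristic functions. Weak convergence on $\mathbb{R}$ gives, for $\lambda_S$-almost every $A$ and every $s\in\mathbb{R}$, $\widehat{t^A_\#\mu_{\varphi(k)}}(s) \to \widehat{t^A_\#\mu}(s)$. Reusing the identity already computed in the proof of \cref{prop:distance}, namely $\widehat{t^A_\#\mu}(s) = \widehat{\log_\#\mu}(sA)$, this reads $\widehat{\log_\#\mu_{\varphi(k)}}(sA)\to\widehat{\log_\#\mu}(sA)$ for $\lambda_S$-almost every $A$ and all $s$. Every nonzero $S\in S_d(\mathbb{R})$ factors as $S = sA$ with $A = S/\|S\|_F$ on the unit sphere and $s=\|S\|_F$; since the exceptional set of $A$ is $\lambda_S$-null, the cone it generates is Lebesgue-null in $S_d(\mathbb{R})$. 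Therefore $\widehat{\log_\#\mu_{\varphi(k)}} \to \widehat{\log_\#\mu}$ Lebesgue-almost everywhere on $S_d(\mathbb{R})$.

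The delicate point, and the step I expect to be the main obstacle, is to upgrade this \emph{almost everywhere} convergence of characteristic functions into genuine weak convergence, since Lévy's continuity theorem asks for convergence at every point. To bypass this I would test against Schwartz functions: for any Schwartz $\psi$ on $S_d(\mathbb{R})$, dominated convergence (the characteristic functions are bounded by $1$ and $\psi\in L^1$) together with the multiplication formula $\int \widehat{\log_\#\mu_{\varphi(k)}}(\xi)\,\psi(\xi)\,\mathrm{d}\xi = \int \widehat{\psi}\,\mathrm{d}(\log_\#\mu_{\varphi(k)})$ yields $\int \widehat{\psi}\,\mathrm{d}(\log_\#\mu_{\varphi(k)}) \to \int \widehat{\psi}\,\mathrm{d}(\log_\#\mu)$. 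As $\widehat{\psi}$ ranges over all Schwartz functions, this is vague convergence of $\log_\#\mu_{\varphi(k)}$ to $\log_\#\mu$; since all these measures are probabilities and the limit is a probability, no mass escapes to infinity and vague convergence promotes to weak convergence, $\log_\#\mu_{\varphi(k)} \xrightarrow{\mathcal{L}} \log_\#\mu$. Finally, $\exp = \log^{-1}$ is continuous from $S_d(\mathbb{R})$ to $S_d^{++}(\mathbb{R})$, so by the continuous mapping theorem $\mu_{\varphi(k)} = \exp_\#(\log_\#\mu_{\varphi(k)}) \xrightarrow{\mathcal{L}} \exp_\#(\log_\#\mu) = \mu$, which is the claim.
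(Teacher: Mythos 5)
Your proof is correct, and its skeleton is the same as the paper's: extract a $\lambda_S$-almost-everywhere convergent subsequence from $L^1(\lambda_S)$ convergence, use that $W_1$ convergence on the line implies weak convergence, pass to characteristic functions via the identity $\widehat{t^A_\#\mu}(s) = \widehat{\log_\#\mu}(sA)$, upgrade to weak convergence of $\log_\#\mu_{\varphi(k)}$ in the flat space $S_d(\mathbb{R})$, and conclude by pushing forward with $\exp$. The one place where you genuinely diverge is precisely the step you flag as delicate: the paper, following Lemma S1 of \citet{nadjahi2020statistical}, passes from almost-everywhere convergence of characteristic functions to weak convergence by convolution with a Gaussian kernel (mollifying the measures so that dominated convergence applies to the smoothed densities), whereas you test against Fourier transforms of Schwartz functions via the multiplication formula $\int \widehat{\nu}\,\psi = \int \widehat{\psi}\,\mathrm{d}\nu$ and then promote vague convergence to weak convergence using that the limit is itself a probability measure, so no mass escapes. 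Both devices rest on the same dominated-convergence mechanism and are equally standard; yours has the merit of being self-contained where the paper simply defers to the cited convolution argument, and you additionally make explicit the polar-coordinates fact -- left implicit both here and in the indiscernibility step of \cref{prop:distance} -- that the cone generated by a $\lambda_S$-null set of unit-norm directions is Lebesgue-null in $S_d(\mathbb{R})$, which is what justifies going from ``for $\lambda_S$-a.e.\ $A$ and all $s$'' to ``Lebesgue-a.e.\ on $S_d(\mathbb{R})$''. Note that this last point, in your proof as in the paper's, relies on \cref{lemma:uniform_distribution}, i.e.\ on $\lambda_S$ being the uniform (rotation-invariant) measure on the Frobenius unit sphere, so you introduce no hypothesis beyond what the paper already uses.
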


\begin{proof}
    By \citet[Theorem 2.2.5]{bogachev2007measure}, 
    \begin{equation}
        \lim_{k\to\infty}\ \int_{S^d(\mathbb{R})} W_1(t^A_\#\mu_k, t^A_\#\mu)\ \mathrm{d}\lambda_S(A) = 0
    \end{equation}
    implies that there exits a subsequence $(\mu_{\varphi(k)})_k$ such that for $\lambda_S$-almost every $A\in S_d(\mathbb{R})$,
    \begin{equation}
        W_1(t^A_\#\mu_{\varphi(k)}, t^A_\#\mu) \xrightarrow[k\to \infty]{}0\enspace .
    \end{equation}
    As the Wasserstein distance metrizes the weak convergence, this is equivalent to $t^A_\#\mu_{\varphi(k)} \xrightarrow[k\to\infty]{\mathcal{L}} t^A_\#\mu$.
    
    Then, by Levy's characterization theorem, this is equivalent with the pointwise convergence of the characterization function, \emph{i.e.} for all $t\in \mathbb{R}$,\ $\phi_{t^A_\#\mu_{\varphi(k)}}(t) \xrightarrow[k\to \infty]{} \phi_{t^A_\#\mu}(t)$.
    Moreover, we have for all $s\in \mathbb{R}$,
    \begin{equation}
        \begin{aligned}
            \phi_{t^A_\#\mu_{\varphi(k)}}(s) &= \int_\mathbb{R} e^{-its} \mathrm{d}(t^A_\#\mu_{\varphi(k)})(t) \\
            &= \int_{S_d^{++}(\mathbb{R})} e^{-i t^A(M) s}\ \mathrm{d}\mu_{\varphi(k)}(M) \\
            &= \int_{S_d^{++}(\mathbb{R})} e^{-i \langle sA, \log M\rangle_F}\ \mathrm{d}\mu_{\varphi(k)}(M) \\
            &= \int_{S_d(\mathbb{R})} e^{-i\langle sA, S\rangle_F} \ \mathrm{d}(\log_\#\mu_{\varphi(k)})(S) \\
            &= \phi_{\log_\#\mu_{\varphi(k)}}(sA) \\
            &\xrightarrow[k\to \infty]{} \phi_{\log_\#\mu}(sA) \enspace .
        \end{aligned}
    \end{equation}
    Then, working in $S_d(\mathbb{R})$ with the Frobenius norm, we can use the same proof of \citet{nadjahi2020statistical} by using a convolution with a gaussian kernel and show that it implies that $\log_\#\mu_{\varphi(k)}\xrightarrow[k\to\infty]{\mathcal{L}}\log_\#\mu$.
    
    Finally, let's show that it implies the weak convergence of $(\mu_{\varphi(k)})_k$ towards $\mu$. Let $f\in C_b(S_d^{++}(\mathbb{R}))$, then
    \begin{equation}
        \begin{aligned}
            \int_{S_d^{++}(\mathbb{R})} f \ \mathrm{d}\mu_{\varphi(k)} &= \int_{S_d(\mathbb{R})} f\circ \exp\ \mathrm{d}(\log_\#\mu_{\varphi(k)}) \\
            &\xrightarrow[k\to \infty]{} \int_{S_d(\mathbb{R})} f\circ \exp\ \mathrm{d}(\log_\#\mu) \\
            &= \int_{S_d^{++}(\mathbb{R})} f\ \mathrm{d}\mu \enspace .
        \end{aligned}
    \end{equation}
    Hence, we an conclude that $\mu_{\varphi(k)} \xrightarrow[k\to\infty]{\mathcal{L}} \mu$.
\end{proof}

\weakcv*

\begin{proof}
    First, we suppose that $\mu_k \xrightarrow[k\to\infty]{\mathcal{L}} \mu$ in $\mathcal{P}_p(S_d^{++}(\mathbb{R}))$. Then, by continuity, we have that for $\lambda_S$ almost every $A\in \mathcal{P}_p(S_d^{++}(\mathbb{R})$, $t^A_\#\mu_k \xrightarrow[k\to \infty]{} t^A_\#\mu$. Moreover, as the Wasserstein distance on $\mathbb{R}$ metrizes the weak convergence, $W_p(t^A_\#\mu_k,t^A_\#\mu) \xrightarrow[k\to\infty]{} 0$. Finally, as $W_p$ is bounded and it converges for $\lambda_S$-almost every $A$, we have by the Lebesgue convergence dominated theorem that $\mathrm{SPDSW}_p^p(\mu_k,\mu) \xrightarrow[k\to\infty]{} 0$.
        
    On the other hand, suppose that $\mathrm{SPDSW}_p(\mu_k,\mu)\xrightarrow[k\to\infty]{}0$. We first adapt Lemma S1 of \citep{nadjahi2020statistical} in Lemma \ref{lemma:nadjahi} and observe that by the Hölder inequality, 
    \begin{equation} \label{eq:holder}
        \mathrm{SPDSW}_1(\mu,\nu) \le \mathrm{SPDSW}_p(\mu,\nu) \enspace ,
    \end{equation}
    and hence $\mathrm{SPDSW}_1(\mu_k,\mu)\xrightarrow[k\to\infty]{} 0$.
    
    By the same contradiction argument as in \citet{nadjahi2020statistical}, let's suppose that $(\mu_k)_k$ does not converge to $\mu$. Then, by denoting $d_P$ the Lévy-Prokhorov metric, $\lim_{k\to\infty}d_P(\mu_k,\mu)\neq 0$. %Hence, there exists $\epsilon>0$ and a subsequence $(\mu_{\varphi(k)})_k$ such that $d_P(\mu_{\varphi(k)},\mu)>\epsilon$.
    
    Then, we have first that $\lim_{k\to\infty}\mathrm{SPDSW}_1(\mu_{\varphi(k)},\mu) = 0$. Thus, by Lemma \ref{lemma:nadjahi}, there exists a subsequence $(\mu_{\psi(\varphi(k))})_k$ such that $\mu_{\psi(\varphi(k))}\xrightarrow[k\to\infty]{\mathcal{L}}\mu$ which is equivalent to $\lim_{k\to\infty} d_P(\mu_{\psi(\varphi(k))}, \mu) = 0$ which contradicts the hypothesis.
    
    We conclude that $(\mu_k)_k$ converges weakly to $\mu$.
    % Then, using the same contradiction reasonment of \citep{nadjahi2020statistical}, we deduce that $\mu_k \xrightarrow[k\to\infty]{\mathcal{L}}\mu$.}
\end{proof}

For the proof of \cref{prop:bound}, we will first recall the following Theorem:
\begin{theorem}[\citep{rivin2007surface}, Theorem 3] \label{th3}
    Let $f:\mathbb{R}^d\mapsto\mathbb{R}$ a homogeneous function of degree $p$ (\emph{i.e.} $\forall \alpha\in\mathbb{R},\ f(\alpha x)=\alpha^p f(x)$). Then,
    \begin{equation}
        \Gamma\Big(\frac{d+p}{2}\Big)\int_{S^{d-1}} f(x)\ \lambda(\mathrm{d}x) = \Gamma\Big(\frac{d}{2}\Big)\mathbb{E}[f(X)] \enspace,
    \end{equation}
    where $\forall i\in\{1,...,d\}$, $X_i\sim\mathcal{N}(0,\frac12)$ and $(X_i)_i$ are independent.
\end{theorem}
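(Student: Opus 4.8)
The plan is to evaluate the right-hand side $\mathbb{E}[f(X)]$ directly as a Gaussian integral over $\mathbb{R}^d$ and pass to polar coordinates, using homogeneity to decouple the radial part (which will produce the Gamma factor) from the angular part (which will produce the spherical integral). First I would write down the density of $X$: since the coordinates are independent with $X_i\sim\mathcal{N}(0,\tfrac12)$, the joint density is $x\mapsto \pi^{-d/2}e^{-\|x\|_2^2}$, so that
\begin{equation}
    \mathbb{E}[f(X)] = \pi^{-d/2}\int_{\mathbb{R}^d} f(x)\, e^{-\|x\|_2^2}\,\mathrm{d}x \enspace .
\end{equation}
The particular choice of variance $\tfrac12$ is exactly what makes the exponent collapse to $e^{-\|x\|_2^2}$, and this is what keeps the later bookkeeping clean.

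Next I would switch to polar coordinates $x=r\omega$ with $r=\|x\|_2\ge 0$ and $\omega\in S^{d-1}$, so that $\mathrm{d}x = r^{d-1}\,\mathrm{d}r\,\mathrm{d}\sigma(\omega)$, where $\sigma$ is the unnormalized surface measure. Homogeneity of degree $p$ gives $f(r\omega)=r^p f(\omega)$, so the integral factorizes:
\begin{equation}
    \mathbb{E}[f(X)] = \pi^{-d/2}\Big(\int_0^\infty r^{p+d-1}e^{-r^2}\,\mathrm{d}r\Big)\Big(\int_{S^{d-1}} f(\omega)\,\mathrm{d}\sigma(\omega)\Big) \enspace .
\end{equation}
The radial integral is handled by the substitution $u=r^2$, which yields $\int_0^\infty r^{p+d-1}e^{-r^2}\,\mathrm{d}r = \tfrac12\,\Gamma\big(\tfrac{p+d}{2}\big)$.

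Finally I would relate $\sigma$ to the uniform probability measure $\lambda$ on $S^{d-1}$ via $\mathrm{d}\sigma = |S^{d-1}|\,\mathrm{d}\lambda$ with $|S^{d-1}| = \tfrac{2\pi^{d/2}}{\Gamma(d/2)}$. Substituting, the prefactor $\pi^{-d/2}$ cancels against the $\pi^{d/2}$ inside $|S^{d-1}|$ and the $\tfrac12$ cancels against the $2$, leaving
\begin{equation}
    \mathbb{E}[f(X)] = \frac{\Gamma\big(\tfrac{p+d}{2}\big)}{\Gamma\big(\tfrac d2\big)}\int_{S^{d-1}} f(\omega)\,\mathrm{d}\lambda(\omega) \enspace ,
\end{equation}
and multiplying through by $\Gamma(\tfrac d2)$ gives the stated identity.

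I do not expect a genuine analytic obstacle here: the argument is essentially a one-line polar-coordinate computation, and the only place requiring care is the constant bookkeeping, namely tracking the Gaussian normalization $\pi^{-d/2}$ against the surface area $\tfrac{2\pi^{d/2}}{\Gamma(d/2)}$ so that the powers of $\pi$ and the factors of $2$ cancel exactly and the two Gamma functions land in the claimed positions. The one regularity point worth noting is that convergence of the radial integral at the origin needs $p>-d$, while integrability of $f$ against the Gaussian (e.g. polynomial growth, as for the homogeneous maps $\theta\mapsto\|\theta\|_p^p$ invoked in the constant $c_{d,p}$) guarantees all integrals are finite and that the factorization via Fubini is legitimate.
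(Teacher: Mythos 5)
Your proof is correct. Note that the paper itself gives no proof of this statement: it is quoted verbatim as Theorem 3 of \citet{rivin2007surface} and used as an external ingredient in the proof of the bound between $\mathrm{SPDSW}_p$ and $W_p$, so there is no in-paper argument to compare against. Your polar-coordinate computation --- writing $\mathbb{E}[f(X)]$ as $\pi^{-d/2}\int_{\mathbb{R}^d} f(x)e^{-\|x\|_2^2}\,\mathrm{d}x$, factorizing via homogeneity into the radial integral $\tfrac12\Gamma\big(\tfrac{d+p}{2}\big)$ and the angular integral, and converting the surface measure to the uniform probability measure via $|S^{d-1}|=\tfrac{2\pi^{d/2}}{\Gamma(d/2)}$ --- is the standard derivation of this identity and is exactly consistent with how the paper uses it (e.g.\ the sanity check $f=\|\cdot\|_2^2$ gives $\Gamma(\tfrac d2+1)=\tfrac d2\Gamma(\tfrac d2)$ on both sides). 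Your closing caveats are also the right ones: only homogeneity for $\alpha\ge 0$ is needed, the radial integral requires $p>-d$, and integrability of $f$ against the Gaussian justifies the Fubini factorization; all of these hold for the functions $\theta\mapsto\|\theta\|_p^p$ and $\theta\mapsto|\langle\mathrm{diag}(\theta),S\rangle_F|^p$ to which the paper applies the theorem.
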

Then, making extensive use of this theorem, we show the following lemma:
\begin{lemma} \label{lemma_eq}
    \begin{equation}
        \forall S\in S_d(\mathbb{R}),\ \int_{S^{d-1}} |\langle \mathrm{diag}(\theta),S\rangle_F|^p\ \lambda(\mathrm{d}\theta) = \frac{1}{d} \left(\sum_i S_{ii}^2\right)^{\frac{p}{2}} \int_{S^{d-1}} \|\theta\|_p^p\ \lambda(\mathrm{d}\theta) \enspace .
    \end{equation}
\end{lemma}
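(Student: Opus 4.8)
The plan is to evaluate both integrals by reducing them to Gaussian expectations through \cref{th3}, and then observe that all extraneous constants cancel. First I would rewrite the integrand on the left. Since $\langle \mathrm{diag}(\theta), S\rangle_F = \mathrm{Tr}(\mathrm{diag}(\theta) S) = \sum_{i=1}^d \theta_i S_{ii} = \langle \theta, s\rangle$ with $s = (S_{11},\dots,S_{dd})$, the map $f(\theta) = |\langle \theta, s\rangle|^p$ is homogeneous of degree $p$. Hence \cref{th3} applies and gives $\Gamma\big(\frac{d+p}{2}\big)\int_{S^{d-1}} f\,\mathrm{d}\lambda = \Gamma\big(\frac{d}{2}\big)\,\mathbb{E}[|\langle X, s\rangle|^p]$, where the $X_i \sim \mathcal{N}(0,\tfrac12)$ are independent.

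The key computation on the left is that $\langle X, s\rangle = \sum_i s_i X_i$ is a centred Gaussian with variance $\frac12\sum_i s_i^2 = \frac12\|s\|_2^2$, hence equal in law to $\|s\|_2\, Z$ with $Z\sim\mathcal{N}(0,\tfrac12)$. Consequently $\mathbb{E}[|\langle X,s\rangle|^p] = \|s\|_2^p\, m_p$, where I write $m_p = \mathbb{E}[|Z|^p]$. This yields $\int_{S^{d-1}} |\langle \theta,s\rangle|^p\,\mathrm{d}\lambda(\theta) = \frac{\Gamma(d/2)}{\Gamma((d+p)/2)}\,\|s\|_2^p\, m_p$, noting that $\|s\|_2^p = \big(\sum_i S_{ii}^2\big)^{p/2}$.

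For the remaining factor, I would exploit the invariance of $\lambda$ under permutations of the coordinates: $\int_{S^{d-1}}\|\theta\|_p^p\,\mathrm{d}\lambda = \sum_i \int_{S^{d-1}}|\theta_i|^p\,\mathrm{d}\lambda = d\int_{S^{d-1}}|\theta_1|^p\,\mathrm{d}\lambda$. Applying \cref{th3} to the homogeneous degree-$p$ map $\theta\mapsto|\theta_1|^p$ gives $\int_{S^{d-1}}|\theta_1|^p\,\mathrm{d}\lambda = \frac{\Gamma(d/2)}{\Gamma((d+p)/2)}\,\mathbb{E}[|X_1|^p] = \frac{\Gamma(d/2)}{\Gamma((d+p)/2)}\,m_p$, and therefore $\int_{S^{d-1}}\|\theta\|_p^p\,\mathrm{d}\lambda = d\,\frac{\Gamma(d/2)}{\Gamma((d+p)/2)}\,m_p$.

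Multiplying this last expression by $\frac{1}{d}\|s\|_2^p$ reproduces exactly the value found for the left-hand integral, since the factor $d$, the Gamma ratio, and the moment $m_p$ all coincide; in particular $m_p$ never needs to be computed explicitly, as it cancels. The proof thus rests on two elementary facts that feed \cref{th3}: that a linear form in independent Gaussians is itself Gaussian with variance $\frac12\|s\|_2^2$ (which produces the $\|s\|_2^p$ scaling), and the permutation symmetry of $\lambda$ on $S^{d-1}$. Neither is a genuine obstacle, so the argument amounts to carefully matching the two applications of \cref{th3}.
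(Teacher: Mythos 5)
Your proposal is correct and follows essentially the same route as the paper: both apply \cref{th3} to reduce the two spherical integrals to Gaussian expectations, use that $\sum_i S_{ii} X_i$ is a centred Gaussian with variance $\tfrac12\sum_i S_{ii}^2$ to extract the factor $\big(\sum_i S_{ii}^2\big)^{p/2}$, and let the common Gaussian moment cancel. Your two cosmetic variations — obtaining $\int \|\theta\|_p^p\,\mathrm{d}\lambda = d\int|\theta_1|^p\,\mathrm{d}\lambda$ by permutation invariance on the sphere rather than by linearity of expectation on the Gaussian side, and using the equality in law $\langle X,s\rangle \overset{d}{=} \|s\|_2 Z$ in place of the paper's explicit change of variables — do not change the substance of the argument.
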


\begin{proof}
    Let $f:\theta\mapsto \|\theta\|_p^p=\sum_{i=1}^d \theta_i^p$, then we have $f(\alpha\theta)=\alpha^p f(\theta)$ and $f$ is $p$-homogeneous. By applying Theorem \ref{th3}, we have:
    \begin{equation}
        \begin{aligned}
            \int_{S^{d-1}} \|\theta\|_p^p\ \lambda(\mathrm{d}\theta) &= \frac{\Gamma\Big(\frac{d}{2}\Big)}{\Gamma\Big(\frac{d+p}{2}\Big)}\mathbb{E}[\|X\|_p^p] \text{ with $X_i \overset{\mathrm{iid}}{\sim}\mathcal{N}(0,\frac12)$}\\
            &= \frac{\Gamma\Big(\frac{d}{2}\Big)}{\Gamma\Big(\frac{d+p}{2}\Big)}d\ \mathbb{E}[|X_1|_p^p] \\
            &= \frac{\Gamma\Big(\frac{d}{2}\Big)}{\Gamma\Big(\frac{d+p}{2}\Big)}d\ \int |t|^p \frac{1}{\sqrt{\pi}} e^{-t^2} \mathrm{d}t \enspace .
        \end{aligned}
    \end{equation}
    
    On the other hand, let $\Tilde{f}:\theta\mapsto|\langle \mathrm{diag}(\theta),S\rangle_F|^p$, then $\Tilde{f}(\alpha\theta)=\alpha^p \Tilde{f}(\theta)$ and $\Tilde{f}$ is p-homogeneous. By applying Theorem \ref{th3}, we have:
    \begin{equation}
        \begin{aligned}
            \int_{S^{d-1}} |\langle \mathrm{diag}(\theta),S\rangle_F|^p\ \lambda(\mathrm{d}\theta) &= \frac{\Gamma\Big(\frac{d}{2}\Big)}{\Gamma\Big(\frac{d+p}{2}\Big)}\mathbb{E}[|\langle \mathrm{diag}(X),S\rangle_F|^p]\text{ with $X_i \overset{\mathrm{iid}}{\sim}\mathcal{N}(0,\frac12)$}\\
            &= \frac{\Gamma\Big(\frac{d}{2}\Big)}{\Gamma\Big(\frac{d+p}{2}\Big)} \int |t|^p \frac{1}{\sqrt{\sum_i S_{ii}^2 \pi}} e^{-\frac{t^2}{\sum_i z_{ii}^2}}\ \mathrm{d}t \text{ as $\langle \mathrm{diag}(X),S\rangle_F = \sum_i S_{ii} X_i \sim \mathcal{N}\Big(0,\frac{\sum_i S_{ii}^2}{2}\Big)$} \\
            &= \frac{\Gamma\Big(\frac{d}{2}\Big)}{\Gamma\Big(\frac{d+p}{2}\Big)} \left(\sum_i S_{ii}^2\right)^{\frac{p}{2}} \int |u|^p \frac{1}{\sqrt{\sum_i S_{ii}^2 \pi}} e^{-u^2} \sqrt{\sum_i S_{ii}^2} \mathrm{d}u \text{ by $u=\frac{t}{\sqrt{\sum_i S_{ii}^2}}$} \\
            &= \frac{\Gamma\Big(\frac{d}{2}\Big)}{\Gamma\Big(\frac{d+p}{2}\Big)} \left(\sum_i S_{ii}^2\right)^{\frac{p}{2}} \int |u|^p \frac{1}{\sqrt{\pi}} e^{-u^2} \mathrm{d}u \enspace .
        \end{aligned}
    \end{equation}
    
    Hence, we deduce that
    \begin{equation}
        \int_{S^{d-1}} |\langle \mathrm{diag}(\theta),S\rangle_F|^p\ \lambda(\mathrm{d}\theta) = \frac{1}{d} \left(\sum_i S_{ii}^2\right)^{\frac{p}{2}} \int_{S^{d-1}} \|\theta\|_p^p \ \mathrm{d}\lambda(\theta) \enspace .
    \end{equation}
\end{proof}

\bound*

\begin{proof}
    First, we show the upper bound of $\mathrm{SPDSW}_p$. Let $\mu,\nu\in\mathcal{P}_p(S_d^{++}(\mathbb{R})$ and $\gamma\in \Pi(\mu,\nu)$ an optimal coupling. Then, following the proof of \citet[Proposition 5.1.3]{bonnotte2013unidimensional}, and using \citet[Lemma 6]{paty2019subspace} combined with the fact that $(t^A\otimes t^A)_\#\gamma\in\Pi(t^A_\#\mu,t^A_\#\nu)$ for any $A\in S_d(\mathbb{R})$ such that $\|A\|_F=1$, we obtain
    \begin{equation} \label{eq:ineq_spdsw}
        \begin{aligned}
            \mathrm{SPDSW}_p^p(\mu,\nu) &= \int_{S_d(\mathbb{R})} W_p^p(t^A_\#\mu, t^A_\#\nu)\ \mathrm{d}\lambda_S(A) \\
            &\le \int_{S_d(\mathbb{R})} \int_{S_d^{++}(\mathbb{R})\times S_d^{++}(\mathbb{R})} |t^A(X)-t^A(Y)|^p\ \mathrm{d}\gamma(X,Y)\ \mathrm{d}\lambda_S(A) \\
            &= \int_{S_d(\mathbb{R})} \int_{S_d^{++}(\mathbb{R})\times S_d^{++}(\mathbb{R})} |\langle A, \log X - \log Y\rangle_F|^p\ \mathrm{d}\gamma(X,Y)\ \mathrm{d}\lambda_S(A) \\
            &= \int_{S^{d-1}}\int_{\mathcal{O}_d} \int_{S_d^{++}(\mathbb{R})\times S_d^{++}(\mathbb{R})} |\langle P\mathrm{diag}(\theta)P^T, \log X-\log Y\rangle_F|^p\ \mathrm{d}\gamma(X,Y)\ \mathrm{d}\lambda_O(P)\mathrm{d}\lambda(\theta) \\
            &= \int_{S^{d-1}}\int_{\mathcal{O}_d} \int_{S_d^{++}(\mathbb{R})\times S_d^{++}(\mathbb{R})} |\langle \mathrm{diag}(\theta), P^T(\log X-\log Y)P\rangle_F|^p\ \mathrm{d}\gamma(X,Y)\ \mathrm{d}\lambda_O(P)\mathrm{d}\lambda(\theta) \enspace .
        \end{aligned}
    \end{equation}
    By Lemma \ref{lemma_eq}, noting $S=P^T(\log X-\log Y)P$, we have
    \begin{equation}
        \begin{aligned}
            \int_{S^{d-1}} |\langle \mathrm{diag}(\theta), S\rangle_F|^p\ \mathrm{d}\lambda(\theta) &= \frac{1}{d} \left(\sum_i S_{ii}^2\right)^{\frac{p}{2}} \int_{S^{d-1}} \|\theta\|_p^p\ \mathrm{d}\lambda(\theta) \\
            &\le \frac{1}{d} \|S\|_F^p \int_{S^{d-1}} \|\theta\|_p^p\ \mathrm{d}\lambda(\theta) \enspace , 
        \end{aligned}
    \end{equation}
    since $\|S\|_F^2 = \sum_{i,j} S_{ij}^2 \ge \sum_i S_{ii}^2$. Moreover, $\|S\|_F = \|P^T(\log X-\log Y)P\|_F = \|\log X-\log Y\|_F$. Hence, coming back to \eqref{eq:ineq_spdsw}, we find
    \begin{equation} \label{eq:upperbound}
        \begin{aligned}
            \mathrm{SPDSW}_p^p(\mu,\nu) &\le \frac{1}{d} \int_{S^{d-1}} \|\theta\|_p^p \ \mathrm{d}\lambda(\theta) \int_{S_d^{++}(\mathbb{R})\times S_d^{++}(\mathbb{R})} \|\log X-\log Y\|_F^p\ \mathrm{d}\gamma(X,Y) \\
            % &\le \frac{1}{d} \int_{S^{d-1}} \|\theta\|_p^p \ \lambda(\mathrm{d}\theta) \int_{S_d^{++}\times S_d^{++}} d(x,y)^p \ \mathrm{d}\gamma(x,y) \\
            &= \frac{1}{d}  \int_{S^{d-1}} \|\theta\|_p^p\ \mathrm{d}\lambda(\theta)\  W_p^p(\mu,\nu) \\
            &= c_{d,p}^p W_p^p(\mu,\nu) \enspace .
        \end{aligned}
    \end{equation}
    since $\gamma$ is an optimal coupling between $\mu$ and $\nu$ for the Wasserstein distance with Log-Euclidean cost.

    For the lower bound, let us first observe that 
    \begin{equation}
        \begin{aligned}
            W_1(\mu,\nu) &= \inf_{\gamma\in\Pi(\mu,\nu)}\ \int_{S_d^{++}(\mathbb{R})\times S_d^{++}(\mathbb{R})} d_{LE}(X,Y)\ \mathrm{d}\gamma(X,Y) \\
            &= \inf_{\gamma\in\Pi(\mu,\nu)}\ \int_{S_d^{++}(\mathbb{R})\times S_d^{++}(\mathbb{R})} \|\log X - \log Y\|_F\ \mathrm{d}\gamma(X,Y) \\
            &= \inf_{\gamma\in\Pi(\mu,\nu)}\ \int_{S_d(\mathbb{R})\times S_d(\mathbb{R})} \|U-V\|_F\ \mathrm{d}(\log\otimes\log)_\#\gamma(U,V) \\
            &= \inf_{\gamma\in\Pi(\log_\#\mu,\log_\#\nu)}\ \int_{S_d(\mathbb{R})\times S_d(\mathbb{R})} \|U-V\|_F\ \mathrm{d}\gamma(U,V) \\
            &= W_1(\log_\#\mu,\log_\#\nu) \enspace ,
        \end{aligned}
    \end{equation}
    where we used \citet[Lemma 6]{paty2019subspace}.
    
    % \cb{
    % Similarly, denoting $\Tilde{t}^A(B)=\langle B,A\rangle_F$ for all $B\in S_d(\mathbb{R})$, we obtain
    % \begin{equation}
    %     \begin{aligned}
    %         W_1(\Tilde{t^A}_\#\log_\#\mu, \Tilde{t}^A_\#\log_\#\nu) &= \inf_{\gamma\in\Pi(\mu,\nu)}\ \int_{S_d^{++}(\mathbb{R})\times S_d^{++}(\mathbb{R})} |\Tilde{t}^A(\log(X))-\Tilde{t}^A(\log(Y))|\ \mathrm{d}\gamma(X,Y) \\
    %         &= \inf_{\gamma\in\Pi(\mu,\nu)}\ \int_{S_d^{++}(\mathbb{R})\times S_d^{++}(\mathbb{R})} |t^A(X)-t^A(Y)|\ \mathrm{d}\gamma(X,Y) \\
    %         &= W_1(t^A_\#\mu, t^A_\#\nu),
    %     \end{aligned}
    % \end{equation}
    % since $\Tilde{t}^A(\log X) = \langle A, \log X\rangle_F = t^A(X)$. Hence,
    % \begin{equation}
    %     SW_1(\log_\#\mu,\log_\#\nu) = \mathrm{SPDSW}_1(\mu,\nu),
    % \end{equation}
    % where $SW_1$ is the SW distance with $\Tilde{t}^A$ as projection.
    % }
    
    Using \cref{prop:equivalence_swlog}, we have
    \begin{equation}
        \mathrm{SymSW}_1(\log_\#\mu,\log_\#\nu) = \mathrm{SPDSW}_1(\mu,\nu)\enspace .
    \end{equation}

    Therefore, as $S_d(\mathbb{R})$ is an Euclidean space of dimension $d(d+1)/2$, we can use \citep[Lemma 5.1.4]{bonnotte2013unidimensional} and we obtain that 
    \begin{equation}
        W_1(\log_\#\mu,\log_\#\nu) \le C_{d(d+1)/2} R^{d(d+1)/(d(d+1)+2)} \mathrm{SymSW}_1(\log_\#\mu,\log_\#\nu)^{2/(d(d+1)+2)} \enspace .
    \end{equation}
    Then, using that $\mathrm{SymSW}_1(\log_\#\mu,\log_\#\nu) = \mathrm{SPDSW}_1(\mu,\nu)$ and $W_1(\log_\#\mu,\log_\#\nu) = W_1(\mu,\nu)$, we obtain
    \begin{equation} \label{eq:ineq_w1}
        W_1(\mu,\nu) \le C_{d(d+1)/2} R^{d(d+1)/(d(d+1)+2)} \mathrm{SPDSW}_1(\mu,\nu)^{2/(d(d+1)+2)} \enspace .
    \end{equation}

    Now, following the proof of \citet[Theorem 5.1.5]{bonnotte2013unidimensional}, we use that on one hand, $W_p^p(\mu,\nu) \le (2R)^{p-1} W_1(\mu,\nu)$, and on the other hand, by Hölder, $\mathrm{SPDSW}_1(\mu,\nu)\le \mathrm{SPDSW}_p(\mu,\nu)$. Hence, using inequalities \eqref{eq:upperbound} and \eqref{eq:ineq_w1}, we get
    \begin{equation}
        \begin{aligned}
            \mathrm{SPDSW}_p^p(\mu,\nu) &\le c_{d,p}^p W_p^p(\mu,\nu) \\
            &\le (2R)^{p-1} W_1(\mu,\nu) \\
            &\le 2^{p-1} C_{d(d+1)/2} R^{p-1+d(d+1)/(d(d+1)+2)}\mathrm{SPDSW}_1(\mu,\nu)^{2/(d(d+1)/2)} \\
            &= C_{d,p}^d R^{p-2/(d(d+1))}\mathrm{SPDSW}_1(\mu,\nu)^{2/(d(d+1)+2)} \enspace .
        \end{aligned}
    \end{equation}
\end{proof}

\sample*

\begin{proof}
    In this proof, we will follow the derivations used in \citet{nadjahi2020statistical} and in \citep{rakotomamonjy2021statistical}. Notably, we will use the adaptation of \citet[Theorem 2]{fournier2015rate} reported in \citet[Lemma 1]{rakotomamonjy2021statistical}, which we recall now.
    \begin{lemma}[Lemma 1 in \citet{rakotomamonjy2021statistical} and Theorem 2 in \citet{fournier2015rate}] \label{lemma:fournier}
        Let $p\ge 1$ and $\eta\in\mathcal{P}_p(\mathbb{R})$. Denote $M_q(\eta)=\int |x|^q\ \mathrm{d}\eta(x)$ the moments of order $q$ and assume that $M_q(\eta)<\infty$ for some $q>p$. Then, there exists a constant $C_{p,q}$ depending only on $p,q$ such that for all $n\ge 1$,
        \begin{equation}
            \mathbb{E}[W_p^p(\hat{\eta}_n,\eta)] \le C_{p,q} M_q(\eta)^{p/q}\left(n^{-1/2}\mathbb{1}_{\{q>2p\}} + n^{-1/2}\log(n) \mathbb{1}_{\{q=2p\}} + n^{-(q-p)/q} \mathbb{1}_{\{q\in(p,2p)\}}\right) \enspace .
        \end{equation}
    \end{lemma}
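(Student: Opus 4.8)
The statement is precisely the one-dimensional ($d=1$) case of Theorem 2 in \citet{fournier2015rate}. Rather than invoke that result as a black box, the plan is to give a self-contained argument tailored to $\mathbb{R}$, in the spirit of the classical analyses of del Barrio, Giné and Matrán and of Bobkov and Ledoux. It rests on three ingredients: the explicit one-dimensional expression of $W_p$ through quantile functions, the elementary concentration of the empirical cumulative distribution function (CDF), and a truncation controlled by the moment $M_q$.

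First I would reduce $W_p^p$ to a weighted $L^1$ norm of the difference of CDFs. Writing $F$ and $G$ for the CDFs of $\mu$ and $\nu$, the monotone coupling gives $W_p^p(\mu,\nu)=\int_0^1 |F^{-1}(u)-G^{-1}(u)|^p\,\mathrm{d}u$. The key pointwise inequality is that for all reals $a<b$ and $p\ge 1$,
\begin{equation}
    |a-b|^p \le 2^{p-1}p\int_a^b |t|^{p-1}\,\mathrm{d}t \enspace ,
\end{equation}
which I would verify by treating separately the cases $0\le a<b$ and $a<b\le 0$ (using $t^{p-1}\ge (t-a)^{p-1}$, so that the right-hand side dominates $|b|^p-|a|^p\ge(b-a)^p$) and $a<0<b$ (using convexity of $x\mapsto x^p$ to produce the factor $2^{p-1}$). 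Applying this with $a=F^{-1}(u)$, $b=G^{-1}(u)$, integrating over $u\in[0,1]$ and exchanging integrals by Fubini, the inner integral collapses because $\int_0^1 \mathbb{1}[F^{-1}(u)\wedge G^{-1}(u)\le t\le F^{-1}(u)\vee G^{-1}(u)]\,\mathrm{d}u=|F(t)-G(t)|$, yielding
\begin{equation} \label{eq:cdf_red}
    W_p^p(\mu,\nu)\le 2^{p-1}p\int_{\mathbb{R}}|t|^{p-1}\,|F(t)-G(t)|\,\mathrm{d}t \enspace .
\end{equation}

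Next I would set $\nu=\eta$, $\mu=\hat{\eta}_n$ and pass to expectations in \eqref{eq:cdf_red}. Since $nF_n(t)\sim\mathrm{Bin}(n,F(t))$, Jensen's inequality gives $\mathbb{E}|F_n(t)-F(t)|\le\sqrt{\mathrm{Var}\,F_n(t)}=\sqrt{F(t)(1-F(t))/n}\le\sqrt{\min(F(t),1-F(t))/n}$, while the crude bound $\mathbb{E}|F_n(t)-F(t)|\le 2\min(F(t),1-F(t))$ will control the extreme tails. By homogeneity I may assume $M_q(\eta)=1$: the map $x\mapsto\lambda x$ scales $W_p^p$ by $\lambda^p$ and $M_q$ by $\lambda^q$, so the claimed inequality is invariant once the factor $M_q(\eta)^{p/q}$ is pulled out. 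Markov's inequality then gives $1-F(t)\le t^{-q}$ for $t>0$ and $F(t)\le|t|^{-q}$ for $t<0$.

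Finally I would split $\int_{\mathbb{R}}|t|^{p-1}\,\mathbb{E}|F_n(t)-F(t)|\,\mathrm{d}t$ at the threshold $|t|\sim n^{1/q}$, where the two bounds on $\mathbb{E}|F_n-F|$ cross: below it I use $\sqrt{\min(F,1-F)/n}\le n^{-1/2}|t|^{-q/2}$ and above it the linear bound $2|t|^{-q}$. The resulting elementary integrals $n^{-1/2}\int_1^{n^{1/q}}|t|^{p-1-q/2}\,\mathrm{d}t$ and $\int_{n^{1/q}}^{\infty}|t|^{p-1-q}\,\mathrm{d}t$ are governed by the signs of $p-q/2$ and $p-q$, and produce exactly the three regimes: convergence of the first integral when $q>2p$ gives the parametric rate $n^{-1/2}$; the borderline $q=2p$ produces the logarithmic factor $n^{-1/2}\log n$; and for $q\in(p,2p)$ the two pieces balance at $n^{-(q-p)/q}$, the bulk $|t|\le 1$ contributing only lower-order terms. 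Collecting the constants, which depend only on $p$ and $q$, and restoring $M_q(\eta)^{p/q}$ completes the proof. The only genuinely nontrivial step is the CDF reduction \eqref{eq:cdf_red}; once it is in place, the remainder is the routine binomial variance estimate together with the Markov truncation, and I expect the bookkeeping of exponents in the last step to be the main place where care is needed.
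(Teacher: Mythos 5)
Your proposal is correct, but it takes a genuinely different route from the paper: the paper does not prove this lemma at all. It is recalled as an external result inside the proof of \cref{prop:sample_complexity}, with the proof delegated entirely to \citet[Theorem 2]{fournier2015rate} as restated in \citet[Lemma 1]{rakotomamonjy2021statistical}. You instead give a self-contained one-dimensional argument in the classical del Barrio--Gin\'e--Matr\'an / Bobkov--Ledoux style, and the pieces all check out. The pointwise bound $|a-b|^p\le 2^{p-1}p\int_a^b|t|^{p-1}\,\mathrm{d}t$ holds in each of your three cases (for $0\le a<b$ it is the superadditivity $b^p\ge(b-a)^p+a^p$ of $x\mapsto x^p$ on $\mathbb{R}_+$; for $a<0<b$ it is convexity, with near-equality), and Fubini with the identity $\mathrm{Leb}\{u:\,F^{-1}(u)\wedge G^{-1}(u)\le t\le F^{-1}(u)\vee G^{-1}(u)\}=|F(t)-G(t)|$ gives your weighted-CDF reduction. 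The normalization $M_q(\eta)=1$ via the scaling $x\mapsto\lambda x$ correctly extracts the factor $M_q(\eta)^{p/q}$, and the exponent bookkeeping at the threshold $|t|\sim n^{1/q}$ (where $n^{-1/2}|t|^{-q/2}$ and $|t|^{-q}$ cross) produces exactly the three regimes: $p-1-q/2<-1$ gives $n^{-1/2}$ when $q>2p$; the borderline $p-1-q/2=-1$ gives $n^{-1/2}\log n$ when $q=2p$; and for $q\in(p,2p)$ both the bulk piece $n^{-1/2}n^{(p-q/2)/q}$ and the tail piece $n^{(p-q)/q}$ equal $n^{-(q-p)/q}$, while the region $|t|\le 1$ contributes $O(n^{-1/2})$, which is dominated there since $(q-p)/q<1/2$; conversely the tail contribution $n^{-(q-p)/q}$ is dominated by $n^{-1/2}$ precisely when $q\ge 2p$, so no regime is contaminated. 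What each approach buys: the paper's citation is shorter and inherits the general $\mathbb{R}^d$ statement of Fournier--Guillin, whereas your proof is elementary, makes the mechanism and the constants transparent, and covers everything the paper actually needs, since in the proof of \cref{prop:sample_complexity} the lemma is only ever applied to the one-dimensional pushforwards $t^A_\#\mu$ and $t^A_\#\nu$.
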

    
    First, let us observe that by the triangular and reverse triangular inequalities, as well as Jensen for $x\mapsto x^{1/p}$ (which is concave since $p\ge 1$),
    \begin{equation}
        \begin{aligned}
            \mathbb{E}\left[|\mathrm{SPDSW}_p(\hat{\mu}_n,\hat{\nu}_n) - \mathrm{SPDSW}_p(\mu,\nu)|\right] &= \mathbb{E}[|\mathrm{SPDSW}_p(\hat{\mu}_n,\hat{\nu}_n)-\mathrm{SPDSW}_p(\hat{\mu}_n,\nu) \\ &+ \mathrm{SPDSW}_p(\hat{\mu}_n,\nu) - \mathrm{SPDSW}_p(\mu,\nu)|] \\
            &\le \mathbb{E}[|\mathrm{SPDSW}_p(\hat{\mu}_n,\hat{\nu}_n) - \mathrm{SPDSW}_p(\hat{\mu}_n,\nu)|] \\ &+ \mathbb{E}[|\mathrm{SPDSW}_p(\hat{\mu}_n,\nu)-\mathrm{SPDSW}_p(\mu,\nu)|] \\
            &\le \mathbb{E}[\mathrm{SPDSW}_p(\hat{\nu}_n,\nu)] + \mathbb{E}[\mathrm{SPDSW}_p(\hat{\mu}_n,\mu)] \\
            &\le \mathbb{E}[\mathrm{SPDSW}_p^p(\hat{\nu}_n,\nu)]^{1/p} + \mathbb{E}[\mathrm{SPDSW}_p^p(\hat{\mu}_n,\mu)]^{1/p} \enspace.
        \end{aligned}
    \end{equation}
    Moreover, by Fubini-Tonelli,
    \begin{equation}
        \begin{aligned}
            \mathbb{E}[\mathrm{SPDSW}_p^p(\hat{\mu}_n,\mu)] &= \mathbb{E}\left[\int_{S_d(\mathbb{R})} W_p^p(t^A_\#\hat{\mu}_n,t^A_\#\mu)\ \mathrm{d}\lambda_S(A)\right] \\
            &= \int_{S_d(\mathbb{R})} \mathbb{E}[W_p^p(t^A_\#\hat{\mu}_n,t^A_\#\mu)]\ \mathrm{d}\lambda_S(A)\enspace .
        \end{aligned}
    \end{equation}
    By applying Lemma \ref{lemma:fournier}, we get for $q>p$ that there exists a constant $C_{p,q}$ such that,
    \begin{equation}
        \mathbb{E}[W_p^p(t^A_\#\hat{\mu}_n,t^A_\#\mu)] \le C_{p,q} M_q(t^A_\#\mu)^{p/q}\left(n^{-1/2}\mathbb{1}_{\{q>2\}} + n^{-1/2}\log(n) \mathbb{1}_{\{q=2p\}} + n^{-(q-p)/q} \mathbb{1}_{\{q\in(p,2p)\}}\right) \enspace .
    \end{equation}
    Furthermore, using Cauchy-Schwartz and that $\|A\|_F=1$,
    \begin{equation}
        \begin{aligned}
            M_q(t^A_\#\mu) &= \int_{\mathbb{R}} |x|^q\ \mathrm{d}(t^A_\#\mu)(x) \\
            &= \int_{S_d^{++}(\mathbb{R})} |\langle A,\log X\rangle|^q\ \mathrm{d}\mu(X) \\
            &\le \int_{S_d^{++}(\mathbb{R})} \|\log X\|_F^q\ \mathrm{d}\mu(X) \\
            &= M_q(\log_\#\mu) \enspace .
        \end{aligned}
    \end{equation}
    Therefore, we have that
    \begin{equation}
        \begin{aligned}
            \mathbb{E}[\mathrm{SPDSW}_p^p(\hat{\mu}_n,\mu)] \le C_{p,q} M_q(\log_\#\mu)^{p/q} \left(n^{-1/2}\mathbb{1}_{\{q>2p\}} + n^{-1/2}\log(n) \mathbb{1}_{\{q=2p\}} + n^{-(q-p)/q} \mathbb{1}_{\{q\in(p,2p)\}}\right) \enspace,
        \end{aligned}
    \end{equation}
    and similarly
    \begin{equation}
        \begin{aligned}
            \mathbb{E}[\mathrm{SPDSW}_p^p(\hat{\nu}_n,\nu)] \le C_{p,q} M_q(\log_\#\nu)^{p/q} \left(n^{-1/2}\mathbb{1}_{\{q>2p\}} + n^{-1/2}\log(n) \mathbb{1}_{\{q=2p\}} + n^{-(q-p)/q} \mathbb{1}_{\{q\in(p,2p)\}}\right) \enspace.
        \end{aligned}
    \end{equation}
    Hence, we conclude that the sample complexity is 
    \begin{equation}
        \mathbb{E}\left[|\mathrm{SPDSW}_p(\hat{\mu}_n,\hat{\nu}_n) - \mathrm{SPDSW}_p(\mu,\nu)|\right] \le C_{p,q}^{1/p} \big(M_q(\log_\#\mu)^{1/q} + M_q(\log_\#\nu)^{1/q}\big) 
        \begin{cases}
            n^{-1/(2p)} \ \text{ if } q>2p \\
            n^{-1/(2p)}\log(n)^{1/p} \ \text{ if } q=2p \\
            n^{-(q-p)/(pq)} \ \text{ if } q\in (p,2p) \enspace .
        \end{cases}
    \end{equation}
\end{proof}

\projection*

\begin{proof}
    Let $(A_i)_{i=1}^L$ be iid samples of $\lambda_S$. Then, by first using Jensen inequality and then remembering that $\mathbb{E}_A[W_p^p(t^A_\#\mu,t^A_\#\nu)]=\mathrm{SPDSW}_p^p(\mu,\nu)$, we have
    \begin{equation}
        \begin{aligned}
            \mathbb{E}_A\left[|\widehat{\mathrm{SPDSW}}_{p,L}^p(\mu,\nu)-\mathrm{SPDSW}_p^p(\mu,\nu)|\right]^2 &\le \mathbb{E}_A\left[\left|\widehat{\mathrm{SPDSW}}_{p,L}^p(\mu,\nu)-\mathrm{SPDSW}_p^p(\mu,\nu)\right|^2\right]\\
            &= \mathbb{E}_A\left[\left|\frac{1}{L} \sum_{i=1}^L \big(W_p^p(t^{A_i}_\#\mu,t^{A_i}_\#\nu) - \mathrm{SPDSW}_p^p(\mu,\nu)\big)\right|^2\right] \\
            &= \frac{1}{L^2} \mathrm{Var}_A\left[\sum_{i=1}^L W_p^p(t^{A_i}_\#\mu, t^{A_i}_\#\nu)\right] \\
            &= \frac{1}{L} \mathrm{Var}_A\left[W_p^p(t^A_\#\mu, t^A_\#\nu)\right] \\
            &= \frac{1}{L} \int_{S_d(\mathbb{R})} \left(W_p^p(t^A_\#\mu, t^A_\#\nu)-\mathrm{SPDSW}_p^p(\mu,\nu)\right)^2\ \mathrm{d}\lambda_S(A) \enspace .
        \end{aligned}
    \end{equation}
\end{proof}

\hilbertianDistance*

\begin{proof}
    Let $\mu, \nu$ be probability distributions on $S_d^{++}(\mathbb{R})$ with moments of order $p=2$.
    Then
    \begin{align*}
        \mathrm{SPDSW}_2^2(\mu, \nu) &= \int_{S_d} \|F^{-1}_{t^A_{\#}\mu} - F^{-1}_{t^A_{\#}\nu}\|^2\ \mathrm{d}\lambda_S(A)\\
        &= \int_{S_d} \int_0^1 \big(F^{-1}_{t^A_{\#}\mu}(q) - F^{-1}_{t^A_{\#}\nu}(q) \big)^2\ \mathrm{d}q \mathrm{d}\lambda_S(A)\\
        &= \| \Phi(\mu) - \Phi(\nu) \|_{\mathcal{H}}^2 \enspace .
    \end{align*}
    Thus, $\mathrm{SPDSW}_2$ is Hilbertian.
\end{proof}

\section{SPDSW with Affine-Invariant Metric} \label{sec:aispdsw}

In the main part of the paper, we focused on $S_d^{++}(\mathbb{R})$ endowed with the Log-Euclidean metric. With this metric, $S_d^{++}(\mathbb{R})$ is a Riemannian manifold of constant null curvature as classical Euclidean spaces. Another metric of interest, very related to the Log-Euclidean one, is the Affine-Invariant metric, which yields a Riemannian manifold of non-constant and non-positive curvature \citep{bhatia2009positive, bridson2013metric}. The Log-Euclidean distance is actually a lower bound of the Affine-Invariant distance, and they coincide when the matrices commute. Notably, they share the same geodesics passing through the identity \citep[Section 3.6.1]{pennec2020manifold}. The Log-Euclidean metric can actually be seen as a good first order approximation of the Affine-Invariant metric \citep{arsigny2005fast,pennec2020manifold} which motivated the proposal of this metric. But it can lose some information when matrices are not commuting. Hence, we can wonder whether or not constructing a sliced discrepancy with projections obtained in $S_d^{++}(\mathbb{R})$ endowed with the Affine-Invariant metric could improve the results.

\subsection{Busemann Function on Affine-Invariant Space} \label{sec:busemann_ai}

As recalled in \cref{sec:bg_spd}, for the Affine-Invariant metric, the inner product in the tangent space is defined as
\begin{equation}
    \forall M\in S_d^{++}(\mathbb{R}),\ A,B\in \mathcal{T}_M,\ \langle A,B\rangle_M = \mathrm{Tr}(M^{-1}AM^{-1}B)\enspace ,
\end{equation}
and the corresponding geodesic distance is 
\begin{equation}
    \forall X, Y \in S_d^{++}(\mathbb{R}),\ d_{AI}(X,Y) = \sqrt{\mathrm{Tr}(\log(X^{-1}Y)^2)} \enspace .
\end{equation}
This distance notably satisfies the affine-invariant property, that is, for any $g\in GL_d(\mathbb{R})$, where $GL_d(\mathbb{R})$ denotes the set of invertibles matrices in $\mathbb{R}^{d\times d}$,
\begin{equation}
    \forall X,Y\in S_d^{++}(\mathbb{R}),\ d_{AI}(g\cdot X, g\cdot Y) = d_{AI}(X,Y) \enspace,
\end{equation}
where $g\cdot X = gXg^T$. Geodesics passing through the identity coincide with those of the Log-Euclidean metric, and are therefore of the form $\mathcal{G}_A= \{\exp(tA),\ t\in\mathbb{R}\}$ where $A\in S_d(\mathbb{R})$. Hence, we now need to find a projection of $M\in S_d^{++}(\mathbb{R})$ onto $\mathcal{G}_A$.

Unfortunately, to the best of our knowledge, there is no closed-form for the geodesic projection on geodesics. We will discuss here the horospherical projection which can be obtained with the Busemann function. For $A\in S_d(\mathbb{R})$ such that $\|A\|_F=1$, denoting $\gamma_A:t\mapsto \exp(tA)$ the geodesic passing through $I_d$ with direction $A$, the Busemann function $B^A$ associated to $\gamma_A$ writes as 
\begin{equation}
    \forall M\in S_d^{++}(\mathbb{R}),\ B^A(M) = \lim_{t\to\infty}\ \big(d_{AI}(\exp(tA),M)-t\big) \enspace.
\end{equation}
Contrary to the Log-Euclidean case, we cannot directly compute this quantity by expanding the distance since $\exp(-tA)$ and $M$ are not necessarily commuting. The main idea to solve this issue is to first find a group $G\subset GL_d(\mathbb{R})$ which will leave the Busemann function invariant. Then, we can find an element of this group which will project $M$ on the space of matrices commuting with $\exp(A)$. This part of the space is of null curvature, \emph{i.e.} it is isometric to an Euclidean space. In this case, we can compute the Busemann function as in \cref{prop:busemann_coords} as the matrices are commuting. Hence, the Busemann function is of the form
\begin{equation}
    B^A(M) = -\langle A, \log (\pi_A (M))\rangle_F \enspace , 
\end{equation}
where $\pi_A$ is a projection on the space of commuting matrices.

When $A$ is diagonal with sorted values such that $A_{11} > \dots > A_{dd}$, then the group leaving the Busemann function invariant is the set of upper triangular matrices with ones on the diagonal \citep[II. Proposition 10.66]{bridson2013metric}, \emph{i.e.} for such matrix $g$, $B^A(M) = B^A(gMg^T)$. If the points are sorted in increasing order, then the group is the set of lower triangular matrices. Let's note $G_U$ the set of upper triangular matrices with ones on the diagonal. For a general $A\in S_d(\mathbb{R})$, we can first find an appropriate diagonalization $A=P\Tilde{A}P^T$, where $\Tilde{A}$ is diagonal sorted, and apply the change of basis $\Tilde{M}=P^TMP$ \citep{fletcher2009computing}. Note that since we sample the eigenvalues from the uniform distribution on $S^{d-1}$, the values are all different almost surely. Therefore, we suppose that all the eigenvalues of $A$ have an order of multiplicity of one. By the affine-invariance property, the distances do not change, \emph{i.e.} $d_{AI}(\exp(tA),M) = d_{AI}(\exp(t\Tilde{A}),\Tilde{M})$ and hence, using the definition of the Busemann function, we have that $B^A(M) = B^{\Tilde{A}}(\Tilde{M})$. Then, we need to project $\Tilde{M}$ on the space of matrices commuting with $e^{\Tilde{A}}$ which we denote $F(A)$. By \citet[II. Proposition 10.67]{bridson2013metric}, this space corresponds to the diagonal matrices. Moreover, by \citet[II. Proposition 10.69]{bridson2013metric}, there is a unique pair $(g,D)\in G_U\times F(A)$ such that $\Tilde{M} = gDg^T$, and therefore, we can note $\pi_A(\Tilde{M})=D$. This decomposition actually corresponds to a UDU decomposition. If the eigenvalues of $A$ are sorted in increasing order, this would correspond to a LDL decomposition.

For more details about the Busemann function on the Affine-invariant space, we refer to \citet[Section II.10]{bridson2013metric} and \citet{fletcher2009computing, fletcher2011horoball}.

\subsection{Horospherical SPDSW}

Now that we know how to compute the coordinates on geodesics passing through the identity, we can derive an associated sliced discrepancy, which we call horospherical SPDSW ($\mathrm{HSPDSW}$) since the projection is made along level sets of the Busemann function, which are called horospheres \citep{fletcher2009computing}.

\begin{definition}
    \label{def:hswspd}
    Let $\lambda_O$ be the uniform distribution on orthogonal matrices $\mathcal{O}_d = \{P \in \mathbb{R}^{d \times d}, P^TP = PP^T = I\}$ (Haar distribution), $\lambda$ be the uniform distribution on $S^{d-1} = \{\theta \in \mathbb{R}^d, \|\theta\|_2=1\}$, and $\lambda_S$ be a probability distribution on $S_d(\mathbb{R})$ such that for all $V_S \in \sigma(S_d(\mathbb{R}))$, $\lambda_S(V_S) = (\lambda_O \otimes \lambda)(A_S)$ where $A_S = \{(P, \theta) \in \mathcal{O}_d \times S^{d-1},\ P \mathrm{diag}(\theta) P^T \in V_S\}$.
    Let $\mu, \nu \in \mathcal{P}_p(S_d^{++}(\mathbb{R}))$, the $\mathrm{HSPDSW}$ discrepancy is defined as
    \begin{equation}
        \mathrm{HSPDSW}_p^p(\mu,\nu) = \int_{S_d(\mathbb{R})} W_p^p(B^A_\#\mu, B^A_\#\nu)\ \mathrm{d}\lambda_S(A) \enspace ,
    \end{equation}
    where $B^A(M) = -\mathrm{Tr}\big(A \log (\pi_A (M))\big)$, with $\pi^A$ the projection derived in \cref{sec:busemann_ai}.
\end{definition}

On the side of theoretical properties, this discrepancy is still a pseudo-distance. However, since the projection $\log \circ \pi_A$ is not a diffeomorphism, whether the indiscernible property holds or not remains an open question.

On the computational side, it requires an additional projection step with a UDU decomposition for each sample and projection. Hence the overall complexity becomes $O(Ln(\log n + d^3))$ where the $O(Lnd^3)$ comes from the UDU decomposition. In practice, it takes more time than $\mathrm{SPDSW}$ for results which are pretty similar. In the same setting detailed in \cref{sec:exp_details}, we plot on \cref{fig:runtime_hpdsw} the runtime \emph{w.r.t} the number of samples and observe that it takes even more time than the Wasserstein distance. We detail the procedure to compute $\mathrm{HSPDSW}$ in \cref{alg:hspdsw}.

\begin{algorithm}[tb]
   \caption{Computation of $\mathrm{HSPDSW}$}
   \label{alg:hspdsw}
    \begin{algorithmic}
       \STATE {\bfseries Input:} $(X_i)_{i=1}^n\sim \mu$, $(Y_j)_{j=1}^m\sim \nu$, $L$ the number of projections, $p$ the order
       \FOR{$\ell=1$ {\bfseries to} $L$}
       \STATE Draw $\theta\sim\mathrm{Unif}(S^{d-1})=\lambda$
       \STATE Draw $P\sim \mathrm{Unif}(O_d(\mathbb{R}))=\lambda_O$
       \STATE Get $Q$ the permutation matrix such that $\Tilde{\theta} = Q\theta$ is sorted in decreasing order
       \STATE Set $A=\mathrm{diag}(\Tilde{\theta})$, $\Tilde{P}=PQ^T$
    %   \STATE $A=P\mathrm{diag}(\theta)P^T$
       \STATE $\forall i,j$, $\Tilde{X}_i^{\ell} = \Tilde{P}^T X_i \Tilde{P}$, $\Tilde{Y}_j^{\ell} = \Tilde{P}^T Y_j \Tilde{P}$
       \STATE $\forall i, j$, $D_i^\ell=UDU(\Tilde{X}_i^\ell)$, $\Delta_j^\ell = UDU(\Tilde{Y}_j^\ell)$
       \STATE  $\forall i,j,\ \hat{X}_i^{\ell}=t^A(D^\ell_i)$, $\hat{Y}_j^\ell=t^A(\Delta^\ell_j)$
       \STATE Compute $W_p^p(\frac{1}{n}\sum_{i=1}^n \delta_{\hat{X}_i^\ell}, \frac{1}{m}\sum_{j=1}^m \delta_{\hat{Y}_j^\ell})$
       \ENDFOR
       \STATE Return $\frac{1}{L}\sum_{\ell=1}^L W_p^p(\frac{1}{n}\sum_{i=1}^n \delta_{\hat{X}_i^\ell}, \frac{1}{m}\sum_{j=1}^m \delta_{\hat{Y}_j^\ell})$
    \end{algorithmic}
\end{algorithm}

\begin{figure}[t]
    \centering
    \includegraphics[width=0.5\columnwidth]{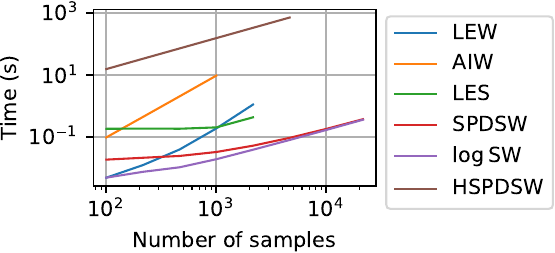}
    \caption{Runtime of $\mathrm{SPDSW}$, $\mathrm{HPDSW}$ and $\mathrm{\log SW}$ (200 proj.) compared to alternatives based on Wasserstein between Wishart samples.
    Sliced discrepancies can scale to larger distributions in $S_d^{++}(\mathbb{R})$.
    }
    \label{fig:runtime_hpdsw}
\end{figure}

\begin{figure}[h!]
    \centering
    \includegraphics[width=\linewidth]{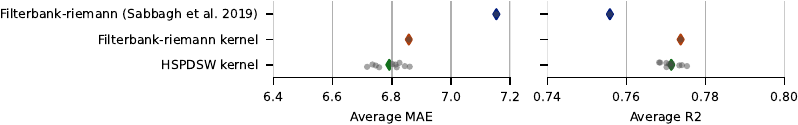}
    \caption{Average MAE and $R^2$ score for 10 seeds on the Cam-CAN dataset with time-frames of 2s and 1000 projections.
    $\mathrm{HSPDSW}$ does not improve the performance of standard methods and the computation time is much higher than for $\mathrm{SPDSW}$.
    }
    \label{fig:brain_age_hspdsw}
\end{figure}

\subsection{Experimental results on brain age prediction.}

As for $\mathrm{SPDSW}$, $\mathrm{HSPDSW}$ allows to define a kernel for distributions $(\mu_i)_{i=1}^{n} \in (\mathcal{P}_p(S_d^{++}(\mathbb{R})))^n$
\begin{equation}
    K(\mu_i, \mu_j) = e^{-\frac{\mathrm{HSPDSW(\mu_i, \mu_j)}}{2\sigma^2}} \enspace .
\end{equation}
Moreover, it is also a Hilbertian pseudo-distance, thus the kernel is positive definite and well-defined for Kernel methods.
Therefore, it can be easily adapted to brain age prediction, as done with $\mathrm{SPDSW}$ in \cref{sec:brain_age}.
The Affine-Invariant metric is well-suited for problems involving source localization, as noted in \citet{sabbagh2019manifold}.
Even though we only have access to the Busemann coordinate, which might involve a loss of information due to the need of an additional projection, it is still of interest to compare to the Log-Euclidean metric.
We report numerical results in the same setting as \cref{fig:brain_age_average} in \cref{fig:brain_age_hspdsw}.
This time, the method does not beat Log-Euclidean Kernel Ridge regression based on the covariance matrices computed over all time samples.
This suggests that the projection $\pi_A$ derived in \cref{sec:busemann_ai} does not bring more information to the model in this scenario.
Note that the computational cost suffers from the high complexity of the $UDU$ decomposition needed for the calculation of each projection.

% \cb{We note that this projection induces several additional steps compared to with the Log-Euclidean metric. Moreover, as this transformation is not a diffeomorphism, the proof of the indiscernible property does not hold directly with this projection.}

%%%%%%%%%%%%%%%%%%%%%%%%%%%%%%%%%%%%%%%%%%%%%%%%%%%%%%%%%%%%%%%%%%%%%%%%%%%%%%%
%%%%%%%%%%%%%%%%%%%%%%%%%%%%%%%%%%%%%%%%%%%%%%%%%%%%%%%%%%%%%%%%%%%%%%%%%%%%%%%

\end{document}